\providecommand{\tabularnewline}{\\}
\providecommand{\algorithmname}{Algorithm}
\newcommand{\R}{\mathbb{R}}
\let\bs=\boldsymbol
\def \Diag {\mathrm{Diag}}
\def \diag {\mathrm{diag}}
\def \objective {\mathit{obj}}
\def \area {\mathit{area}}
\def \domain {\set{D}}
\def \opt {\set{opt}}
\def \saliency {\textup{\saliency}}
\def \path {\mathit{path}}
\def \label {\mathit{label}}
\def \minimize {\textup{minimize} }
\def \maximize {\textup{maximize}}
\def \subjectto {\textup{subject to}}
\begin{document}
\theoremstyle{plain} \newtheorem{lem}{\textbf{Lemma}} \newtheorem{prop}{\textbf{Proposition}}\newtheorem{theorem}{\textbf{Theorem}}
\newtheorem{corollary}{\textbf{Corollary}} \newtheorem{assumption}{\textbf{Assumption}}
\newtheorem{example}{\textbf{Example}} \newtheorem{definition}{\textbf{Definition}}
\newtheorem{fact}{\textbf{Fact}} \theoremstyle{definition}

\theoremstyle{remark}\newtheorem{remark}{\textbf{Remark}}

\let\vec=\mathbf \let\mat=\mathbf \let\set=\mathcal \global\long\def\para#1{\noindent{\bf #1}}

\global\long\def\Diag{\mathrm{Diag}}
 \global\long\def\diag{\mathrm{diag}}
 \global\long\def\objective{\mathit{obj}}
 \global\long\def\area{\mathit{area}}
 \global\long\def\domain{\set{D}}
 \global\long\def\opt{\set{opt}}
 \global\long\def\minimize{\textup{minimize}}
 \global\long\def\subjectto{\textup{subject to}}

\global\long\def\minimize{\textup{minimize} }
 \global\long\def\maximize{\textup{maximize}}
 \global\long\def\subjectto{\textup{subject to}}
 \global\long\def\R{\mathbb{R}}

\title{Near-Optimal Joint Object Matching via Convex Relaxation}

\author{Yuxin Chen%
\thanks{Y. Chen is with the Department of Electrical Engineering, Stanford
University (email: yxchen@stanford.edu). %
}, $\text{ }$Leonidas J. Guibas%
\thanks{L. J. Guibas is with the Department of Computer Science, Stanford
University (email: guibas@stanford.edu). %
}, $\text{ }$and Qi-Xing Huang%
\thanks{Q-X. Huang is with the Department of Computer Science, Stanford University
(email: huangqx@stanford.edu). %
}}
\maketitle
\begin{abstract}
Joint matching over a collection of objects aims at aggregating information
from a large collection of similar instances (e.g. images, graphs,
shapes) to improve maps between pairs of them. Given multiple objects
and matches computed between a few object pairs in isolation, the
goal is to recover an entire collection of maps that are (1) globally
consistent, and (2) close to the provided maps --- and under certain
conditions provably the ground-truth maps. Despite recent advances
on this problem, the best-known recovery guarantees are limited to
a small constant barrier --- none of the existing methods find theoretical
support when more than 50\% of input correspondences are corrupted.
Moreover, prior approaches focus mostly on fully similar objects,
while it is practically more demanding to match instances that are
only partially similar to each other (e.g., different views of a single
physical object).

In this paper, we propose an algorithm to jointly match multiple objects
that exhibit only partial similarities, given a few (possibly highly
incomplete) pairwise matches that are densely corrupted. By encoding
a consistent partial map collection into a 0-1 semidefinite matrix,
we propose to recover the ground-truth maps via a parameter-free convex
program called MatchLift, following a spectral method that pre-estimates
the total number of distinct elements to be matched.  Numerically,
this program can be efficiently solved via alternating direction methods
of multipliers (ADMM) along with a greedy rounding strategy. Theoretically,
MatchLift exhibits near-optimal error-correction ability, i.e. in
the asymptotic regime it is guaranteed to work even when a dominant
fraction $1-\Theta\left(\frac{\log^{2}n}{\sqrt{n}}\right)$ of the
input maps behave like random outliers. Furthermore, MatchLift succeeds
with minimal input complexity, namely, perfect matching can be achieved
as soon as the provided maps form a connected map graph. We evaluate
the proposed algorithm on various benchmark data sets including synthetic
examples and real-world examples, all of which confirm the practical
applicability and usefulness of MatchLift.
\end{abstract}
\textbf{Index Terms:} Joint graph matching, shape mapping, cycle consistency,
dense error correction, partial similarity, convex relaxation, spectral
methods, robust PCA, matrix completion, graph clustering, ADMM, MatchLift

\section{Introduction}

Finding consistent relations across multiple objects is a fundamental
scientific problem spanning many fields. A partial list includes jigsaw
puzzle solving \cite{Cho:2010:JPS,Goldberg:2004:GAA}, structure from
motion \cite{Zach:2010:DVR,crandall2011discrete}, re-assembly of
fragmented objects and documents \cite{Huang:2006:RFO,zhu2008globally},
and DNA/RNA shotgun assembly sequencing \cite{Marande:2007:DNA}.
Compared with the rich literature in pairwise matching (e.g. of graphs,
images or shapes), joint matching of multiple objects has not been
well explored. A naive approach for joint object matching is to pick
a base object and perform \emph{pairwise matching} with each of the
remaining objects. However, as pairwise matching algorithms typically
generate noisy results, the performance of such approaches is often
far from satisfactory in practice. This gives rise to the question
as to how to aggregate and exploit information from all pairwise maps
that one computes, in order to improve joint object matching in a
consistent and efficient manner.

In this paper, we represent each object as a discrete set of points
or elements, and investigate the problem of joint matching over $n$
different sets, for which the input / observation is a collection
of pairwise maps computed in isolation. A natural and popular criterion
to preserve the global relational compatibility is called cycle-consistency,
i.e., that composition of maps between two objects should be independent
of the connecting path chosen. Such criterion has recently been invoked
in many algorithms \cite{Roberts:2011:SFM,Zach:2010:DVR,Nguyen:2011:CSM,huang2012optimization,Kim:2012:FC}
to detect outliers among the pairwise input maps. These works have
shown experimentally that one can use inconsistent cycles to prune
outliers, provided that the corruption rate is sufficiently small.

Despite the empirical advances of these works, little is known on
the theoretical side, namely, under what conditions can the underlying
ground-truth maps be reliably recovered. Recent work by \cite{huang2013consistent}
provided the first theoretical guarantee for robust and consistent
joint matching. However, there are several fundamental issues left
unaddressed that must be faced in order to accommodate practical challenges.
\begin{enumerate}
\item \textbf{Dense Input Errors: } The state-of-the-art results (e.g. \cite{huang2013consistent})
did not provide theoretical support when more than 50\% of the input
matches are corrupted. This gives rise to the question regarding their
applicability in the presence of highly noisy sources, in which case
the majority of the input maps can be corrupted. Observe that as the
number $n$ of objects to be matched increases, the amount of pairwise
maps one can obtain significantly exceeds $n$. As a result, dense
error correction is information theoretically possible as long as
the global consistency across pairwise maps can be appropriately exploited.
While one would expect an ideal algorithm to work even when most input
maps are random outliers, the challenge remains as to whether there
exist \emph{computationally feasible} methods that can provably detect
and separate dense outliers.
\item \textbf{Partial Similarity: } To the best of our knowledge, all prior
approaches dealt only with a restricted scenario where the ground-truth
maps are given by full isomorphisms (i.e. \emph{one-to-one} correspondences
between any two sets). In reality, a collection of objects usually
exhibit only partial similarity, as in the case of images of the same
scene but from different camera positions. These practical scenarios
require consistent matching of multiple objects that are only partially
similar to each other.
\item \textbf{Incomplete Input Maps: } Computing pairwise maps across all
object pairs are often expensive, sometimes inadmissible, and in fact
unnecessary. Depending on the characteristics of input sources, one
might be able to infer unobserved maps from a small sample of noisy
pairwise matches. While \cite{huang2013consistent} considered incomplete
inputs, the tradeoff between the undersampling factor and the error-correction
ability remains unknown. 
\end{enumerate}
All in all, practical applications require matching partially similar
objects from a small fraction of densely corrupted pairwise maps ---
a goal this paper aims to achieve.

\subsection{Contributions}

This paper is concerned with joint object matching under dense input
errors. Our main contributions in this regard are three-fold. 
\begin{enumerate}
\item \textbf{Algorithms:} Inspired by the recent evidence on the power
of convex relaxation, we propose to solve the joint matching problem
via a semidefinite program called MatchLift. The algorithm relaxes
the binary-value constraints, and attempts to maximize the compatibility
between the input and the recovered maps. The program is established
upon a semidefinite conic constraint that relies on the total number
$m$ of distinct elements to be matched. To this end, we propose to
pre-estimate $m$ via a spectral method. Our methodology is essentially
\emph{ parameter free}, and can be solved by scalable optimization
algorithms. 
\item \textbf{Theory:} We derive performance guarantees for exact matching.
Somewhat surprisingly, MatchLift admits perfect map recovery even
in the presence of dense input corruptions. Our findings reveal the
near-optimal error-correction ability of MatchLift, i.e. as $n$ grows,
the algorithm is guaranteed to work even when a dominant fraction
-- more precisely, a fraction $1-\Omega\left(\frac{\log^{2}n}{\sqrt{n}}\right)$
-- of the inputs behave as random outliers. Besides, while the presence
of partial similarity unavoidably incurs more severe types of input
errors, MatchLift exhibits a strong recovery ability nearly order-wise
equivalent to that in the full-similarity scenario, as long as the
fraction of each object being disclosed is bounded away from zero.
Finally, in many situations, MatchLift succeeds even with minimal
input complexity, in the sense that it can reliably fill in all unobserved
maps based on very few noisy partial inputs, as soon as the provided
maps form a connected graph. This is information theoretically optimal.
\item \textbf{Practice:} We have evaluated the performance of MatchLift
on several benchmark datasets. These datasets include several synthetic
examples as well as real examples from several popular benchmarks.
Experimental results on synthetic examples corroborate our theoretical
findings. On real datasets, the quality of the maps generated by MatchLift
outperforms the state-of-the-art object matching and graph clustering
algorithms.
\end{enumerate}

\subsection{Prior Art}

There has been numerous work studying the problem of object matching,
either in terms of shape mapping, graph matching, or image mapping,
which is impossible to enumerate. We list below a small sample of
development on joint object matching, as well as its relation and
distinction to the well-renowned graph clustering problem.
\begin{itemize}
\item \textbf{Object Matching.} Early work on object matching focused primarily
on matching pairs of objects in isolation (e.g. \cite{schellewald2005probabilistic,cour2007balanced,caetano2009learning}).
Due to the limited and biased information present in an isolated object
pair, pairwise matching techniques can easily, sometimes unavoidably,
generate false correspondences. Last few years have witnessed a flurry
of activity in joint object matching, e.g. ~\cite{Nguyen:2011:CSM,Kim:2012:FC,huang2012optimization,huang2013consistent},
which exploited the global cycle-consistency criterion to prune noisy
maps. The fundamental understanding has recently been advanced by
\cite{huang2013consistent}. Nevertheless, none of the prior work
have demonstrated provable recovery ability when the majority of input
maps/correspondences are outliers, nor were they able to accommodate
practical scenarios where different objects only exhibit partial similarity.
Recent work \cite{PachauriKS13} employed spectral methods for denoising
in the full-similarity case. However, the errors considered therein
are modeled as Gaussian-Wigner additive noise, which is not applicable
in our setting. Another line of work \cite{WangSingerRotation,chaudhury2013global}
proposed to recover global rigid transform between points via convex
relaxation, where the point coordinates might only be partially observed.
While this line of work is relevant, the problem considered therein
is more specialized than the point-based joint matching studied in
this paper; also, none of these paradigms are able to enable dense
error correction.
\item \textbf{Matrix Completion and Robust PCA.} In a broader sense, our
approach is inspired by the pioneering work in low-rank matrix completion
\cite{ExactMC09,keshavan2010few} and robust principal component analysis
\cite{CanLiMaWri09,chandrasekaran2011rank,xu2010robust,ganesh2010dense,YudongRPCA2013},
which reveal the power of convex relaxation in recovering low-dimensional
structures among high-dimensional objects. In fact, the ground truth
herein is equivalent to a block-constant low-rank matrix \cite{jiaming2014block},
as occurred in various graph-related problems. Nevertheless, their
theoretical analyses fail to provide tight bounds in our setting,
as the low-rank matrix relevant in our cases is highly sparse as well.
That said, additional structural assumptions need to be incorporated
in order to achieve optimal performance. 
\item \textbf{Graph Clustering.} The joint matching problem can be treated
as a structured graph clustering (GC) problem, where graph nodes represent
points on objects and the edge set encodes all correspondences. In
this regard, any GC algorithm \cite{bansal2004correlation,mathieu2010correlation,jalali2011clustering,chen2012clustering,jalali2012max_norm,ailon2013breaking}
provides a heuristic to estimate graph matching. Nevertheless, there
are several intrinsic structural properties herein that are not explored
by any generic GC approaches. First, our input takes a block-matrix
form, where each block is highly \emph{structured} (i.e. doubly-substochastic),
\emph{sparse}, and inter-dependent. Second, the points belonging to
the same object are \emph{mutually exclusive} to each other. Third,
the corruption rate for different entries can be highly non-symmetric
-- when translated into GC languages, this means that in-cluster edges
might suffer from an order-of-magnitude larger error rate than inter-cluster
edges. As a result, the findings for generic GC methods do not deliver
encouraging guarantees when applied to our setting. Detailed theoretical
and empirical comparisons are provided in Sections~\ref{sec:Theoretic-Guarantees}
and \ref{sec:results}, respectively.
\end{itemize}

\subsection{Organization}

The rest of the paper is organized as follows. Section \ref{sec:Problem-Formulation}
formally presents the problem setup, including the input model and
the expected output. Our two-step recovery procedure -- a spectral
method followed by a convex program called MatchLift -- is described
in Section \ref{sec:Methodology}. A scalable alternating direction
method of multipliers (ADMM) together with a greedy rounding strategy
is also introduced in Section \ref{sec:Methodology}. Section \ref{sec:Theoretic-Guarantees}
presents the main theoretical performance guarantees for our method
under a natural randomized model. All proofs of the main theorems
are deferred to the appendices. We introduce numerical experiments
demonstrating the practicability of our method in Section \ref{sec:results},
as well as empirical comparison with other best-known algorithms.
Finally, Section \ref{sec:Conclusions} concludes the paper with a
summary of our findings.

\section{Problem Formulation and Preliminaries\label{sec:Problem-Formulation}}

This section presents the problem setup for matching multiple partially
similar objects, and introduces an algebraic form for representing
a collection of pairwise maps.

\subsection{Terminology}

Below we formally define several important notions that will be used
throughout this paper.
\begin{itemize}
\item \textbf{Set.} We represent objects to be matched as discrete sets.
For example, these sets can represent the vertex sets in the graph
matching problem, or encode feature points when matching images. 
\item \textbf{Partial Map.} Given two discrete sets $\set{S}$ and $\set{S}'$,
a subset $\phi\subset\set{S}\times\set{S}'$ is termed a partial map
if each element of $\set{S}$ (resp. $\set{S}'$) is paired with \emph{at
most one} element of $\set{S}'$ (resp. $\set{S}$) --- in particular,
not all elements need to be paired. 
\item \textbf{Map Graph.} A graph $\set{G}=(\set{V},\set{E})$ is called
a map graph w.r.t. $n$ sets $\set{S}_{1},\cdots,\set{S}_{n}$ if
(i) $\set{V}:=\left\{ \set{S}_{1},\cdots,\set{S}_{n}\right\} $, and
(ii) $(\mathcal{S}_{i},\mathcal{S}_{j})\in\mathcal{E}$ implies that
pairwise estimates on the partial maps $\phi_{ij}$ and $\phi_{ji}$
between $\mathcal{S}_{i}$ and $\mathcal{S}_{j}$ are available. 
\end{itemize}

\subsection{Input and Output}

The input and expected output for the joint object matching problem
are described as follows.
\begin{itemize}
\item \textbf{Input (Noisy Pairwise Maps)}. Given $n$ sets $\set{S}_{1},\cdots,\set{S}_{n}$
with respective cardinality $m_{1},\cdots,m_{n}$ and a (possibly
sparse) map graph $\set{G}$, the input to the recovery algorithm
consists of partial maps $\phi_{ij}^{\textup{in}}\text{ }\left((i,j)\in\mathcal{G}\right)$
between $\mathcal{S}_{i}$ and $\mathcal{S}_{j}$ estimated in isolation,
using any off-the-shelf \emph{pairwise} matching method. Note that
the input maps $\phi_{ij}^{\textup{in}}$ one obtain might not agree,
partially or totally, with the ground truth.
\item \textbf{Output (Consistent Global Matching)}. The main objective of
this paper is to detect and prune incorrect pairwise input maps in
an efficient and reliable manner. Specifically, we aim at proposing
a tractable algorithm that returns a full collection of partial maps
$\left\{ \phi_{ij}\mid1\leq i,j\leq n\right\} $ that are (i) globally
consistent, and (ii) close to the provided pairwise maps -- and under
some conditions provably the ground-truth maps.
\end{itemize}
As will be detailed later, the key idea of our approach is to explore
global consistency across all pairwise maps. In fact, points across
different objects must form several clusters, and the ground-truth
maps only exhibit in-cluster edges. We will introduce a novel convex
relaxation tailored to the structure of the input maps (Section~\ref{sec:Methodology})
and investigate its theoretical performance (Section~\ref{sec:Theoretic-Guarantees}).

\subsection{Joint Matching in Matrix Form}

In the same spirit as most convex relaxation techniques (e.g.,~\cite{chen2012clustering,huang2013consistent}),
we use matrices to encode maps between objects. Specifically, we encode
a partial map $\phi_{ij}:\mathcal{S}_{i}\mapsto\mathcal{S}_{j}$ as
a binary matrix $\bs{X}_{ij}\in\{0,1\}^{|\mathcal{S}_{i}|\times|\mathcal{S}_{j}|}$
such that $\bs{X}_{ij}(s,s')=1$ iff $(s,s')\in\phi_{ij}$. Valid
partial map matrices $\bs{X}_{ij}$ shall satisfy the following doubly
sub-stochastic constraints: 
\begin{equation}
{\bf 0}\leq\bs{X}_{ij}{\bf 1}\leq{\bf 1},\quad{\bf 0}\leq\bs{X}_{ij}^{\top}{\bf 1}\leq{\bf 1}.\label{eq:substochastic}
\end{equation}
We then use an $n\times n$ block matrix $\bs{X}\in\{0,1\}^{N\times N}$
to encode the entire collection of partial maps $\left\{ \phi_{ij}\mid1\leq i,j\leq n\right\} $
over $\left\{ \mathcal{S}_{1},\cdots,\mathcal{S}_{n}\right\} $: 
\begin{align}
\boldsymbol{X}=\left(\begin{array}{cccc}
\boldsymbol{I}_{m_{1}} & \bs{X}_{12} & \cdots & \bs{X}_{1n}\\
\bs{X}_{21} & \boldsymbol{I}_{m_{2}} & \cdots & \bs{X}_{2n}\\
\vdots & \vdots & \ddots & \vdots\\
\bs{X}_{n1} & \cdots & \cdots & \boldsymbol{I}_{m_{n}}
\end{array}\right),\label{eq:GraphMapMatrix}
\end{align}
where $m_{i}:=\left|\mathcal{S}_{i}\right|$ and $N:=\sum_{i=1}^{n}m_{i}$.
Note that all diagonal blocks are identity matrices, as each object
is isomorphic to itself. 

For notational simplicity, we will use $\boldsymbol{X}^{\mathrm{in}}$
throughout to denote the collection of pairwise input maps, i.e. each
obtained pairwise estimate $\phi_{ij}^{\textup{in}}$ is encoded as
a binary map matrix $\boldsymbol{X}_{ij}^{\mathrm{in}}\in\left\{ 0,1\right\} ^{m_{i}\times m_{j}}$
obeying the constraint (\ref{eq:substochastic}). Some other useful
notation is summarized in Table \ref{tab:Summary-of-Notation}.

\begin{table}
\centering%
\begin{tabular}{c|>{\centering}p{0.7\textwidth}}
\hline 
\textbf{Symbol} & \textbf{Description} \tabularnewline
\hline 
${\bf 1}$  & \raggedright{}ones vector: a vector with all entries one\tabularnewline
\hline 
$\boldsymbol{X}_{ij}$  & \raggedright{}$(i,j)$-th block of a block matrix $\boldsymbol{X}$. \tabularnewline
\hline 
$\langle\boldsymbol{A},\boldsymbol{B}\rangle$  & \raggedright{}matrix inner product, i.e. $\langle\boldsymbol{A},\boldsymbol{B}\rangle=\text{tr}\left(\boldsymbol{A}^{\top}\boldsymbol{B}\right)$. \tabularnewline
\hline 
$\diag(\boldsymbol{X})$  & \raggedright{}a column vector formed from the diagonal of a square
matrix $\boldsymbol{X}$ \tabularnewline
\hline 
$\Diag(\boldsymbol{x})$  & \raggedright{}a diagonal matrix that puts $\boldsymbol{x}$ on the
main diagonal\tabularnewline
\hline 
$\boldsymbol{e}_{i}$  & \raggedright{}$i$th unit vector, whose $i$th component is 1 and
all others 0 \tabularnewline
\hline 
$\otimes$  & \raggedright{}tensor product, i.e. $\boldsymbol{A}\otimes\boldsymbol{B}=\left[\begin{array}{cccc}
a_{1,1}\boldsymbol{B} & a_{1,2}\boldsymbol{B} & \cdots & a_{1,n_{2}}\boldsymbol{B}\\
a_{2,1}\boldsymbol{B} & a_{2,2}\boldsymbol{B} & \cdots & a_{2,n_{2}}\boldsymbol{B}\\
\vdots & \vdots & \vdots & \vdots\\
a_{n_{1},1}\boldsymbol{B} & a_{n_{1},2}\boldsymbol{B} & \cdots & a_{n_{1},n_{2}}\boldsymbol{B}
\end{array}\right]$\tabularnewline
\hline 
$\Omega_{\mathrm{gt}},\Omega_{\mathrm{gt}}^{\perp}$ & \raggedright{}support of $\boldsymbol{X}^{\mathrm{gt}}$, its complement
support\tabularnewline
\hline 
$T_{\mathrm{gt}},T_{\mathrm{gt}}^{\perp}$ & \raggedright{}tangent space at $\boldsymbol{X}^{\mathrm{gt}}$, its
orthogonal complement\tabularnewline
\hline 
$\mathcal{P}_{\Omega_{\mathrm{gt}}}$, $\mathcal{P}_{\Omega_{\mathrm{gt}}^{\perp}}$ & \raggedright{}projection onto the space of matrices supported on $\Omega_{\mathrm{gt}}$
and $\Omega_{\mathrm{gt}}^{\perp}$, respectively\tabularnewline
\hline 
$\mathcal{P}_{T_{\mathrm{gt}}}$, $\mathcal{P}_{T_{\mathrm{gt}}^{\perp}}$ & \raggedright{}projection onto $T_{\mathrm{gt}}$ and $T_{\mathrm{gt}}^{\perp}$,
respectively\tabularnewline
\hline 
\end{tabular}\caption{\label{tab:Summary-of-Notation}Summary of Notation and Parameters}
\end{table}

\section{Methodology\label{sec:Methodology}}

This section presents a novel methodology, based on a theoretically
rigorous and numerically efficient framework.

\subsection{MatchLift: A Novel Two-Step Algorithm\label{sub:Convex}}

We start by discussing the consistency constraint on the underlying
ground-truth maps. Assume that there exists a universe $\mathcal{S}=\{1,\cdots,m\}$
of $m$ elements such that i) each object $\mathcal{S}_{i}$ is a
(partial) image of $\mathcal{S}$; ii) each element in $\mathcal{S}$
is contained in at least one object $\mathcal{S}_{i}$. Then the ground-truth
correspondences shall connect points across objects that are associated
with the same element.

Formally speaking, let the binary matrix $\boldsymbol{Y}_{i}\in\{0,1\}^{m_{i}\times m}$
encode the underlying correspondences between each point and the universe,
i.e. for any $s_{i}\in\mathcal{S}_{i}$ and $s\in\mathcal{S}$,
\[
\boldsymbol{Y}_{i}(s_{i},s)=1,\quad\text{iff}\quad s_{i}\text{ corresponds to }s.
\]
This way one can express
\[
\boldsymbol{X}=\boldsymbol{Y}\boldsymbol{Y}^{\top}
\]
with $\boldsymbol{Y}=(\boldsymbol{Y}_{1}^{\top},\cdots,\boldsymbol{Y}{}_{n}^{\top})^{\top}$,
which makes clear that
\[
\mathrm{rank}(\boldsymbol{X})=m.
\]
This is equivalent to the graph partitioning setting with $m$ cliques.
Consequently, a natural candidate is to seek a low-rank and positive
semidefinite (PSD) matrix to approximate the input. However, this
strategy does not effectively explore the sparsity structure underlying
the map collection.

To obtain a more powerful formulation, the proposed algorithm is based
on the observation that even under dense input corruption, we are
often able to obtain reliable estimates on $m$ -- the universe size,
using spectral techniques. This motivates us to incorporate the information
of $m$ into the formulation so as to develop tighter relaxation.
Specifically, we lift $\boldsymbol{X}$ with one more dimension and
consider 
\begin{equation}
\left[\begin{array}{cc}
m & {\bf 1}^{\top}\\
{\bf 1} & \boldsymbol{X}
\end{array}\right]=\left[\begin{array}{c}
\bs{1}^{\top}\\
\bs{Y}
\end{array}\right]\left[\begin{array}{cc}
\bs{1} & \bs{Y}^{\top}\end{array}\right]\succeq{\bf 0},\label{eq:X_rank_PSD}
\end{equation}
which is strictly tighter than merely imposing $\boldsymbol{X}\succeq{\bf 0}$.
Intuitively, the formulation (\ref{eq:X_rank_PSD}) entitles us one
extra degree of freedom to assist in outlier pruning, which turns
out to be crucial in ``debiasing'' the errors. Encouragingly, this
tightened constraint leads to remarkably improved theoretical guarantees,
as will be shown in Section~\ref{sec:Theoretic-Guarantees}. In the
following, we formally present our two-step matching procedure.
\begin{itemize}
\item \textbf{Step I: Estimating $m$.} We estimate $m$ by tracking the
spectrum of the input $\boldsymbol{X}^{\mathrm{in}}$. According to
common wisdom (e.g. \cite{keshavan2010matrix}), a block-sparse matrix
$\boldsymbol{X}^{\mathrm{in}}$ must first be trimmed in order to
remove the undesired bias effect caused by over-represented rows /
columns. One candidate trimming procedure is provided as follows.

\begin{itemize}
\item \textbf{Trimming Procedure}. Set $d_{\text{min}}$ to be the smallest
vertex degree of $\mathcal{G}$, and we say the a vertex is {\em
over-represented} if its vertex degree in $\mathcal{G}$ exceeds
$2d_{\text{min}}$. Then for each overrepresented vertex $i$, randomly
sample $2d_{\text{min}}$ edges incident to it and set to zero all
blocks $\boldsymbol{X}_{ij}^{\mathrm{in}}$ associated with the remaining
edges. 
\end{itemize}

With this trimming procedure, we propose to pre-estimate $m$ via
Algorithm \ref{alg:EsimateM}. 
\begin{algorithm}[h]
\caption{Estimating the size $m$ of the universe $\mathcal{S}$}

\label{alg:EsimateM} \begin{algorithmic} \STATE 1) trim $\boldsymbol{X}^{\mathrm{in}}$,
and let $\tilde{\boldsymbol{X}}^{\mathrm{in}}$ be the output. \STATE
2) perform eigenvalue decomposition on $\tilde{\boldsymbol{X}}^{\mathrm{in}}$;
denote by $\lambda_{i}$ the $i$th largest eigenvalue. \STATE 3)
\textbf{output}: $\hat{m}:=\arg\max\nolimits _{M\leq i<N}|\lambda_{i}-\lambda_{i+1}|$,
where $M=\max\{2,\max_{1\leq i\leq n}m_{i}\}$. 
\end{algorithmic} 
\end{algorithm}

In short, Algorithm \ref{alg:EsimateM} returns an estimate of $m$
via spectral methods, which outputs the number of dominant principal
components of $\boldsymbol{X}^{\mathrm{in}}$.

\item \textbf{Step II: Map Recovery.} Now that we have obtained an estimate
on $m$, we are in position to present our optimization heuristic
that exploits the structural property (\ref{eq:X_rank_PSD}). In order
to guarantee that the recovery is close to the provided maps $\phi_{ij}^{\textup{in}}$,
one alternative is to maximize correspondence agreement (i.e. the
number of compatible non-zero entries) between the input and output.
This results in an objective function: 
\[
\sum_{(i,j)\in\mathcal{G}}\langle\boldsymbol{X}_{ij}^{\textup{in}},\boldsymbol{X}_{ij}\rangle.
\]
Additionally, since a {\em non-negative} map matrix $\boldsymbol{X}$
is inherently sparse, it is natural to add an $\ell_{1}$ regularization
term to encourage sparsity, which in our case reduces to 
\[
\langle{\bf {1}\cdot{\bf {1}^{\top},\boldsymbol{X}\rangle}}.
\]
Since searching over all 0-1 map matrices is intractable, we propose
to relax the binary constraints. Putting these together leads to the
following semidefinite program referred to as \emph{MatchLift}:
\begin{align}
(\text{MatchLift})\text{ }\text{ }\underset{\boldsymbol{X}\in\mathbb{R}^{N\times N}}{\text{maximize}}\text{ } & \sum\limits _{(i,j)\in\mathcal{G}}\langle\boldsymbol{X}_{ij}^{\textup{in}},\boldsymbol{X}_{ij}\rangle-\lambda\langle{\bf 1\cdot{\bf 1^{\top},\boldsymbol{X}\rangle}}\nonumber \\
\text{subject to}\quad & \boldsymbol{X}_{ii}=\boldsymbol{I}_{m_{i}},\quad1\leq i\leq n,\nonumber \\
 & \boldsymbol{X}\geq{\bf 0},\nonumber \\
 & \left[\begin{array}{cc}
m & {\bf 1}^{\top}\\
{\bf 1} & \boldsymbol{X}
\end{array}\right]\succeq{\bf 0}.\label{eq:SDP_conic}
\end{align}
\begin{remark}Here, $\lambda$ represents the regularization parameter
that balances the compatibility to the input and the sparsity structure.
As we will show, the recovery ability of MatchLift is not sensitive
to the choice of $\lambda$. By default, one can set 
\begin{equation}
\lambda=\frac{\sqrt{|\mathcal{E}|}}{2n},\label{eq:LambdaDefault}
\end{equation}
which results in a \emph{parameter-free} formulation.\end{remark}\begin{remark}Careful
readers will note that the set of doubly stochastic constraints (\ref{eq:substochastic})
can be further added into the program. Nevertheless, while enforcement
of these constraints (\ref{eq:substochastic}) results in a strictly
tighter relaxation, it only leads to marginal improvement when (\ref{eq:SDP_conic})
is present. As a result, we remove them for the sake of computational
efficiency. We note, however, that in the scenario where $m$ is difficulty
to estimate, imposing (\ref{eq:substochastic}) will ``become crucial
in allowing a constant fraction (e.g. 50\%) of error rate, although
dense error correction might not be guaranteed.\end{remark}
\end{itemize}
This algorithm, all at once, attempts to disentangle the ground truth
and outliers as well as predict unobserved maps via convex relaxation,
inspired by recent success in sparse and low-rank matrix decomposition
\cite{CanLiMaWri09,chandrasekaran2011rank}. Since the ground truth
matrix is \emph{simultaneously} low-rank and sparse; existing methodologies,
which focus on dense low-rank matrices, typically yield loose, uninformative
bounds in our setting. 

Finally, we note that our matching algorithm and main results are
well suited for a broad class of scenarios where each pairwise input
can be modeled as a (partial) permutation matrix. For instance, our
setting subsumes phase correlation \cite{horner1984phase}, angular
synchronization \cite{singer2011angular}, and multi-signal alignment
\cite{bandeira2014multireference} as special cases.


\subsection{Alternating Direction Methods of Multipliers (ADMM)\label{sub:ADMM}}

Most advanced off-the-shelf SDP solvers like SeDuMi or MOSEK are typically
based on interior point methods, and such second-order methods are
unable to handle problems with large dimensionality. For practical
applicability, we propose a first-order optimization algorithm for
approximately solving MatchLift, which is a variant of the ADMM method
for semidefinite programs presented in \cite{wen2010alternating}.
Theoretically it is guaranteed to converge. Empirically, it is often
the case that ADMM converges to modest accuracy within a reasonable
amount of time, and produces desired results with the assistance of
appropriate rounding procedures. This feature makes ADMM practically
appealing in our case since the ground-truth matrix is known to be
a 0-1 matrix, for which moderate entry-wise precision is sufficient
to ensure good rounding accuracy. The details of the ADMM algorithm
are deferred to Appendix \ref{sec:ADMM-Appendix}.

\subsection{Rounding Strategy\label{sub:Rounding}}

As MatchLift solves a relaxed program of the original convex problem,
it may return fractional solutions. In this case, we propose a greedy
rounding method to generate valid partial maps. Given the solution
$\hat{\boldsymbol{X}}$ to MatchLift, the proposed strategy proceeds
as in Algorithm \ref{alg:Rounding}. One can verify that this simple
deterministic rounding strategy returns a matrix that encodes a consistent
collection of partial maps. Note that $\boldsymbol{v}_{i}^{T}$ denotes
the $i$th row of a matrix $\boldsymbol{V}$.

\begin{algorithm}
\caption{Rounding Strategy}

\label{alg:Rounding} \begin{algorithmic} \STATE \textbf{initialize}
$\text{ }$compute the top $r$ eigenvalues $\boldsymbol{\Sigma}=\diag(\sigma_{1},\cdots,\sigma_{r})$
and eigenvectors $\boldsymbol{U}=(\boldsymbol{u}_{1},\cdots,\boldsymbol{u}_{r})$
of $\hat{\boldsymbol{X}}$, where $r$ is an estimate of the total
number distinctive points to be recovered. Form $\boldsymbol{V}=\boldsymbol{U}\boldsymbol{\Sigma}^{\frac{1}{2}}$.
\REPEAT \STATE 1) Let $\boldsymbol{O}$ be a unitary matrix that
obeys $\boldsymbol{O}\boldsymbol{v}_{1}=\boldsymbol{e}_{1}$, and
set $\boldsymbol{V}\leftarrow\boldsymbol{V}\boldsymbol{O}^{\top}$.
\STATE 2) For each of the remaining rows $\boldsymbol{v}_{i}$ belonging
to each set $\mathcal{S}_{j}$ ($i\in\mathcal{S}_{j}$), perform 
\[
\boldsymbol{v}_{i}\leftarrow\boldsymbol{e}_{1},\quad\text{if }\left\langle \boldsymbol{v}_{i},\boldsymbol{v}_{1}\right\rangle >0.5\text{ and }i=\arg\max_{l\in\mathcal{S}_{j}}\left\langle \boldsymbol{v}_{l},\boldsymbol{v}_{1}\right\rangle .
\]
\STATE 3) All indices $i$ obeying $\boldsymbol{v}_{i}=\boldsymbol{e}_{1}$
are declared to be matched with each other, and are then removed.
Repeat 1) for the next row that has not been fixed. 
 \UNTIL{all the rows of $\bs{V}$ have been fixed.} 
\end{algorithmic} 
\end{algorithm}

\section{Theoretical Guarantees: Exact Recovery\label{sec:Theoretic-Guarantees}}

Our heuristic algorithm MatchLift recovers, under a natural randomized
setting, the ground-truth maps even when only a vanishing portion
of the input correspondences are correct. Furthermore, MatchLift succeeds
with minimal input complexity, namely, the algorithm is guaranteed
to work as soon as those input maps that coincide with the ground
truth maps form a connected map graph.

\subsection{Randomized Model\label{sub:Randomized-Model}}

In the following, we present a natural randomized model, under which
the feature of MatchLift is easiest to interpret. Specifically, consider
a universe $[m]:=\left\{ 1,2,\cdots,m\right\} $. The randomized setting
consider herein is generated through the following procedure.
\begin{itemize}
\item For each set $\mathcal{S}_{i}$ ($1\leq i\leq n$), each point $s\in[m]$
is included in $\mathcal{S}_{i}$ independently with probability $p_{\mathrm{set}}$. 
\item Each $\boldsymbol{X}_{ij}^{\mathrm{in}}$ is observed / computed independently
with probability $p_{\mathrm{obs}}$. 
\item Each observed $\boldsymbol{X}_{ij}^{\mathrm{in}}$ coincides with
the ground truth independently with probability $p_{\mathrm{true}}=1-p_{\mathrm{false}}$. 
\item Each {\em observed but incorrect} $\boldsymbol{X}_{ij}^{\mathrm{in}}$
is independently drawn from a set of partial map matrices satisfying
\begin{equation}
\mathbb{E}\boldsymbol{X}_{ij}^{\mathrm{in}}=\frac{1}{m}{\bf {1}\cdot{\bf {1}}}^{\top},\text{ }\text{if }\boldsymbol{X}_{ij}^{\mathrm{in}}\text{ is observed and corrupted.}\label{eq:MeanOutlier}
\end{equation}

\end{itemize}
\begin{remark}The above mean condition (\ref{eq:MeanOutlier}) holds,
for example, when the augmented block (i.e. that obtained by enhancing
$\mathcal{S}_{i}$ and $\mathcal{S}_{j}$ to have all $m$ elements)
is drawn from the entire set of permutation matrices or other symmetric
groups uniformly at random. While we impose (\ref{eq:MeanOutlier})
primarily to simplify our presentation of the analysis, we remark
that this assumption can be significantly relaxed without degrading
the matching performance. \end{remark}

\begin{remark}We also note that the outliers do not need to be generated
in an i.i.d. fashion. Our main results hold as long as they are jointly
independent and satisfy the mean condition (\ref{eq:MeanOutlier}).\end{remark}

\subsection{Main Theorem: Near-Optimal Matching\label{sub:Main-Theorem}}

We are now in position to state our main results, which provide theoretical
performance guarantees for our algorithms. 

\begin{theorem}[{\bf Accurate Estimation of $m$}]\label{thm:SpectralMethod}Consider
the above randomized model. There exists an absolute constant $c_{1}>0$
such that with probability exceeding $1-\frac{1}{m^{5}n^{5}}$, the
estimate on $m$ returned by Algorithm \ref{alg:EsimateM} is exact
as long as 
\begin{align}
p_{\mathrm{true}}\geq\frac{c_{1}\log^{2}\left(mn\right)}{\sqrt{np_{\mathrm{obs}}}p_{\mathrm{set}}}.\label{eq:SpectralCond}
\end{align}
\end{theorem}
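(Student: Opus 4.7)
The plan is to show that, with high probability, $\boldsymbol{X}^{\mathrm{in}}$ has exactly $m$ dominant eigenvalues that are well-separated from the remaining $N-m$, so that the largest spectral gap falls at index $m$. The starting point is to decompose $\boldsymbol{X}^{\mathrm{in}} = \bar{\boldsymbol{X}} + \boldsymbol{E}$, where $\bar{\boldsymbol{X}} := \mathbb{E}\bigl[\boldsymbol{X}^{\mathrm{in}} \mid \{\mathcal{S}_i\}\bigr]$, and then analyze the two terms separately. Conditional on the set memberships, the input blocks are mutually independent and have the explicit conditional mean
\begin{equation*}
\bar{\boldsymbol{X}} \;=\; p_{\mathrm{obs}} p_{\mathrm{true}}\,\boldsymbol{Y}\boldsymbol{Y}^\top \;+\; \tfrac{p_{\mathrm{obs}} p_{\mathrm{false}}}{m}\,\boldsymbol{1}_N\boldsymbol{1}_N^\top \;+\; \boldsymbol{R},
\end{equation*}
where $\boldsymbol{R}$ is an $O(1)$-spectral-norm correction that enforces the identity on the diagonal blocks and cancels the outlier mean there.

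The first substep is to read off the spectrum of $\bar{\boldsymbol{X}}$. Since $\boldsymbol{Y}^\top \boldsymbol{Y} = \mathrm{Diag}(d_1,\ldots,d_m)$, where $d_s := |\{i : s \in \mathcal{S}_i\}|$, a Chernoff bound combined with a union bound over $s$ gives $d_s = (1 \pm o(1))\,n p_{\mathrm{set}}$ uniformly with probability at least $1 - (mn)^{-5}$. The $m$ nonzero eigenvalues of $\boldsymbol{Y}\boldsymbol{Y}^\top$ are exactly these $d_s$, and the rank-one term $\boldsymbol{1}_N\boldsymbol{1}_N^\top$ lives in the column span of $\boldsymbol{Y}$ (via $\boldsymbol{Y}\boldsymbol{1}_m = \boldsymbol{1}_N$). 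Combining these observations, $\bar{\boldsymbol{X}}$ carries $m$ eigenvalues of magnitude $\Theta(p_{\mathrm{obs}} p_{\mathrm{true}} n p_{\mathrm{set}})$ and $N - m$ eigenvalues of magnitude $O(1)$.

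The central estimate is an operator-norm bound on $\boldsymbol{E}$ via the matrix Bernstein inequality. Writing $\boldsymbol{E} = \sum_{i\neq j}\tilde{\boldsymbol{E}}_{ij}$ as a sum of independent block terms, each summand obeys $\|\tilde{\boldsymbol{E}}_{ij}\| \leq 1$ because doubly sub-stochastic blocks have unit spectral norm, and the matrix variance $\bigl\|\sum_{i\neq j}\mathbb{E}[\tilde{\boldsymbol{E}}_{ij}\tilde{\boldsymbol{E}}_{ij}^\top]\bigr\|$ can be bounded by $O(n p_{\mathrm{obs}})$ by exploiting the fact that every row of a partial map contains at most one nonzero entry. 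Matrix Bernstein then yields
\begin{equation*}
\|\boldsymbol{E}\| \;\lesssim\; \sqrt{n p_{\mathrm{obs}}}\,\log(mn) \;+\; \log(mn)
\end{equation*}
with probability at least $1 - (mn)^{-5}$; under the hypothesis $p_{\mathrm{true}} \geq c_1 \log^2(mn)/(\sqrt{n p_{\mathrm{obs}}}\,p_{\mathrm{set}})$, this is substantially smaller than the signal scale $p_{\mathrm{obs}} p_{\mathrm{true}} n p_{\mathrm{set}}$.

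Finally, Weyl's inequality transfers the spectrum from $\bar{\boldsymbol{X}}$ to $\boldsymbol{X}^{\mathrm{in}}$: the top $m$ eigenvalues all lie within $\|\boldsymbol{E}\|$ of $\Theta(p_{\mathrm{obs}} p_{\mathrm{true}} n p_{\mathrm{set}})$, so consecutive gaps among them are $O(\|\boldsymbol{E}\|)$; the remaining $N-m$ eigenvalues are $O(1 + \|\boldsymbol{E}\|)$, so gaps among them are likewise $O(\|\boldsymbol{E}\|)$; and the distinguished gap $\lambda_m - \lambda_{m+1}$ is at least $\Omega(p_{\mathrm{obs}} p_{\mathrm{true}} n p_{\mathrm{set}})$, which dominates every other consecutive gap. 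Since $M = \max(2, \max_i m_i) \leq m$, the search range in Algorithm~\ref{alg:EsimateM} contains $i = m$, so the algorithm returns $\hat m = m$. The main obstacle I anticipate is the matrix Bernstein step: pinning the variance proxy at $O(n p_{\mathrm{obs}})$ (rather than the naive $O(m n p_{\mathrm{set}} p_{\mathrm{obs}})$, which would lose a factor of $\sqrt{p_{\mathrm{set}}}$ and fail to match the stated condition) requires genuinely using the row-stochasticity constraint of partial maps. The trimming step is largely cosmetic in the randomized regime since vertex degrees concentrate around $n p_{\mathrm{obs}}$; it suffices to verify via a union bound that no vertex is over-represented, in which case trimming leaves $\bar{\boldsymbol{X}}$ essentially unchanged.
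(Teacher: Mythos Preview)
Your proposal is correct and follows the same architecture as the paper's proof: split the (untrimmed) input into its conditional mean plus a zero-mean fluctuation, pin down the rank-$m$ spectrum of the mean, bound the fluctuation in operator norm, and invoke Weyl. The paper likewise observes that trimming is a non-event under the randomized model and then proves a lemma giving exactly the eigenvalue separation you describe.

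Two differences are worth noting. First, the paper controls the fluctuation via a moment-method bound for block matrices (its Lemma~\ref{lemma:MomentMethod}) rather than matrix Bernstein; your Bernstein route works just as well, and your variance computation $\bigl\|\sum_{j}\mathbb{E}[\tilde{\boldsymbol E}_{ij}\tilde{\boldsymbol E}_{ij}^{\top}]\bigr\|=O(np_{\mathrm{obs}})$ is correct precisely because $\boldsymbol X_{ij}^{\mathrm{in}}(\boldsymbol X_{ij}^{\mathrm{in}})^{\top}\preceq \boldsymbol I$ for partial maps---so the obstacle you flagged is not an obstacle. Second, your statement that ``the top $m$ eigenvalues all lie within $\|\boldsymbol E\|$ of $\Theta(p_{\mathrm{obs}}p_{\mathrm{true}}np_{\mathrm{set}})$'' is not quite right: the rank-one term $\tfrac{p_{\mathrm{obs}}p_{\mathrm{false}}}{m}\boldsymbol 1\boldsymbol 1^{\top}$ boosts $\lambda_{1}(\bar{\boldsymbol X})$ to $\Theta(p_{\mathrm{obs}}np_{\mathrm{set}})$, so the gap $\lambda_{1}-\lambda_{2}$ can be as large as $\Theta(p_{\mathrm{obs}}p_{\mathrm{false}}np_{\mathrm{set}})$. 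The paper handles this by computing all $m$ nonzero eigenvalues of the mean explicitly (via an $(m{+}1)\times(m{+}1)$ reduction) and then relying on the search range $i\geq M\geq 2$ in Algorithm~\ref{alg:EsimateM} to exclude the $i=1$ gap. You should make that exclusion explicit; once you do, the argument goes through.
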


\begin{proof}See Appendix \ref{sec:Proof_thm:SpectralMethod-m}.\end{proof}

Theorem \ref{thm:SpectralMethod} ensures that one can obtain perfect
estimate on the universe size or, equivalently, the rank of the ground
truth map matrix via spectral methods. With accurate information on
$m$, MatchLift allows perfect matching from densely corrupted inputs,
as revealed below.

\begin{theorem}[\textbf{Exact and Robust Matching}]\label{thm:RandomGraph-1}Consider
the randomized model described above. There exist universal constants
$c_{0},c_{1},c_{2}>0$ such that for any
\begin{align}
c_{1}\left(\frac{p_{\mathrm{obs}}}{m}+\sqrt{\frac{p_{\mathrm{obs}}\log(mn)}{np_{\mathrm{set}}^{3}}}\right)\leq\lambda\leq\frac{\sqrt{p_{\mathrm{obs}}\log\left(mn\right)}}{p_{\mathrm{set}}},\label{eq:Lambda_Range}
\end{align}
if the non-corruption rate obeys 
\begin{align}
p_{\mathrm{true}}>\frac{c_{0}\log^{2}\left(mn\right)}{\sqrt{np_{\mathrm{obs}}}p_{\mathrm{set}}^{2}},\label{eq:recovery_condition}
\end{align}
then the solution to MatchLift is exact and unique with probability
exceeding $1-\left(mn\right)^{-3}$.\end{theorem}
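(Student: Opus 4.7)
The plan is to establish exact recovery of $\bs{X}^{\mathrm{gt}}$ by explicitly constructing a dual certificate that verifies the KKT optimality conditions of MatchLift at the ground truth. Three dual variables enter: a block-diagonal $\bs{\Lambda}$ enforcing $\bs{X}_{ii}=\bs{I}_{m_i}$, a nonnegative $\bs{Z}\geq \bs{0}$ for the entrywise constraint, and an $(N+1)\times(N+1)$ matrix $\bs{W}\succeq \bs{0}$ for the lifted conic constraint. Writing the objective direction as $\bs{C}$ (with $\bs{C}_{ij}=\bs{X}_{ij}^{\mathrm{in}}-\lambda\mathbf{1}\mathbf{1}^{\top}$ on observed off-diagonal blocks and $-\lambda\mathbf{1}\mathbf{1}^{\top}$ elsewhere), stationarity takes the form $\bs{C}=\bs{\Lambda}-\bs{Z}-\mathcal{M}(\bs{W})$ on off-diagonal blocks, where $\mathcal{M}(\bs{W})$ denotes the lower-right $N\times N$ block of $\bs{W}$. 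Complementary slackness then forces $\bs{Z}$ to be supported on $\Omega_{\mathrm{gt}}^{\perp}$ and $\bs{W}\tilde{\bs{X}}^{\mathrm{gt}}=\bs{0}$, i.e.\ $\bs{W}\in T_{\mathrm{gt}}^{\perp}$.

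To construct the certificate, I would first compute $\mathbb{E}\bs{C}$ under the randomized model. Using the mean condition \eqref{eq:MeanOutlier}, $\mathbb{E}\bs{C}$ decomposes into a rescaled copy of the ground-truth on $\Omega_{\mathrm{gt}}$ (weighted by $p_{\mathrm{obs}}p_{\mathrm{true}}$) plus a rank-one contribution along $\mathbf{1}\mathbf{1}^{\top}$ produced jointly by the outlier mean and the $\lambda$-regularizer. One can solve the KKT system against $\mathbb{E}\bs{C}$ in closed form to obtain nominal certificates $(\bs{\Lambda}_{0},\bs{Z}_{0},\bs{W}_{0})$; crucially, the extra degree of freedom supplied by the lifted variable absorbs the rank-one outlier bias, which is exactly what allows $\lambda$ to sit in the range \eqref{eq:Lambda_Range} independently of $p_{\mathrm{true}}$ and what powers dense error correction. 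One then perturbs this nominal choice to accommodate the zero-mean fluctuation $\bs{E}:=\bs{C}-\mathbb{E}\bs{C}$, routing $\mathcal{P}_{T_{\mathrm{gt}}}\bs{E}$ into $\bs{\Lambda}+\bs{Z}$ and $\mathcal{P}_{T_{\mathrm{gt}}^{\perp}}\bs{E}$ into $\bs{W}$, plausibly via a golfing-type iterative construction adapted to the block-sparse, low-rank geometry at hand.

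Verifying feasibility reduces to three high-probability estimates. Nonnegativity of $\bs{Z}$ on $\Omega_{\mathrm{gt}}^{\perp}$ is a scalar concentration statement per entry and follows from Bernstein's inequality. The delicate piece is $\bs{W}\succeq \bs{0}$, which amounts to showing that a deterministic PSD mass from $\bs{W}_{0}$ dominates $\|\mathcal{P}_{T_{\mathrm{gt}}^{\perp}}\bs{E}\|$ in operator norm. Matrix Bernstein, applied after stratifying the randomness across the set-inclusion, observation, and outlier layers, should yield
\begin{equation}
\|\mathcal{P}_{T_{\mathrm{gt}}^{\perp}}\bs{E}\|\lesssim \sqrt{np_{\mathrm{obs}}\log(mn)},
\label{eq:spec_bound_plan}
\end{equation}
with a signal strength of order $np_{\mathrm{set}}^{2}p_{\mathrm{obs}}p_{\mathrm{true}}$; balancing the two recovers the threshold \eqref{eq:recovery_condition}.

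The main obstacle is simultaneously enforcing $\bs{W}\succeq \bs{0}$ and $\bs{Z}\geq \bs{0}$ off-support in a regime where $\bs{X}^{\mathrm{gt}}$ is both sparse (per-block density $\sim p_{\mathrm{set}}/m$) and low-rank. Off-the-shelf sparse-plus-low-rank theory is too loose here, so one must show that the operator $\mathcal{P}_{T_{\mathrm{gt}}}\mathcal{P}_{\Omega_{\mathrm{gt}}}\mathcal{P}_{T_{\mathrm{gt}}}$ is strictly contractive in both $\ell_{\infty}$ and operator norms; this drives the golfing construction of $\bs{W}$ and delivers the sharp $\log^{2}(mn)$ factor rather than a polynomial loss. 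The partial-similarity regime ($p_{\mathrm{set}}<1$) adds further complication: the rectangular factors $\bs{Y}_{i}$ have varying cardinalities $m_{i}$, and the tangent-space algebra must be carried out carefully enough to explain the $p_{\mathrm{set}}^{2}$ in the denominator of \eqref{eq:recovery_condition}. Once a strictly dual-feasible certificate is produced ($\mathcal{P}_{T_{\mathrm{gt}}^{\perp}}\bs{W}\succ \bs{0}$ and $\bs{Z}$ strictly positive on $\Omega_{\mathrm{gt}}^{\perp}$), uniqueness of the primal optimum follows from standard strict-complementarity arguments.
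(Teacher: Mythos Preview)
Your KKT framework and the insight that the lifted constraint supplies an extra rank-one direction to ``debias'' the outliers are both correct and match the paper. However, two aspects of your construction diverge from the paper in ways that matter.

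First, the paper does \emph{not} use golfing or an injectivity/contractivity argument for $\mathcal{P}_{T_{\mathrm{gt}}}\mathcal{P}_{\Omega_{\mathrm{gt}}}\mathcal{P}_{T_{\mathrm{gt}}}$. Instead it builds the certificate $\boldsymbol{Y}$ by an explicit combinatorial spreading: for each false correspondence $(s,s')$ appearing in block $(i,j)$, it distributes a correction of magnitude $1/n_{i,j}^{s,s'}$ across \emph{every} set $\mathcal{S}_{l}$ that simultaneously contains $s$ and $s'$ (there are $\Theta(np_{\mathrm{set}}^{2})$ such sets), touching the $(i,l)$, $(l,j)$ and $(l,l)$ blocks so that the result lands exactly in $T_{\mathrm{gt}}^{\perp}$. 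This Laplacian-like construction is what simultaneously controls the operator norm ($\lesssim\sqrt{np_{\mathrm{obs}}/p_{\mathrm{set}}^{2}}\log(mn)$) and the $\ell_{\infty}$ norm ($\lesssim\sqrt{p_{\mathrm{obs}}\log(mn)/(np_{\mathrm{set}}^{3})}$) of the correction, and it is the source of the $p_{\mathrm{set}}^{2}$ in \eqref{eq:recovery_condition}. Your proposed routing ``$\mathcal{P}_{T_{\mathrm{gt}}}\boldsymbol{E}\to\boldsymbol{\Lambda}+\boldsymbol{Z}$'' has a gap: $\boldsymbol{\Lambda}$ is block-diagonal and $\boldsymbol{Z}$ must vanish on $\Omega_{\mathrm{gt}}$, yet $\mathcal{P}_{T_{\mathrm{gt}}}(\boldsymbol{X}^{\mathrm{false}})$ is neither block-diagonal nor supported on $\Omega_{\mathrm{gt}}^{\perp}$. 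A golfing iteration would have to manufacture exactly the same kind of off-block corrections the paper writes down by hand, and it is not clear your contractivity hypothesis holds (or is even the right object) in this simultaneously sparse and low-rank regime.

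Second, the rank-one direction absorbed by the lift is not $\mathbf{1}\mathbf{1}^{\top}$ but $\boldsymbol{d}\boldsymbol{d}^{\top}$, where $\boldsymbol{d}$ reweights each coordinate by $n/n_{s}$ so that $\langle\boldsymbol{d}\boldsymbol{d}^{\top},\boldsymbol{X}^{\mathrm{gt}}-\tfrac{1}{m}\mathbf{1}\mathbf{1}^{\top}\rangle=0$. In the partial-similarity case $\mathbf{1}$ is not in the column space of $\boldsymbol{X}^{\mathrm{gt}}$, so solving the nominal KKT system against $\mathbb{E}\boldsymbol{C}$ with $\mathbf{1}\mathbf{1}^{\top}$ will not place the PSD certificate in $T_{\mathrm{gt}}^{\perp}$; the paper handles this by showing $\|\boldsymbol{d}\boldsymbol{d}^{\top}-\mathbf{1}\mathbf{1}^{\top}\|_{\infty}=O(\sqrt{\log(mn)/(np_{\mathrm{set}})})$ and budgeting this residual into the strict-positivity margin of $\boldsymbol{Z}$, which is precisely why the lower bound on $\lambda$ in \eqref{eq:Lambda_Range} contains the $\sqrt{p_{\mathrm{obs}}\log(mn)/(np_{\mathrm{set}}^{3})}$ term. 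The remaining PSD mass comes from the Laplacian of the \emph{true} subgraph (algebraic connectivity $\gtrsim np_{\mathrm{set}}p_{\mathrm{true}}p_{\mathrm{obs}}$), which dominates the spreading correction and yields \eqref{eq:recovery_condition}.
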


\begin{proof}See Appendix \ref{sec:Proof-of-Theorem-RandomGraph}.\end{proof}

Note that the performance is not sensitive to $\lambda$ as it can
be arbitrarily chosen between $\Theta\left(\sqrt{\frac{p_{\mathrm{obs}}}{n}}\right)$
and $\Theta(\sqrt{p_{\mathrm{obs}}})$. 
The implications of Theorem \ref{thm:RandomGraph-1} are summarized
as follows.
\begin{enumerate}
\item \textbf{Near-Optimal Recovery under Dense Errors}. Under the randomized
model, MatchLift succeeds in pruning all outliers and recovering the
ground truth with high probability. Somewhat surprisingly, this is
guaranteed to work even when the non-corrupted pairwise maps account
for only a {\em vanishing} fraction of the inputs. As a result,
MatchLift achieves near-optimal recovery performance in the sense
that as the number $n$ of objects grows, its outlier-tolerance rate
can be arbitrarily close to $1$. Equivalently speaking, in the asymptotic
regime, almost all input maps -- more precisely, a fraction
\begin{equation}
1-\Omega\left(\frac{\log^{2}n}{\sqrt{n}}\right)
\end{equation}
of inputs -- can be badly corrupted by random errors without degrading
the matching accuracy. This in turn highlights the significance of
joint object matching: no matter how noisy the input sources are,
perfect matching can be obtained as long as sufficiently many instances
are available.

To the best of our knowledge, none of the prior results can support
perfect recovery with more than 50\% corruptions, regardless of how
large $n$ can be. The only comparative performance is reported for
the robust PCA setting, where semidefinite relaxation enables dense
error correction \cite{ganesh2010dense,YudongRPCA2013}. However,
their condition cannot be satisfied in our case. Experimentally, applying
RPCA on joint matching is unable to tolerate dense errors (see Section~\ref{sec:results}).

\item \textbf{Exact Matching of Partially Similar Objects}. The challenge
for matching partially similar objects arises in that the overlapping
ratio between each pair of objects is in the order of $p_{\mathrm{set}}^{2}$
while the size of each object is in the order of $p_{\mathrm{set}}$.
As correct correspondences only come from overlapping regions, it
is expected that with a fixed $p_{\mathrm{false}}$, the matching
ability degrades when $p_{\mathrm{set}}$ decreases, which coincides
with the bound in (\ref{eq:recovery_condition}). However, the order
of fault-tolerance rate with $n$ is independent of $p_{\mathrm{set}}$
as long as $p_{\mathrm{set}}$ is bounded away from 0. 




\item \textbf{Minimal Input Complexity}. Suppose that $p_{\mathrm{set}}$
and $p_{\mathrm{false}}$ are both constants bounded away from 0 and
1, and that $m=n^{O(\mathrm{poly}\log(n))}$. Condition (\ref{eq:recovery_condition})
asserts that: the algorithm is able to separate outliers and fill
in all missing maps reliably with no errors, as soon as the input
complexity (i.e. the number of pairwise maps provided) is about the
order of $n\mathrm{poly}\log(n)$. Recall that the connectivity threshold
for an Erd\H{o}s\textendash{}Renyi graph $\mathcal{G}(n,p_{\mathrm{obs}})$
is $p_{\mathrm{obs}}>\frac{\log n}{n}$ (see \cite{durrett2007random}).
This implies that MatchLift allows exact recovery nearly as soon as
the input complexity exceeds the information theoretic limits.
\end{enumerate}

\subsection{Comparison with Prior Approaches\label{sub:Comparison-Prior-Approaches}}

Our exact recovery condition significantly outperforms the best-known
performance guarantees, including various SDP heuristics for matching
problems, as well as general graph clustering approaches when applied
to object matching, detailed below.
\begin{itemize}
\item \textbf{Semidefinite Programming}: The SDP formulation proposed by
Wang and Singer \cite{WangSingerRotation} admits exact recovery in
the {\em full-similarity} setting when $p_{\mathrm{true}}>c_{1}$
for some absolute constant $c_{1}\approx50\%$ in the asymptotic regime.
One might also attempt recovery by minimizing a weighted sum of nuclear
norm and $\ell_{1}$ norm as suggested in matrix completion \cite{ExactMC09}
and robust PCA \cite{CanLiMaWri09,chandrasekaran2011rank}. In order
to enable dense error correction, robust PCA requires the sparse components
(which is $\boldsymbol{X}^{\mathrm{in}}-\boldsymbol{X}^{\mathrm{gt}}$
here with $\bs{X}^{\mathrm{gt}}$ denoting the ground truth) to exhibit
random signs \cite{ganesh2010dense,YudongRPCA2013}. This cannot be
satisfied in our setting since the sign pattern of $\boldsymbol{X}^{\mathrm{in}}-\boldsymbol{X}^{\mathrm{gt}}$
is highly biased (i.e. all non-negative entries of $\boldsymbol{X}^{\mathrm{in}}-\boldsymbol{X}^{\mathrm{gt}}$
lying in the support of $\boldsymbol{X}^{\mathrm{gt}}$ have negative
signs, while all non-negative entries of $\boldsymbol{X}^{\mathrm{in}}-\boldsymbol{X}^{\mathrm{gt}}$
outside the support of $\boldsymbol{X}^{\mathrm{gt}}$ have positive
signs).
\item \textbf{Graph Clustering}: Various approaches for general graph clustering
have been proposed with theoretical guarantees under different randomized
settings \cite{jalali2011clustering,jalali2012max_norm,mathieu2010correlation}.
These results typically operate under the assumption that in-cluster
and inter-cluster correspondences are independently corrupted, which
does not apply in our model. Due to the block structure input model,
these two types of corruptions are highly correlated and usually experience
order-of-magnitude difference in corruption rate (i.e. $\left(1-p_{\mathrm{true}}\right)\frac{m-1}{m}$
for in-cluster edges and $\left(1-p_{\mathrm{true}}\right)\frac{1}{m}$
for inter-cluster edges). To facilitate comparison, we evaluate the
most recent deterministic guarantees obtained by \cite{jalali2012max_norm}.
The key metric $D_{\text{max}}$ therein can be easily bounded by
$D_{\text{max}}\geq1-p_{\text{true}}$ due to a significant degree
of in-cluster edge errors. The recovery condition therein requires
\[
D_{\text{max}}<\frac{1}{m+1},\quad\Rightarrow\quad p_{\mathrm{true}}>\frac{m}{m+1},
\]
which does not deliver encouraging guarantees compared with $p_{\mathrm{true}}>\Theta\left(\frac{\log^{2}n}{\sqrt{n}}\right)$
achieved by MatchLift. 
\end{itemize}

\section{Experimental Evaluation}

\label{sec:results}

In this section, we evaluate the performance of MatchLift and compare
it against ~\cite{jalali2011clustering} and other graph matching
methods. We consider both synthetic examples, which are used to verify
the exact recovery conditions described above, as well as popular
benchmark datasets for evaluating the practicability on real-world
images.

\subsection{Synthetic Examples}

We follow the randomized model described in Section \ref{sec:Theoretic-Guarantees}
to generate synthetic examples. For simplicity, we only consider the
full observation mode, which establishes input maps between all pairs
of objects. In all examples, we fix the universe size such that it
consists of $m=16$ points. We then vary the remaining parameters,
i.e., $n$, $p_{\mathrm{set}}$ and $p_{\mathrm{false}}$, to assess
the performance of an algorithm. We evaluate $31\times36$ sets of
parameters for each scenario, where each parameter configuration is
simulated by 10 Monte Carlo trials. The empirical success probability
is reflected by the color of each cell. Blue denotes perfect recovery
in all experiments, and red denotes failure for all trials.

\begin{figure*}[htp]
\centering %
\begin{tabular}{cc}
\includegraphics[width=0.32\columnwidth]{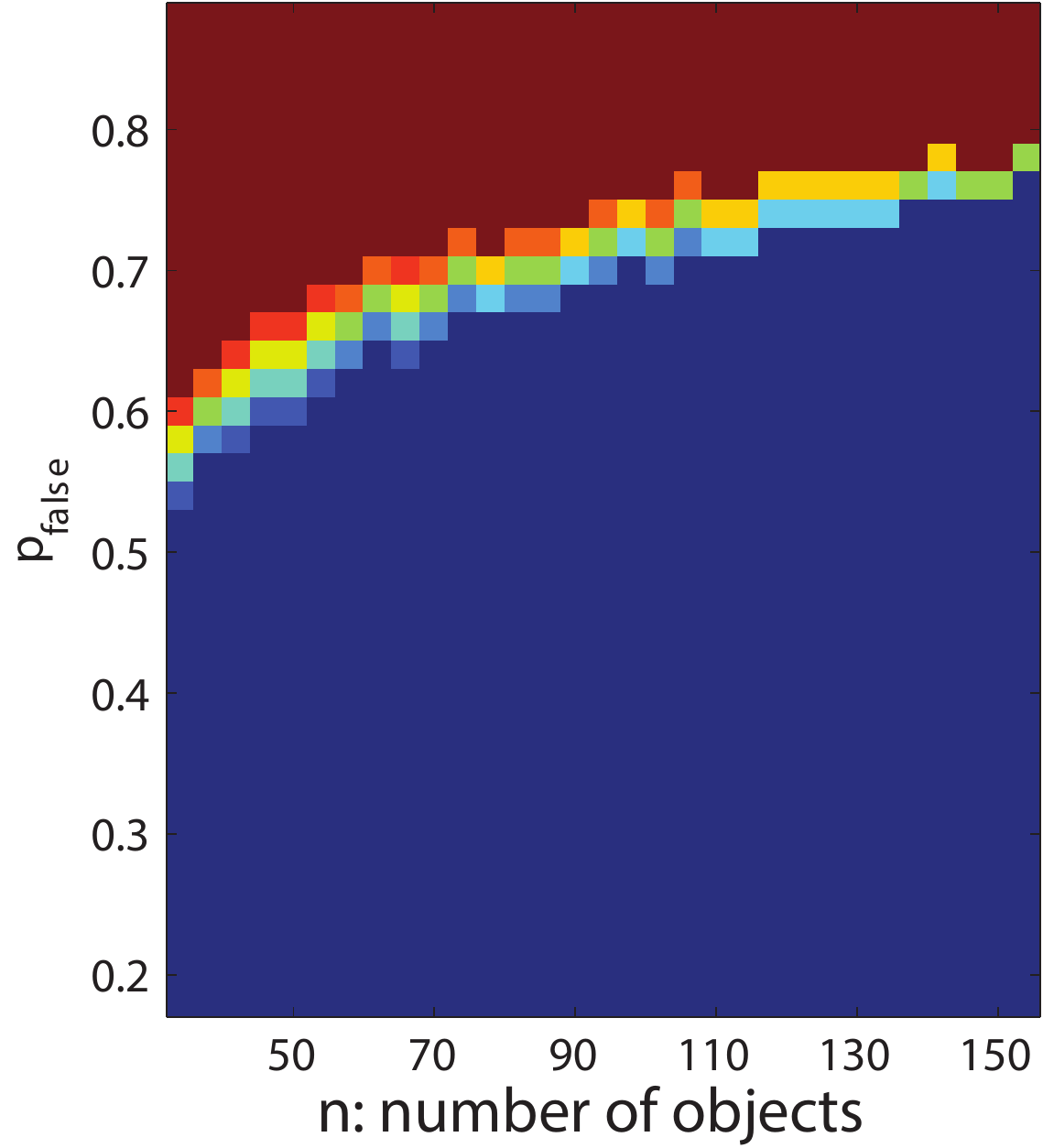} & \includegraphics[width=0.32\columnwidth]{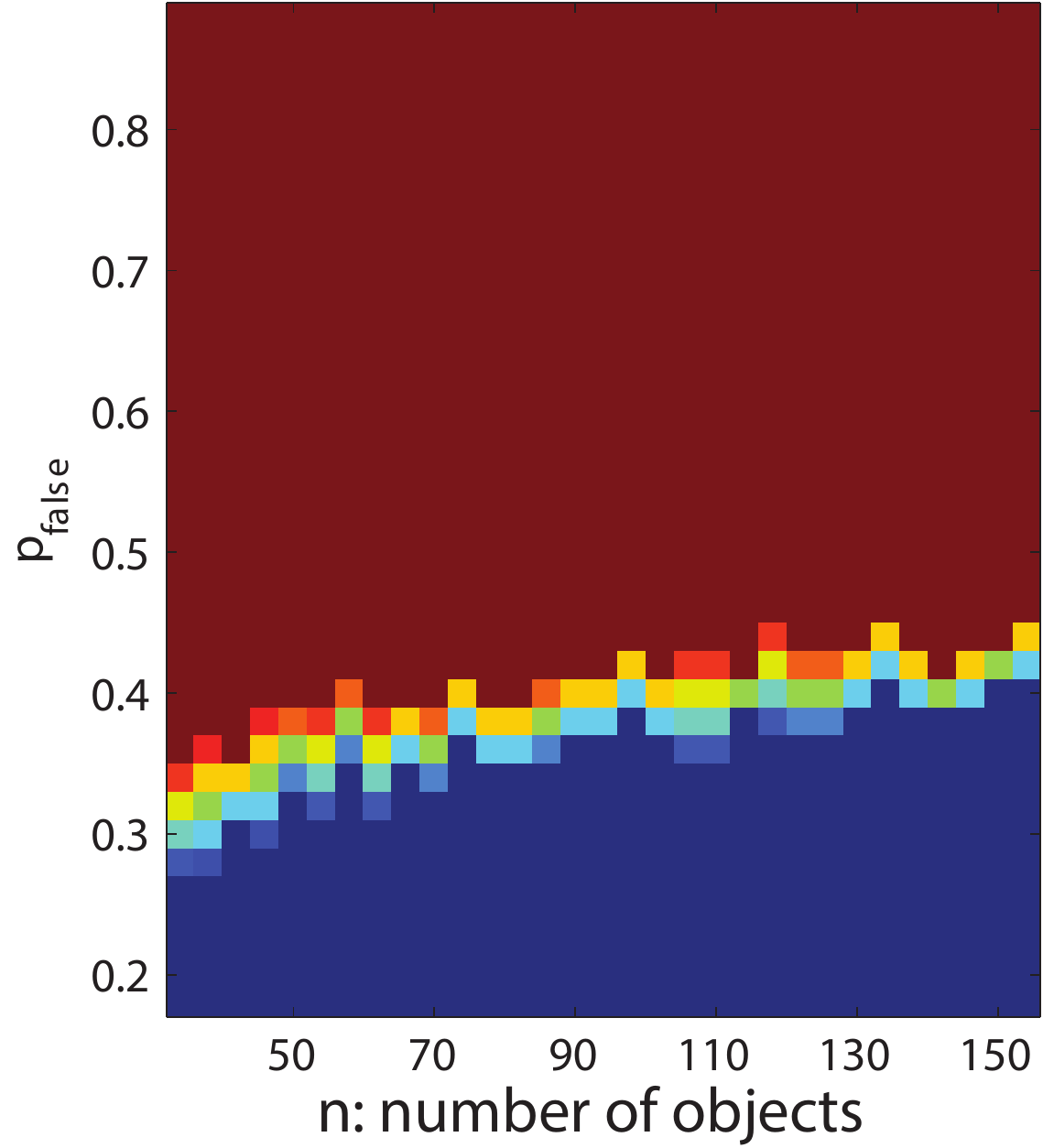}\tabularnewline
(a)  & (b) \tabularnewline
\includegraphics[width=0.32\columnwidth]{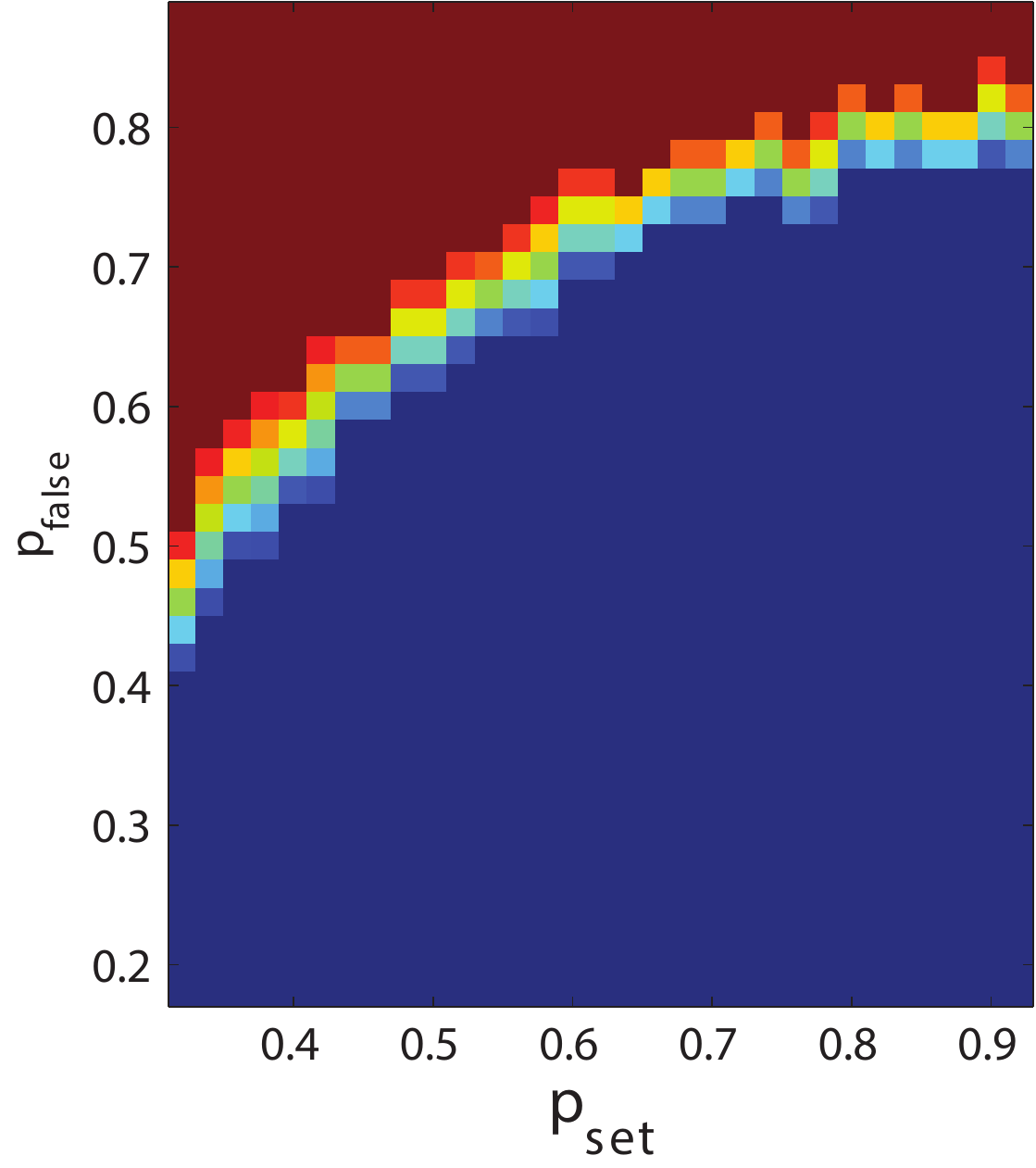} & \includegraphics[width=0.32\columnwidth]{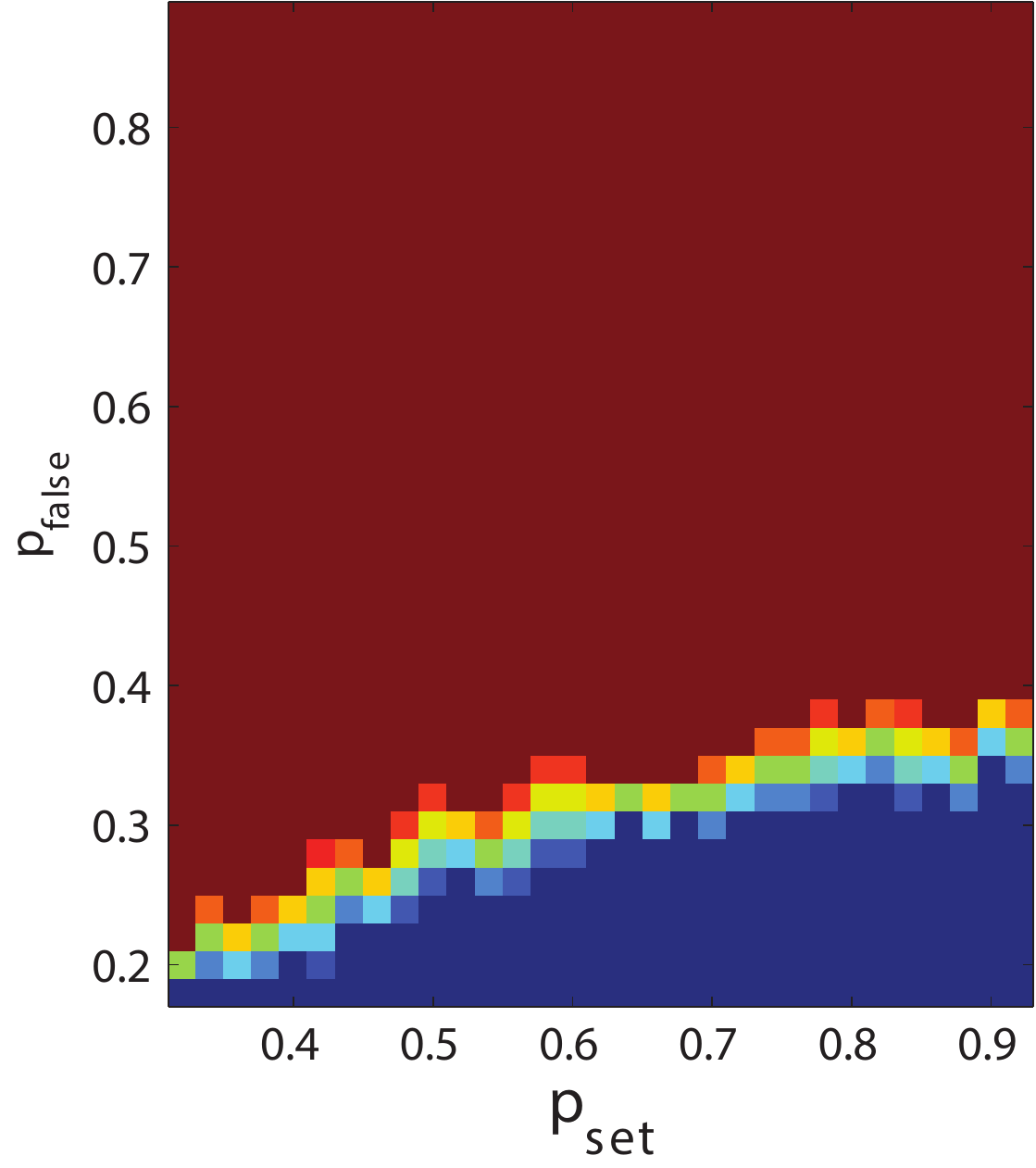}\tabularnewline
(c) & (d)\tabularnewline
\end{tabular}\caption{Phase Transition Diagrams of the proposed approach (MatchLift) and
\cite{jalali2011clustering}. We can see that MatchLift can recover
the ground-truth maps even the majority of the input correspondences
are wrong, while the exact recovery of \cite{jalali2011clustering}
requires that the percentage of incorrect correspondences is less
than $50\%$. (a-b) $p_{\mathrm{set}}=0.6$. (c-d) $n=100$.}

\label{Fig:PT} 
\end{figure*}

Figure~\ref{Fig:PT}(a) illustrates the phase transition for $p_{\mathrm{set}}=0.6$,
when the number of objects $n$ and $p_{\textup{false}}$ vary. We
can see that MatchLift is exact even when the majority of the input
correspondences are incorrect (e.g., $75\%$ when $n=150$). This
is consistent with the theoretical result that the lower bound on
$p_{\mathrm{true}}$ for exact recovery is $O(\log^{2}n/\sqrt{n})$.

Figure~\ref{Fig:PT}(c) shows the phase transition for $n=100$,
when $p_{\mathrm{set}}$ and $p_{\mathrm{false}}$ vary. We can see
that MatchLift tolerates more noise when $p_{\mathrm{set}}$ is large.
This is also consistent with the result that the error-correction
ability improves with $p_{\mathrm{set}}$.

In comparison, Figure~\ref{Fig:PT}(b) and Figure~\ref{Fig:PT}(d)
illustrate the phase transition diagrams achieved by the algorithm
proposed in \cite{jalali2011clustering}. One can see that MatchLift
is empirically superior, as \cite{jalali2011clustering} is unable
to allow dense error correction in our case.

\subsection{Real-World Examples}

\begin{figure}
\centering\includegraphics[width=0.8\linewidth]{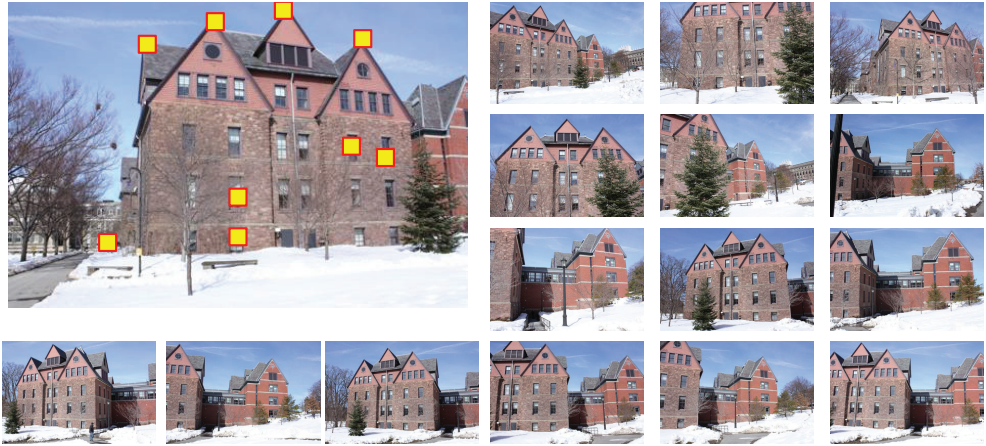}
\caption{A small benchmark called Building created for matching multiple images
with partial similarity. Manually labeled feature points are highlighted. }

\label{Fig:Benchmark-building} 
\end{figure}

\begin{figure}
\centering\includegraphics[width=0.8\linewidth]{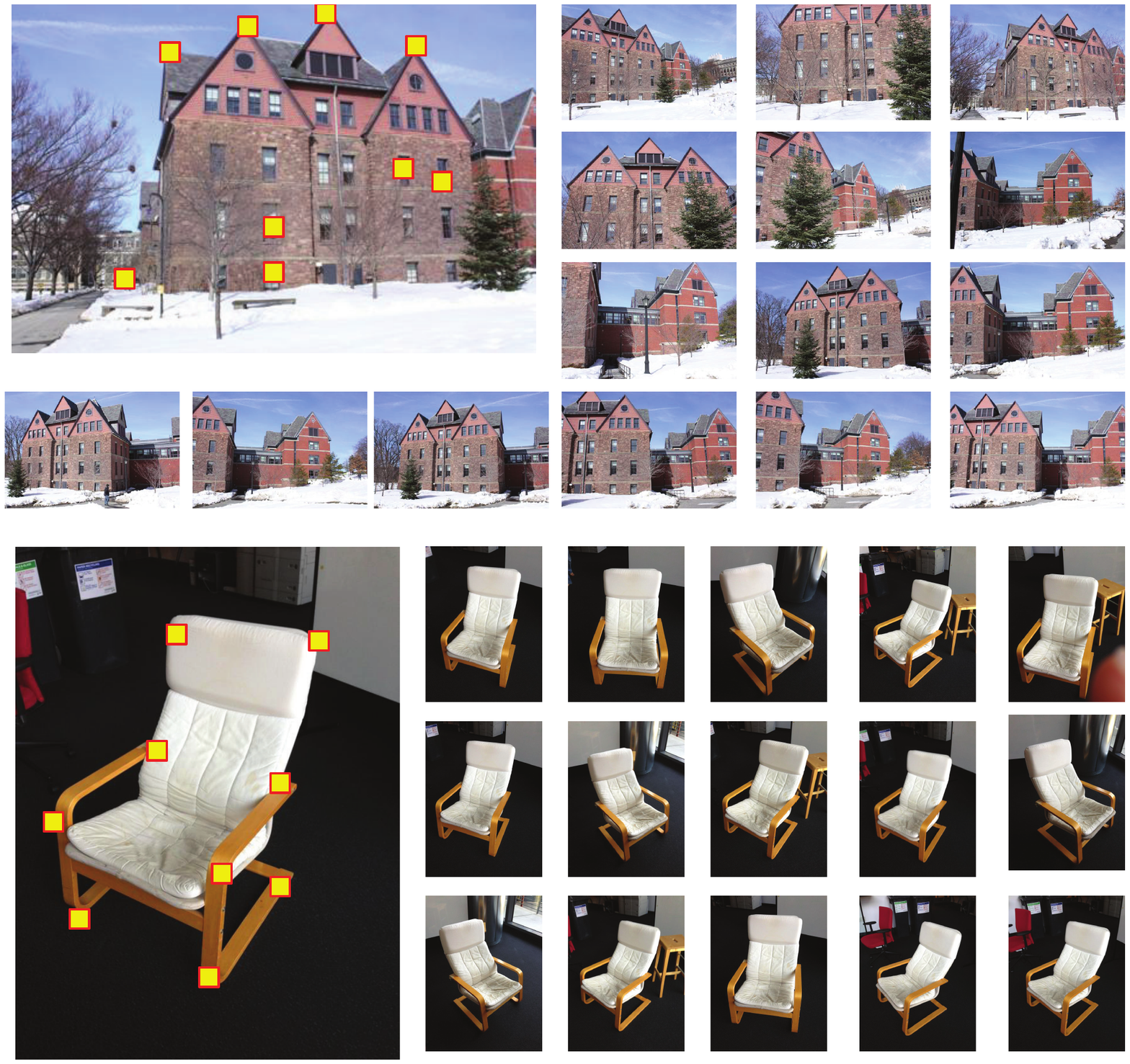}
\caption{A small benchmark called Chair created for matching multiple images
with partial similarity. Manually labeled feature points are highlighted. }

\label{Fig:Benchmark-chair} 
\end{figure}

We have applied our algorithm on six benchmark datasets, i.e., CMU-House,
CMU-Hotel, two datasets (Graf and Bikes) from~\cite{Mikolajczyk:2005:PEL}%
\footnote{available online: robots.ox.ac.uk/~vgg/research/affine%
} and two new datasets (referred as Chair and Building, respectively)
designed for evaluating joint partial object matching. As shown in
Figures~\ref{Fig:Benchmark-building} and \ref{Fig:Benchmark-chair},
the Building data set contains $16$ images taken around a building~\cite{crandall2011discrete},
while the Chair data set contains $16$ images of a chair model from
different viewpoints. In the following, we first discuss the procedure
for generating the input to our algorithm, i.e., the input sets and
the initial maps. We then present the evaluation setup and analyze
the results.
\begin{itemize}
\item \textbf{Feature points and initial maps.} To make fair comparisons
with previous techniques on CMU-House and CMU-Hotel, we use the features
points provided in~\cite{caetano2009learning} and apply the spectral
matching algorithm described in ~\cite{Leordeanu:2005:SM} to establish
initial maps between features points. To assess the performance of
the proposed algorithm with sparse input maps, we only match each
image with $10$ random neighboring images.
\begin{figure*}[htp]
\centering %
\begin{tabular}{cc}
\includegraphics[width=0.3\columnwidth]{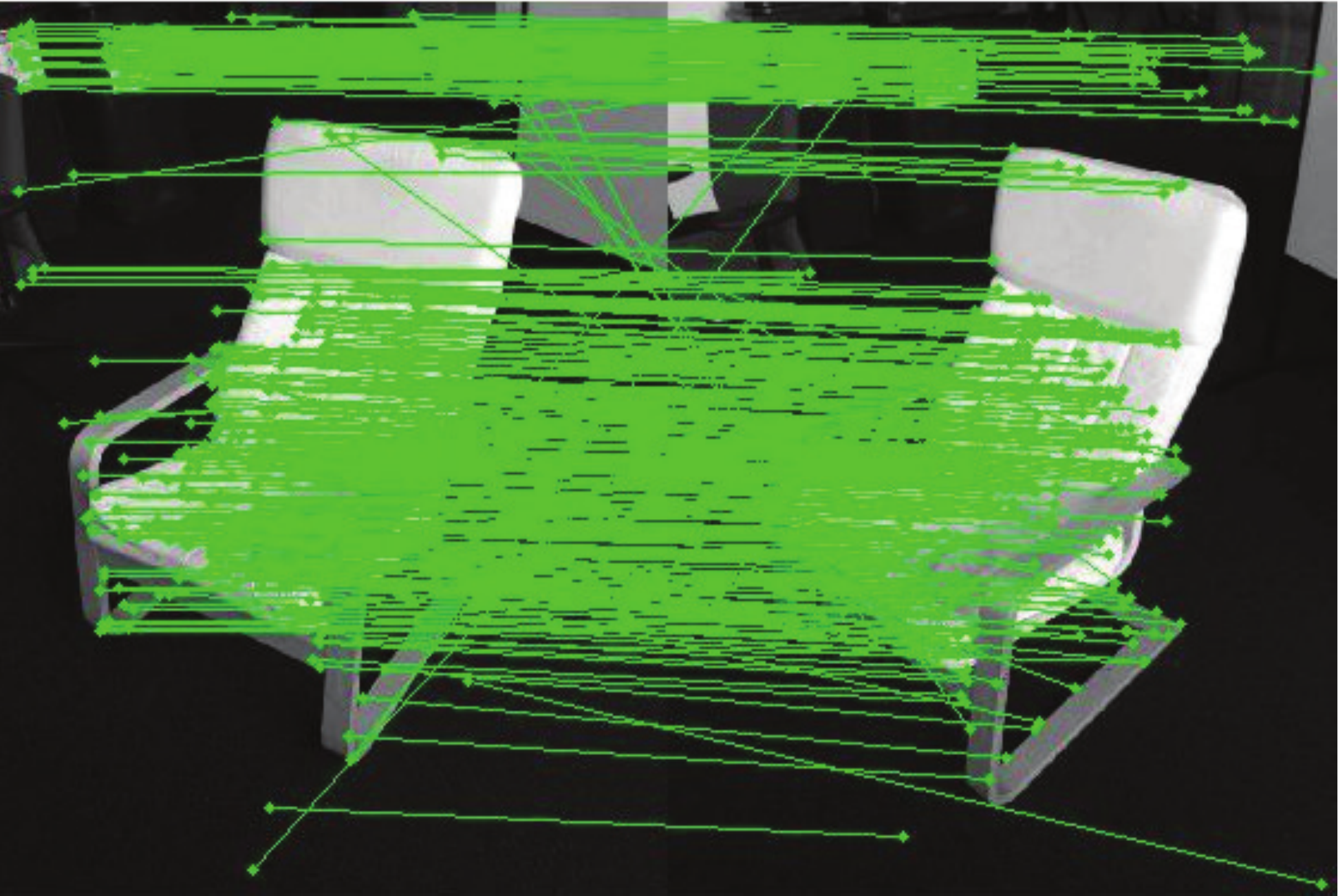} & \includegraphics[width=0.3\columnwidth]{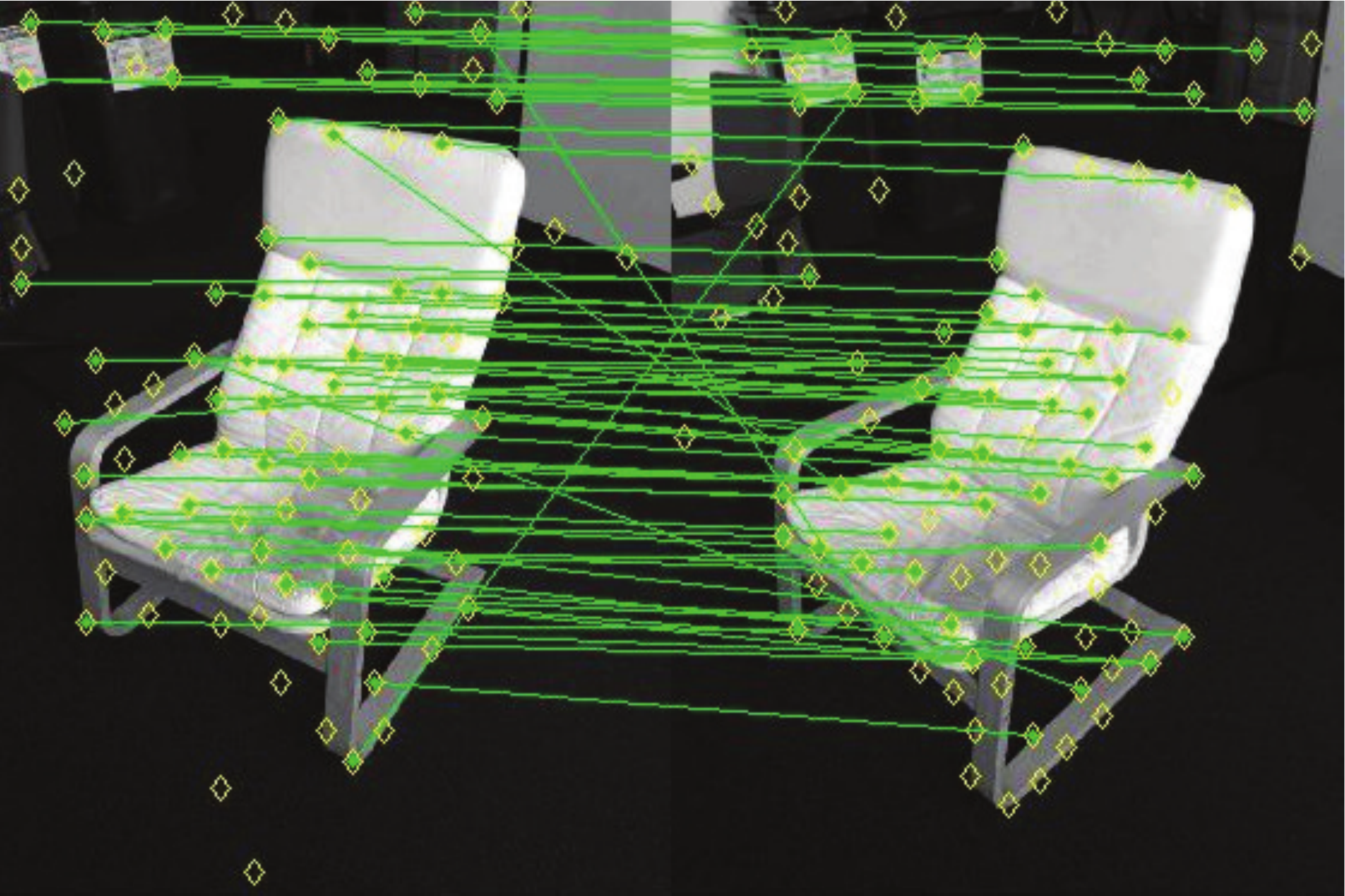}\tabularnewline
(a)  & (b) \tabularnewline
\end{tabular}\caption{A map between dense SIFT feature points (a) is converted into a map
between sampled feature points (b. }

\label{Fig:Downsampling} 
\end{figure*}

\item 
\begin{figure*}[htp]
\centering\includegraphics[width=0.6\textwidth]{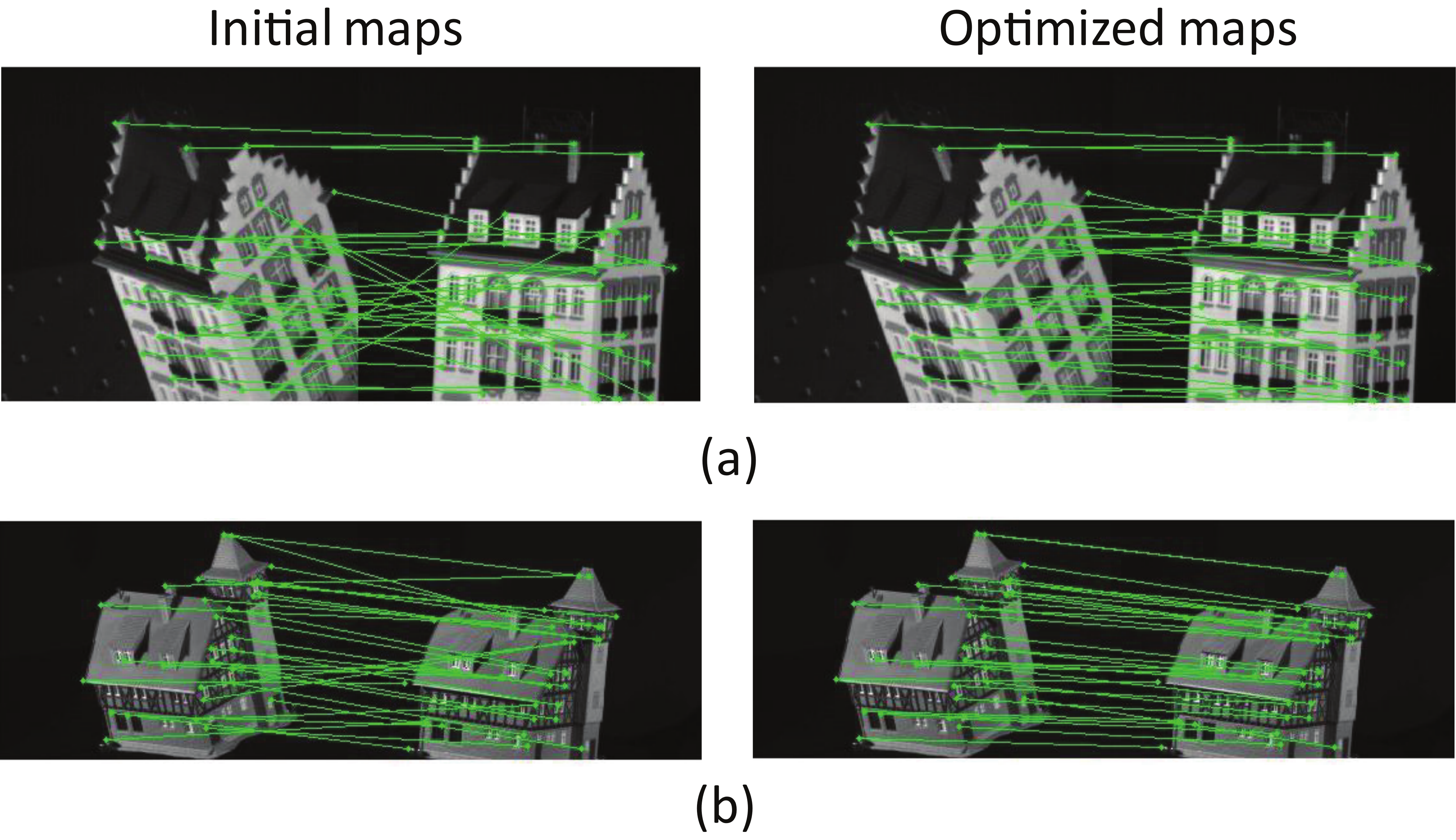}

\includegraphics[width=0.6\textwidth]{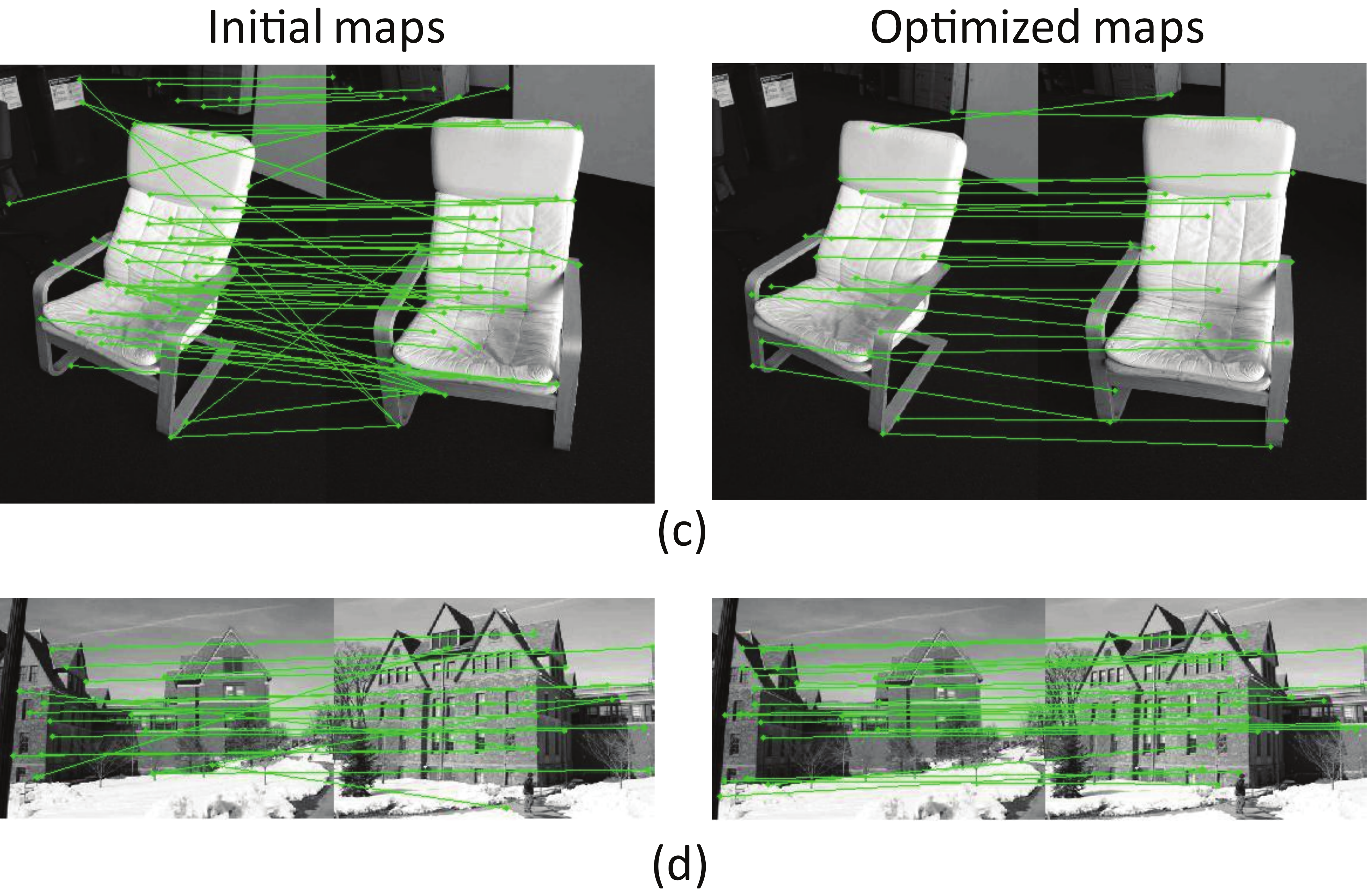}

\includegraphics[width=0.6\textwidth]{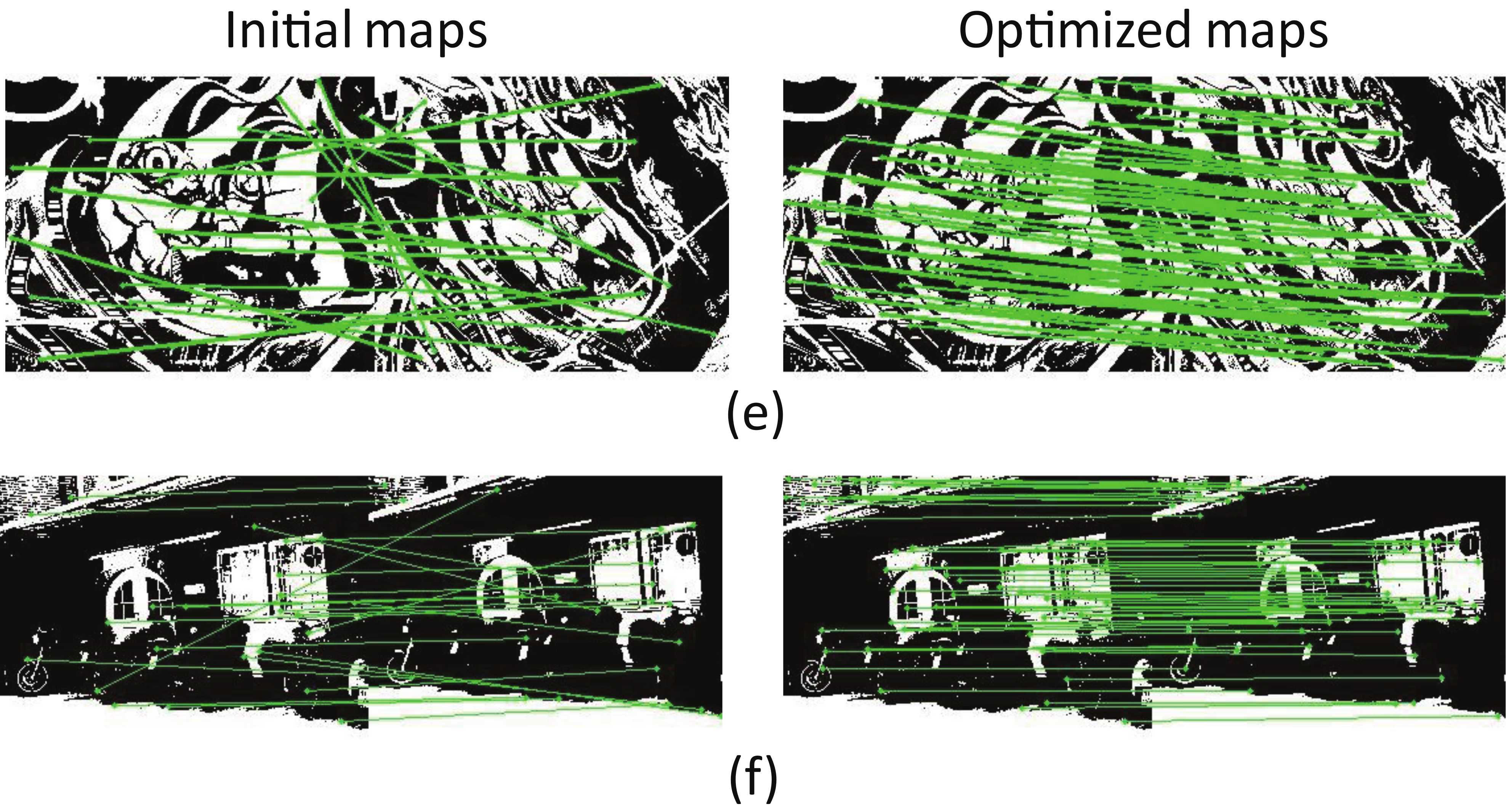}\caption{Comparisons between the input maps and the output of MatchLift on
six benchmark datasets: (a) CMU Hotel, (b) CMU House, (c) Chair, (d)
Building, (e) Graf, and (f) Bikes. The optimized maps not correct
incorrect correspondences as well as fill in missing correspondences
(generated by paths through intermediate shapes). One representative
pair from each dataset is shown here.}

\label{Fig:Comparison-CMU} 
\end{figure*}

To handle raw images in Chair, Building, Graf and Bikes, we apply
a different strategy to build feature points and initial maps. We
first detect dense SIFT feature points~\cite{Lowe:2004:DIF} on each
image. We then apply RANSAC~ \cite{Fischler:1981:RSC} to obtain
correspondences between each pair of images. As SIFT feature points
are over-complete, many of them do not appear in the resulting feature
correspondences between pairs of views. Thus, we remove all feature
points that have less than $2$ appearances in all pair-wise maps.
We further apply farthest point sampling on the feature points until
the sampling density is above $0.05w$, where $w$ is the width of
the input images. The remaining feature points turn out to be much
more distinct and thus are suitable for joint matching (See Figure~\ref{Fig:Downsampling}).
For the experiments we have tested, we obtain about $60-100$ features
points per image.

\item \textbf{Evaluation protocol.} On CMU-House and CMU-Hotel, we count
the percentage of correct feature correspondences produced by each
algorithm. On Chair, Building, Graf and Bikes, we apply the metric
described in~\cite{HaCohen:2011:NDC}, which evaluates the deviations
of manual feature correspondences. As the feature points computed
on each image do not necessarily align with the manual features, we
apply~\cite{conf/cvpr/AhmedTRTS08} to interpolate feature level
correspondences into pixel-wise correspondences for evaluation.
\begin{figure*}[htp]
\centering %
\begin{tabular}{cc}
\includegraphics[width=0.3\columnwidth]{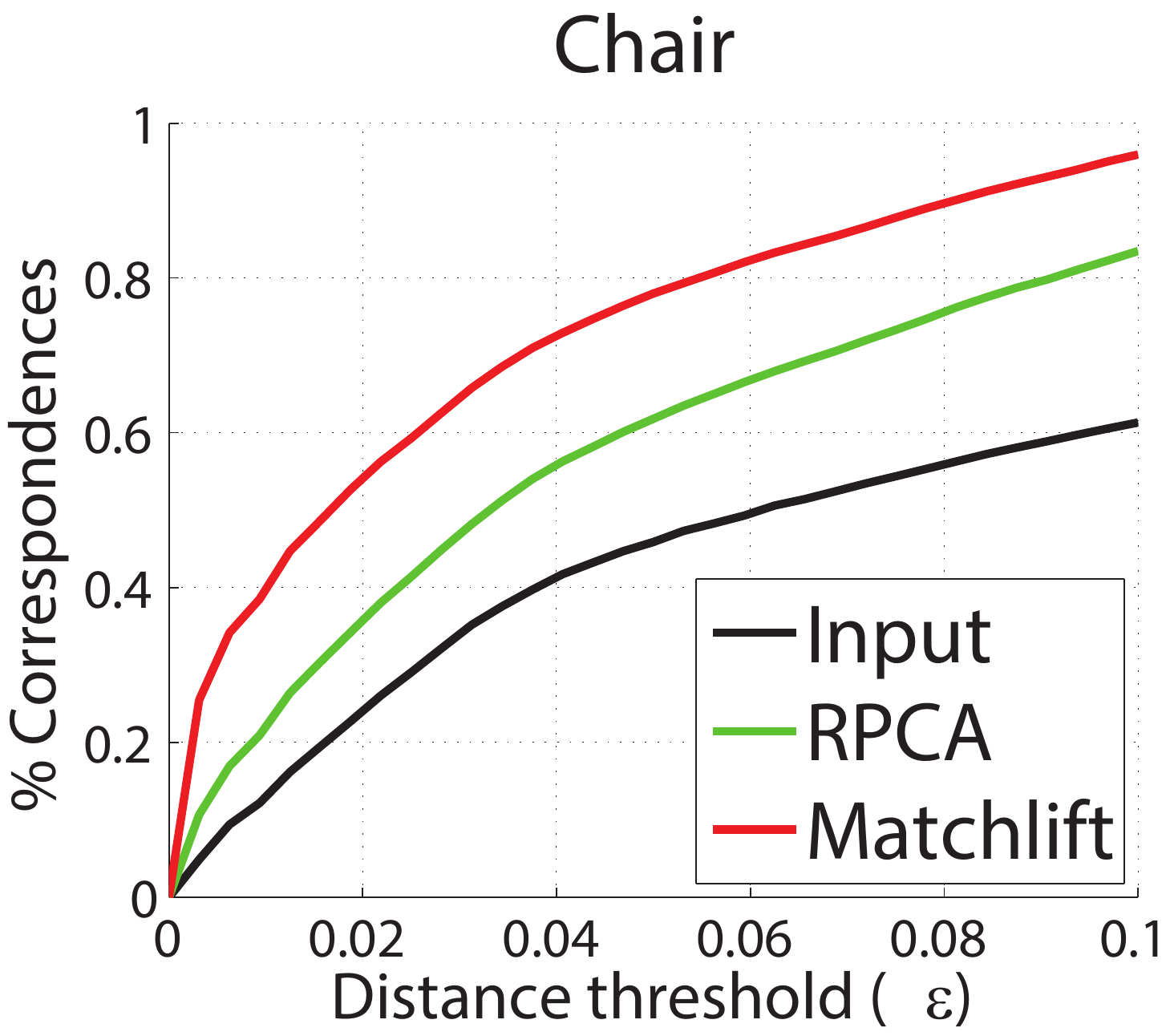} & \includegraphics[width=0.3\columnwidth]{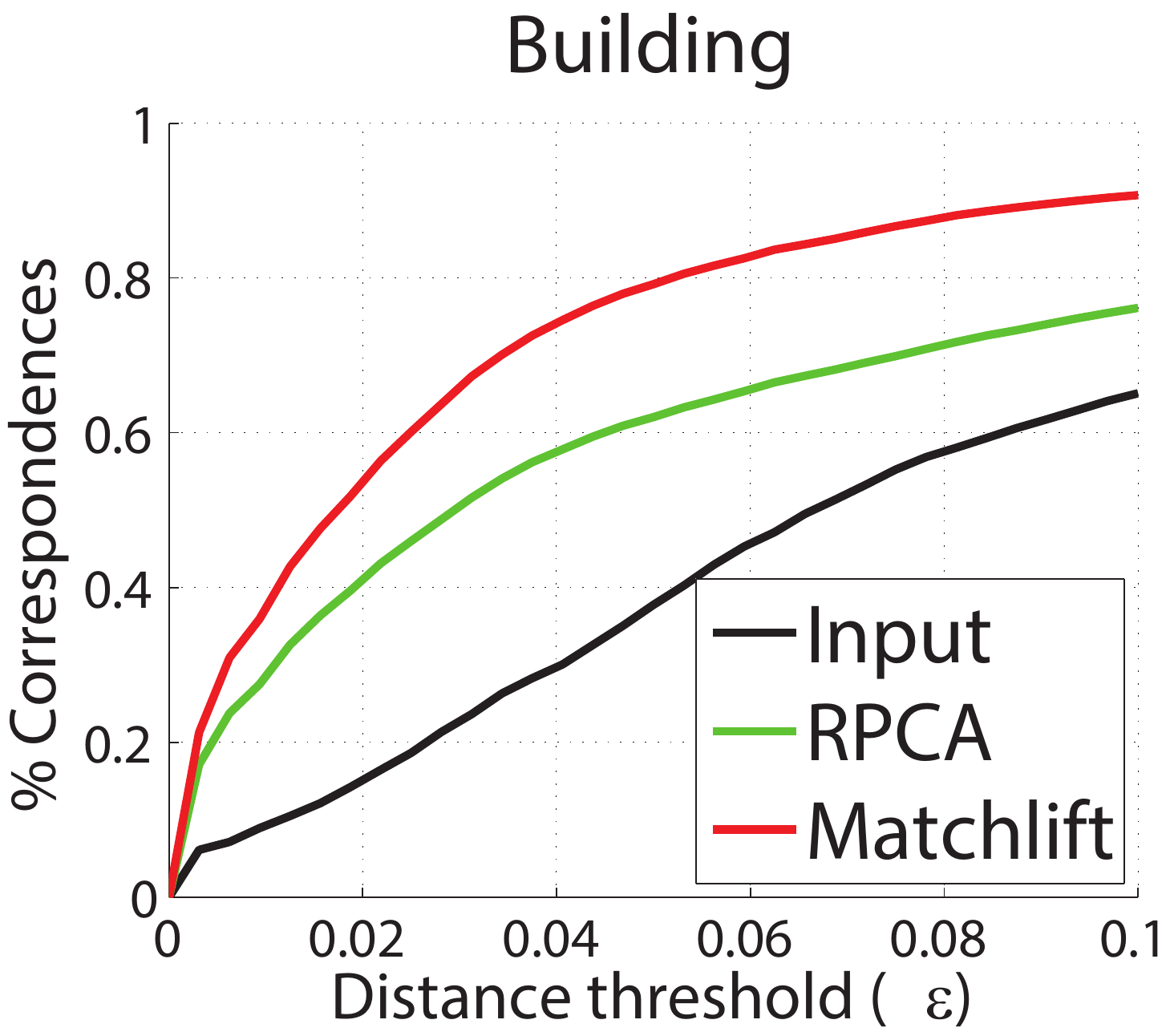}\tabularnewline
(a)  & (b) \tabularnewline
\includegraphics[width=0.3\columnwidth]{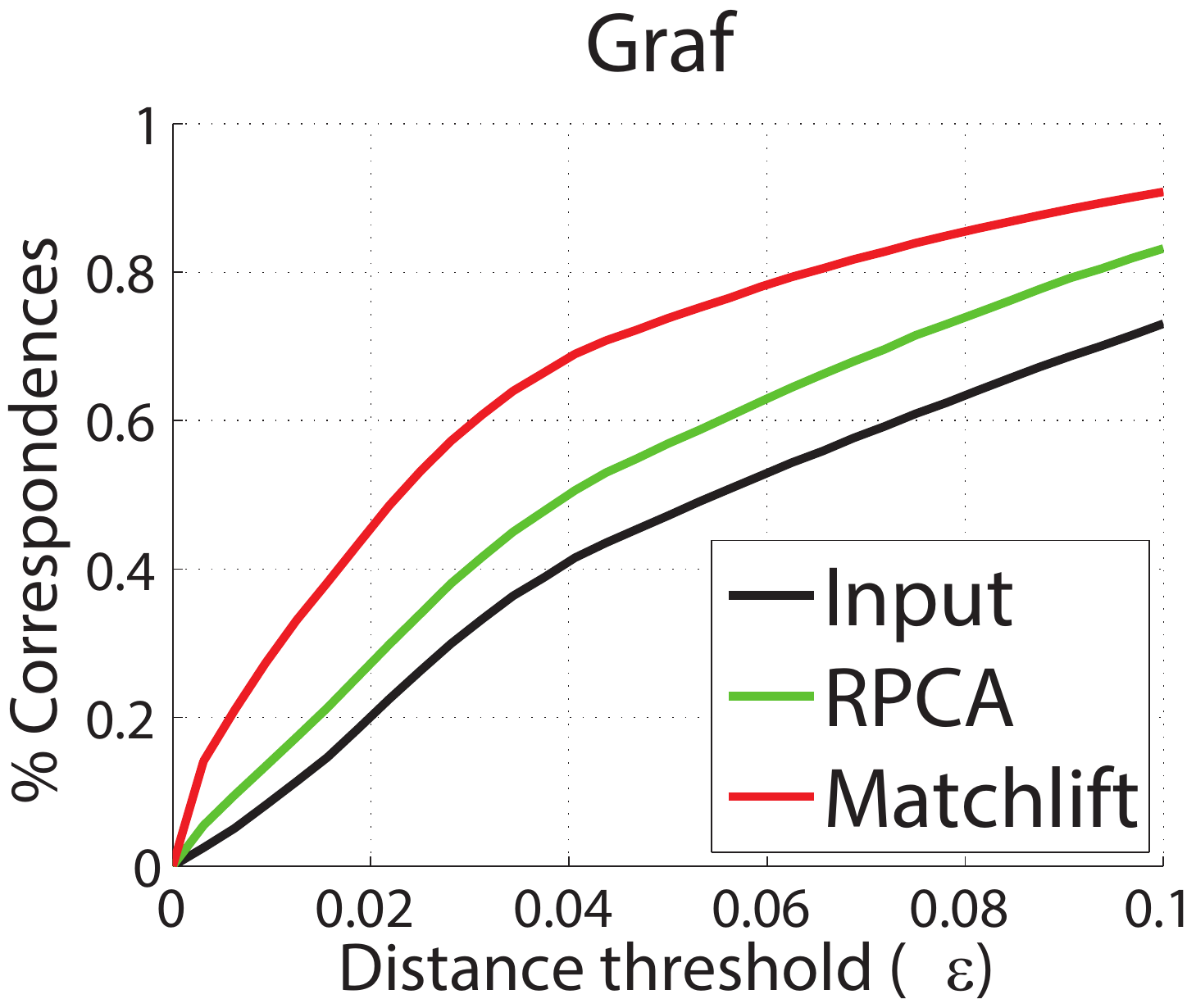} & \includegraphics[width=0.3\columnwidth]{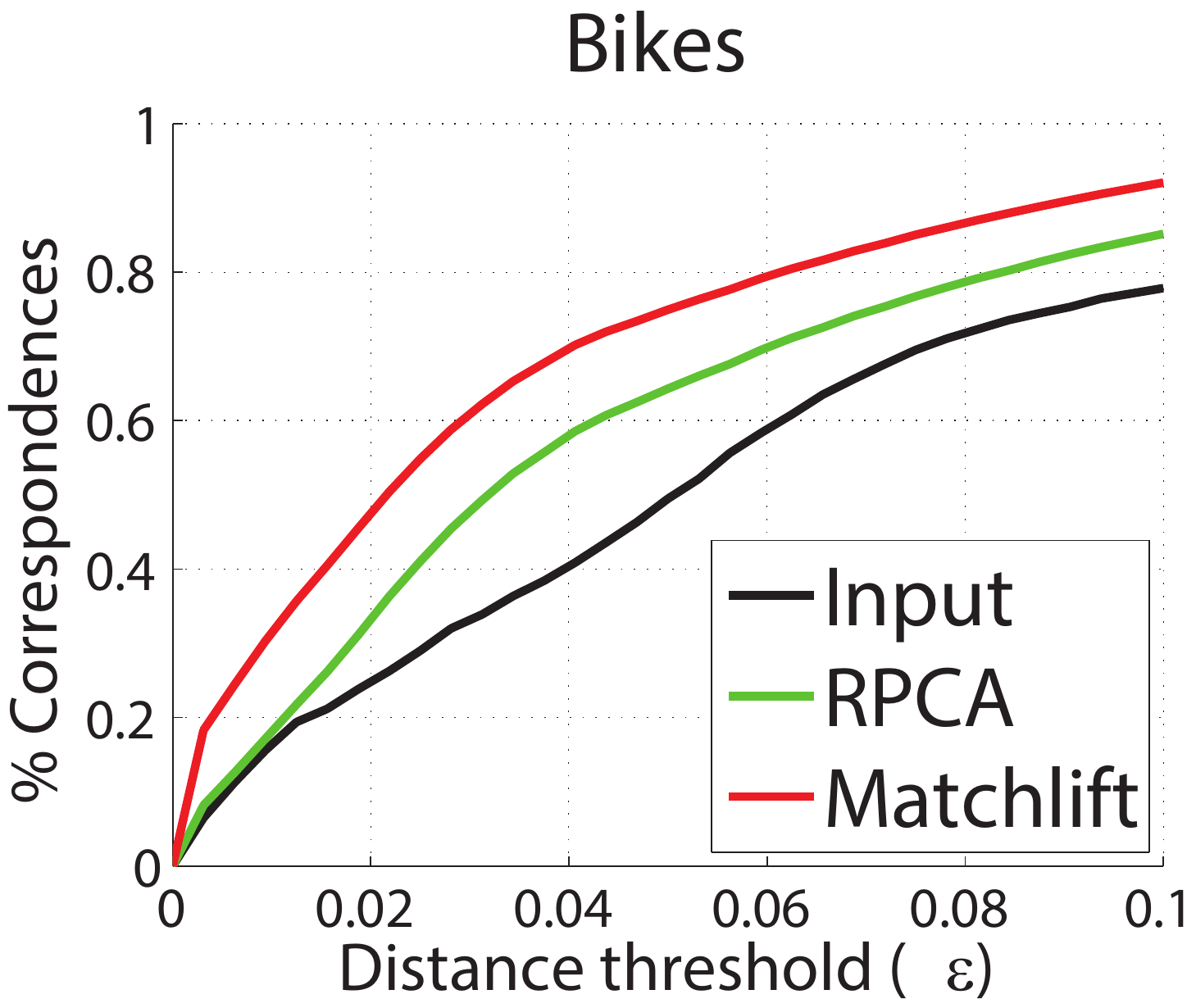}\tabularnewline
(c) & (d)\tabularnewline
\end{tabular}\caption{Percentages of ground-truth correspondences, whose distances to a
map collection are below a varying threshold $\epsilon$.}

\label{Fig:Benchmark1} 
\end{figure*}
\begin{table}
\centering%
\begin{tabular}{c|ccccc}
 & Input & MatchLift & RPCA & LearnI & LearnII\tabularnewline
\hline 
House & 68.2\% & 100\% & 92.2\% & 99.8\% & 96\%\tabularnewline
Hotel & 64.1\% & 100\% & 90.1\% & 99.8\% & 90\%\tabularnewline
\end{tabular}

\caption{Matching performance on the hotel and house datasets. We compare the
proposed MatchLift algorithm with Robust PCA (RPCA) and two learning
based graph matching methods: LearnI~\cite{leordeanu2012unsupervised}
and LearnII~\cite{caetano2009learning}.}

\label{Table:CMU} 
\end{table}

\item \textbf{Results.} Table~\ref{Table:CMU} shows the results of various
algorithms on CMU-House and CMU-Hotel. We can see that even with moderate
initial maps, MatchLift recovers all ground-truth correspondences.
In contrast, the method of~\cite{jalali2011clustering} can only
recover $92.2\%$ and $90.1\%$ ground-truth correspondences on CMU-House
and CMU-Hotel, respectively. Note that, MatchLift also outperforms
state-of-the-art learning based graph matching algorithms~\cite{caetano2009learning,leordeanu2012unsupervised}.
This shows the the advantage of joint object matching.

Figure~\ref{Fig:Comparison-CMU} and Figure~\ref{Fig:Benchmark1}
illustrate the results of MatchLift on Chair, Building, Graf and Bikes.
As these images contain noisy background information, the quality
of the input maps is lower than those on House and Hotel. Encouragingly,
MatchLift still recovers almost all manual correspondences. Moreover,
MatchLift significantly outperforms ~\cite{jalali2011clustering},
as the fault-tolerance rate of \cite{jalali2011clustering} is limited
by a small constant barrier.

Another interesting observation is that the improvements on Graf and
Bikes (each has 6 images) are lower than those on Chair and Building
(each has 16 images). This is consistent with the common knowledge
of data-driven effect, where large object collections possess stronger
self-correction power than small object collections. 

\end{itemize}

\section{Conclusions\label{sec:Conclusions}}

This paper delivers some encouraging news: given a few noisy object
matches computed in isolation, a collection of partially similar objects
can be accurately matched via semidefinite relaxation -- an approach
which provably works under dense errors. The proposed algorithm is
essentially parameter-free, and can be solved by ADMM achieving remarkable
efficiency and accuracy, with the assistance of a greedy rounding
strategy.

The proposed algorithm achieves near-optimal error-correction ability,
as it is guaranteed to work even when a dominant fraction of inputs
are corrupted. This in turn underscore the importance of joint object
matching: however low the quality of input sources is, perfect matching
is achievable as long as we obtain sufficiently many instances. Also,
while partial matching may incur much more severe input errors than
those occurring in full-similarity matching, in many situations, the
recovery ability of our algorithm is nearly the same as that in the
full-similarity case (up to some constant factor). In a broader sense,
our findings suggest that a large class of combinatorial / integer
programming problems might be solved perfectly by semidefinite relaxation.

\appendix

\section{Alternating Direction Method of Multipliers (ADMM)\label{sec:ADMM-Appendix}}

This section presents the procedure for the ADMM algorithm. For notational
simplicity, we represent the convex program as follows: 
\begin{align*}
\underset{\boldsymbol{X}}{\text{minimize}} & \quad\langle\boldsymbol{W},\overline{\boldsymbol{X}}\rangle & \textup{dual variable}\\
\subjectto & \quad\set{A}(\overline{\boldsymbol{X}})=\boldsymbol{b}, & \quad\boldsymbol{y}_{\set{A}}\\
 & \quad\overline{\boldsymbol{X}}\geq{\bf {\bf 0}}, & \quad\boldsymbol{Z}\geq{\bf {\bf 0}}\\
 & \quad\overline{\boldsymbol{X}}\succeq{\bf {\bf 0}}, & \quad\boldsymbol{S}\succeq{\bf {\bf 0}}
\end{align*}
where we denote $\overline{\boldsymbol{X}}:=\left[\begin{array}{cc}
m & {\bf 1}^{\top}\\
{\bf 1} & \boldsymbol{X}
\end{array}\right]$. The matrices and operators are defined as follows

(i) $\bs{W}$ encapsulate all block coefficient matrices $\bs{W}_{ij}$
for all $(i,j)\in\set{G}$;

(ii) $\set{A}(\overline{\boldsymbol{X}})=\boldsymbol{b}$ represents
the constraint that $\bs{X}_{ii}=\bs{I}_{m_{i}}$ ($1\leq i\leq n$)
and the constraint $\overline{\boldsymbol{X}}=\left[\begin{array}{cc}
m & {\bf 1}^{\top}\\
{\bf 1} & \boldsymbol{X}
\end{array}\right]$;

(iii) The variables on the right hand, i.e., $\bs{y}_{\set{A}},\bs{Z}$
and $\bs{S}$, represent dual variables associated with respective
constraints.

The Lagrangian associated with the convex program can be given as
follows
\begin{align*}
\mathcal{L} & =\left\langle \boldsymbol{W},\overline{\boldsymbol{X}}\right\rangle +\left\langle \boldsymbol{y}_{\mathcal{A}},\mathcal{A}(\overline{\boldsymbol{X}})-\boldsymbol{b}\right\rangle -\langle\boldsymbol{Z},\overline{\boldsymbol{X}}\rangle-\left\langle \boldsymbol{S},\overline{\boldsymbol{X}}\right\rangle \\
 & =\left\langle \boldsymbol{W}+\mathcal{A}^{*}(\boldsymbol{y}_{\mathcal{A}})-\boldsymbol{Z}-\boldsymbol{S},\overline{\boldsymbol{X}}\right\rangle -\left\langle \boldsymbol{b},\boldsymbol{y}_{\mathcal{A}}\right\rangle .
\end{align*}
where $\mathcal{A}^{*}$ denotes the conjugate operator w.r.t. an
operator $\mathcal{A}$. The augmented Lagrangian for the convex program
can now be written as 
\begin{align*}
\mathcal{L}_{1/\mu}= & \left\langle \boldsymbol{b},\boldsymbol{y}_{\mathcal{A}}\right\rangle +\left\langle \boldsymbol{Z}+\boldsymbol{S}-\boldsymbol{W}-\mathcal{A}^{*}(\boldsymbol{y}_{\mathcal{A}}),\overline{\boldsymbol{X}}\right\rangle \\
 & \quad\quad+\frac{1}{2\mu}\|\boldsymbol{Z}+\boldsymbol{S}-\boldsymbol{W}-\mathcal{A}^{*}(\boldsymbol{y}_{\mathcal{A}})\|_{\mathrm{F}}^{2}.
\end{align*}
Here, the linear terms above represent the negative standard Lagrangian,
whereas the quadratic parts represent the augmenting terms. $\mu$
is the penalty parameter that balances the standard Lagrangian and
the augmenting terms. The ADMM then proceeds by alternately optimizing
each primal and dual variable with others fixed, which results in
closed-form solution for each subproblem. Denote by superscript $k$
the iteration number, then we can present the ADMM iterative update
procedures as follows 
\begin{align}
\bs{y}_{\mathcal{A}}^{(k+1)}= & \left(\set{A}\set{A}^{*}\right)^{-1}\left\{ \set{A}\left(-\bs{W}+\bs{S}^{(k)}+\mu\overline{\boldsymbol{X}}^{(k)}+\bs{Z}^{(k)}\right)-\mu\bs{b}\right\} ,\nonumber \\
\bs{Z}^{(k+1)}= & \left(\bs{W}+\set{A}^{*}\left(\bs{y}_{\mathcal{A}}^{(k+1)}\right)-\bs{S}^{(k)}-\mu\overline{\boldsymbol{X}}^{(k)}\right)_{+},\nonumber \\
\bs{S}^{(k+1)}= & \text{ }\mathcal{P}_{\text{psd}}\left(\bs{W}+\set{A}^{*}\left(\vec{y}_{\mathcal{A}}^{(k+1)}\right)-\bs{Z}^{(k+1)}-\mu\overline{\boldsymbol{X}}^{(k)}\right),\\
\overline{\boldsymbol{X}}^{\left(k+1\right)}= & \text{ }\overline{\boldsymbol{X}}^{k}+\frac{1}{\mu}\left(\boldsymbol{Z}^{(k+1)}+\boldsymbol{S}^{\left(k+1\right)}-\boldsymbol{W}-\set{A}^{*}\left(\vec{y}_{\mathcal{A}}^{(k+1)}\right)\right)\\
= & -\frac{1}{\mu}\mathcal{P}_{\text{nsd}}\left(\bs{W}+\set{A}^{*}\left(\vec{y}_{\mathcal{A}}^{(k+1)}\right)-\bs{Z}^{(k+1)}-\mu\overline{\boldsymbol{X}}^{(k)}\right).
\end{align}
Here, the operator $\mathcal{P}_{\text{psd}}$ (resp. $\mathcal{P}_{\mathrm{nsd}}$)
denotes the projection onto the positive (resp. negative) semidefinite
cone, and $\left(\cdot\right)_{+}$ operator projects all entries
of a vector / matrix to non-negative values. Within a reasonable amount
of time, ADMM typically returns moderately acceptable results.

\section{Proof of Theorem \ref{thm:SpectralMethod}\label{sec:Proof_thm:SpectralMethod-m}}

The key step to the proof of Theorem \ref{thm:SpectralMethod} is
to show that the set of outliers, even when they account for a dominant
portion of the input matrix, behave only as a small perturbation to
the spectrum of the non-corrupted components. Under the randomized
model described in Section \ref{sub:Randomized-Model}, it can be
easily seen that the trimming procedure is not invoked with high probability.
Consequently, Theorem \ref{thm:SpectralMethod} can be established
through the following lemma.

\begin{lem}\label{lemma:SpectralMethod-m}Given any set of $n$ permutation
matrices $\boldsymbol{P}_{i}\in\mathbb{R}^{m\times m}$ ($1\leq i\leq n$),
generate a random matrix $\boldsymbol{M}$ via the following procedure.
\begin{enumerate}
\item Generate a symmetric block matrix $\boldsymbol{A}=\left[\boldsymbol{A}_{ij}\right]_{1\leq i,j\leq n}$
such that 
\[
\boldsymbol{A}_{ii}=\boldsymbol{I},\quad1\leq i\leq n
\]
and for all $i<j$,
\begin{equation}
\boldsymbol{A}_{ij}=\begin{cases}
{\bf 0}, & \text{if }\mu_{ij}=0,\\
\boldsymbol{P}_{i}\boldsymbol{P}_{j}^{\top},\quad & \text{if }\nu_{ij}=1\text{ and }\mu_{ij}=1,\\
\boldsymbol{U}_{ij}, & \text{else},
\end{cases}
\end{equation}
where $\nu_{ij}\sim\text{Bernoulli}\left(p\right)$ and $\mu_{ij}\sim\text{Bernoulli}\left(\tau\right)$
are independent binary variables, and $\boldsymbol{U}_{ij}\in\mathbb{R}^{m\times m}$
are independent random permutation matrices obeying $\mathbb{E}\boldsymbol{U}_{ij}=\frac{1}{m}{\bf 1}_{m}\cdot{\bf 1}_{m}^{\top}$. 
\item $\boldsymbol{M}$ is a principal minor of $\boldsymbol{A}$ from rows
/ columns at indices from a set $I\subseteq\left\{ 1,2,\cdots,mn\right\} $,
where each $1\leq i\leq mn$ is contained in $I$ independently with
probability $q$.
\end{enumerate}
Then there exist absolute constants $c_{1},c_{2}>0$ such that if
$p\geq c_{1}\frac{\log^{2}\left(mn\right)}{q\sqrt{\tau n}}$, one
has
\begin{equation}
\begin{cases}
\lambda_{i}\left(\boldsymbol{M}\right) & \geq\left(1-\frac{1}{\log\left(mn\right)}\right)\tau pqn,\quad\quad\quad\quad\text{if }1\leq i\leq m\\
\lambda_{i}\left(\boldsymbol{M}\right) & \leq c_{2}\sqrt{\tau n}\log\left(mn\right)<\frac{\tau pqn}{\log\left(mn\right)},\quad\text{if }i>m
\end{cases}\label{eq:EigenvalueGapSpectralMethod}
\end{equation}
with probability exceeding $1-\frac{1}{m^{5}n^{5}}$. Here, $\lambda_{i}(\boldsymbol{M})$
represents the $i$th largest eigenvalue of $\boldsymbol{M}$.\end{lem}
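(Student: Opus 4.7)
The plan is to decompose $\boldsymbol{M}=\mathbb{E}[\boldsymbol{M}\mid I]+(\boldsymbol{M}-\mathbb{E}[\boldsymbol{M}\mid I])$, identify a rank-$m$ dominant signal in the conditional mean whose top-$m$ eigenvalues sit at $(1-o(1))\tau pqn$, and then control the fluctuation term in spectral norm by a matrix Bernstein inequality, yielding a bound of order $\sqrt{\tau n}\log(mn)$. A final application of Weyl's inequality transfers the spectral gap to $\boldsymbol{M}$ itself. Throughout, I would condition on the subsampling set $I$, using a scalar Chernoff bound to ensure that $|I|=\Theta(qmn)$ with probability at least $1-(mn)^{-6}$.

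Defining $\boldsymbol{Y}:=(\boldsymbol{P}_{1}^{\top},\ldots,\boldsymbol{P}_{n}^{\top})^{\top}$ and writing out the block means for $i\neq j$ gives $\mathbb{E}[\boldsymbol{A}_{ij}]=\tau p\,\boldsymbol{P}_{i}\boldsymbol{P}_{j}^{\top}+\tau(1-p)\tfrac{1}{m}\boldsymbol{1}\boldsymbol{1}^{\top}$, while $\boldsymbol{A}_{ii}=\boldsymbol{I}$. Restricting to $I$ and collecting terms,
\begin{equation*}
\mathbb{E}\bigl[\boldsymbol{M}\mid I\bigr]=\tau p\,\boldsymbol{Y}_{I}\boldsymbol{Y}_{I}^{\top}+\tau(1-p)\tfrac{1}{m}\boldsymbol{1}_{I}\boldsymbol{1}_{I}^{\top}+\boldsymbol{E},
\end{equation*}
where $\boldsymbol{E}$ is block-diagonal with spectral norm at most $1$. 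The crucial structural remark is that every row of $\boldsymbol{Y}$ sums to one, so $\boldsymbol{1}_{I}$ lies in the column span of $\boldsymbol{Y}_{I}$; the rank-$1$ term therefore contributes only to the top-$m$ eigendirections and cannot pollute $\lambda_{m+1}(\mathbb{E}[\boldsymbol{M}\mid I])$. Moreover, $\boldsymbol{Y}_{I}^{\top}\boldsymbol{Y}_{I}$ is diagonal with $j$-th entry $\sum_{i=1}^{n}\boldsymbol{1}_{\pi_{i}^{-1}(j)\in I}$, a sum of $n$ independent $\mathrm{Bernoulli}(q)$'s with mean $qn$, so a scalar Chernoff bound plus a union bound over $j\in[m]$ yields $\boldsymbol{Y}_{I}^{\top}\boldsymbol{Y}_{I}\succeq (1-1/(2\log(mn)))\,qn\boldsymbol{I}_{m}$ with very high probability. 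Combined with $\|\boldsymbol{E}\|\leq 1\ll\tau pqn$, this gives $\lambda_{m}(\mathbb{E}[\boldsymbol{M}\mid I])\geq(1-1/\log(mn))\,\tau pqn$ and $\lambda_{m+1}(\mathbb{E}[\boldsymbol{M}\mid I])\leq 1$.

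For the fluctuation, write $\boldsymbol{M}-\mathbb{E}[\boldsymbol{M}\mid I]=\sum_{i<j,\,i,j\in I}\boldsymbol{Z}_{ij}$, where $\boldsymbol{Z}_{ij}$ is the symmetric mean-zero matrix supported on the $(i,j)$ and $(j,i)$ blocks. The summands are independent with $\|\boldsymbol{Z}_{ij}\|\leq 2$, and using $\boldsymbol{M}_{ij}\boldsymbol{M}_{ij}^{\top}\in\{\boldsymbol{0},\boldsymbol{I}\}$ (with mean $\tau\boldsymbol{I}$), a direct calculation shows that the matrix variance parameter is bounded by $\sigma^{2}=\bigl\|\sum_{i<j}\mathbb{E}[\boldsymbol{Z}_{ij}^{2}]\bigr\|\leq C\tau n$. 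Tropp's non-commutative Bernstein inequality then delivers $\|\boldsymbol{M}-\mathbb{E}[\boldsymbol{M}\mid I]\|\leq c_{2}\sqrt{\tau n}\log(mn)$ with probability at least $1-(mn)^{-6}$. Weyl's inequality, together with the hypothesis $p\geq c_{1}\log^{2}(mn)/(q\sqrt{\tau n})$ --- which is calibrated precisely so that $(1-1/\log(mn))\tau pqn>2c_{2}\sqrt{\tau n}\log(mn)$ --- separates the top-$m$ eigenvalues of $\boldsymbol{M}$ from the remaining ones and, after a union bound, establishes \eqref{eq:EigenvalueGapSpectralMethod}.

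The main technical obstacle is managing the interplay of the two independent randomness sources (the outlier variables $(\mu_{ij},\nu_{ij},\boldsymbol{U}_{ij})$ and the subsampling indicators $\boldsymbol{1}_{i\in I}$) without incurring spurious logarithmic factors: $\mathbb{E}[\boldsymbol{M}\mid I]$ is itself random in $I$, and one must verify that the scalar Chernoff event controlling $\boldsymbol{Y}_{I}^{\top}\boldsymbol{Y}_{I}$ and the matrix Bernstein event controlling the outlier fluctuation occur simultaneously on a single high-probability event. A subtler point is that when $p$ is small, the norm $\tau(1-p)n$ of the mean-of-uniform-permutations term can vastly exceed the signal scale $\tau pqn$, and the argument succeeds only because this term is supported entirely in the column space of $\boldsymbol{Y}_{I}$ and thus shifts one of the top-$m$ eigenvalues upward instead of creating a large $(m+1)$-st eigenvalue.
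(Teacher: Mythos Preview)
Your proposal is correct and follows the same overarching strategy as the paper---split into a deterministic rank-$m$ ``mean'' piece and a mean-zero ``fluctuation,'' bound the latter in operator norm, and transfer the spectral gap via Weyl---but the execution differs in two places. For the mean, the paper first reduces (without loss of generality) to $\boldsymbol{P}_i=\boldsymbol{I}$, then rearranges $\boldsymbol{Y}_I^{\mathrm{mean}}$ into an explicit $(m{+}1)\times(m{+}1)$ matrix and computes all its eigenvalues exactly via a Schur-complement argument; your column-span observation $\boldsymbol{1}_I=\boldsymbol{Y}_I\boldsymbol{1}_m$ together with the diagonal Gram concentration $\boldsymbol{Y}_I^\top\boldsymbol{Y}_I\approx qn\boldsymbol{I}_m$ reaches the same conclusion more directly. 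For the fluctuation, the paper bounds $\|\boldsymbol{Y}^{\mathrm{var}}\|$ on the \emph{full} $mn\times mn$ matrix (before subsampling) via a moment-method lemma (their Lemma~\ref{lemma:MomentMethod}) and then passes to the principal submatrix, whereas you condition on $I$ and apply matrix Bernstein to the subsampled blocks. Both routes deliver the same $O(\sqrt{\tau n}\log(mn))$ bound; your version avoids the paper's auxiliary moment lemma at the cost of tracking the block variance a bit more carefully after restriction to $I$ (note that after subsampling, $\boldsymbol{M}_{ij}\boldsymbol{M}_{ij}^\top$ is a $0$--$1$ diagonal, not literally $\boldsymbol{0}$ or $\boldsymbol{I}$, but the bound $\mathbb{E}[\cdot\mid I]\preceq\tau\boldsymbol{I}$ still holds). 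One small notational slip: your sum $\sum_{i<j,\,i,j\in I}\boldsymbol{Z}_{ij}$ should range over block indices $i<j$ in $[n]$, not over elements of the row-index set $I$.
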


\begin{proof}[Proof of Lemma \ref{lemma:SpectralMethod-m}]Without
loss of generality, we assume that $\boldsymbol{P}_{i}=\boldsymbol{I}_{m}$
for all $1\leq i\leq n$, since rearranging rows / columns of $\boldsymbol{A}$
does not change its eigenvalues. For convenience of presentation,
we write $\boldsymbol{A}=\boldsymbol{Y}+\boldsymbol{Z}$ such that
\[
\boldsymbol{Y}_{ii}=\tau\left(\frac{\left(1-p\right)}{m}{\bf 1}_{m}\cdot{\bf 1}_{m}^{\top}+p\boldsymbol{I}_{m}\right),\quad1\leq i\leq n
\]
and for all $1\leq i\leq j\leq n$: 
\begin{equation}
\boldsymbol{Y}_{ij}=\begin{cases}
{\bf 0}, & \text{if }\mu_{ij}=0,\\
\boldsymbol{I}_{m},\quad & \text{if }\nu_{ij}=1\text{ and }\mu_{ij}=1,\\
\boldsymbol{U}_{ij}, & \text{else}.
\end{cases}
\end{equation}
This means that
\begin{equation}
\boldsymbol{Z}_{ij}=\begin{cases}
\boldsymbol{I}_{m}-\boldsymbol{Y}_{ii},\quad & \text{if }i=j,\\
{\bf 0}, & \text{else}.
\end{cases}
\end{equation}
Apparently, $\boldsymbol{Z}$ is a block diagonal matrix satisfying
\begin{equation}
\left\Vert \boldsymbol{Z}\right\Vert \leq2,
\end{equation}
which is only a mild perturbation of $\boldsymbol{Y}$. This way we
have reduced to the case where all blocks (including diagonal blocks)
are i.i.d., which is slightly more convenient to analyze. 

Decompose $\boldsymbol{Y}$ into 2 components $\boldsymbol{Y}=\boldsymbol{Y}^{\mathrm{mean}}+\boldsymbol{Y}^{\mathrm{var}}$
such that 
\begin{equation}
\forall1\leq i\leq j\leq n:\quad\boldsymbol{Y}_{ij}^{\mathrm{mean}}=\tau\left(\frac{\left(1-p\right)}{m}{\bf 1}_{m}\cdot{\bf 1}_{m}^{\top}+p\boldsymbol{I}_{m}\right),
\end{equation}
\begin{equation}
\forall1\leq i\leq n:\quad\boldsymbol{Y}_{ii}^{\mathrm{var}}={\bf 0},
\end{equation}
and
\begin{equation}
\forall1\leq i<j\leq n:\quad\boldsymbol{Y}_{ij}^{\mathrm{var}}=\begin{cases}
-\tau\left(\frac{\left(1-p\right)}{m}{\bf 1}_{m}\cdot{\bf 1}_{m}^{\top}+p\boldsymbol{I}_{m}\right),\quad & \text{if }\mu_{ij}=0,\\
\left(1-\tau p\right)\boldsymbol{I}_{m}-\frac{\left(1-p\right)}{m}{\bf 1}_{m}\cdot{\bf 1}_{m}^{\top},\quad & \text{if }\nu_{ij}=1\text{ and }\mu_{ij}=1,\\
\boldsymbol{U}_{ij}-\tau\left(\frac{\left(1-p\right)}{m}{\bf 1}_{m}\cdot{\bf 1}_{m}^{\top}+p\boldsymbol{I}_{m}\right), & \text{else}.
\end{cases}
\end{equation}
In other words, $\boldsymbol{Y}^{\mathrm{mean}}$ represents the mean
component of $\boldsymbol{Y}$, while $\boldsymbol{Y}^{\mathrm{var}}$
comprises all variations around the mean component. It is straightforward
to check that 
\[
\boldsymbol{Y}^{\mathrm{mean}}\succeq{\bf 0},\quad\mathrm{rank}\left(\boldsymbol{Y}^{\mathrm{mean}}\right)\leq m+1.
\]
If we denote by $\boldsymbol{Y}_{I}^{\mathrm{mean}}$ the principal
minor coming from the rows and columns of $\boldsymbol{Y}$ at indices
from $I$, then from Weyl's inequality one can easily see that
\begin{equation}
\lambda_{i}\left(\boldsymbol{M}\right)\geq\lambda_{i}\left(\boldsymbol{Y}_{I}^{\mathrm{mean}}\right)-\left\Vert \boldsymbol{Y}^{\mathrm{var}}\right\Vert -\left\Vert \boldsymbol{Z}\right\Vert \geq\lambda_{i}\left(\boldsymbol{Y}_{I}^{\mathrm{mean}}\right)-\left\Vert \boldsymbol{Y}^{\mathrm{var}}\right\Vert -2,\quad1\leq i\leq m\label{eq:Evalue-M-LB}
\end{equation}
and
\begin{equation}
\lambda_{i}\left(\boldsymbol{M}\right)\leq\lambda_{i}\left(\boldsymbol{Y}_{I}^{\mathrm{mean}}\right)+\left\Vert \boldsymbol{Y}^{\mathrm{var}}\right\Vert +\left\Vert \boldsymbol{Z}\right\Vert \leq\lambda_{i}\left(\boldsymbol{Y}_{I}^{\mathrm{mean}}\right)+\left\Vert \boldsymbol{Y}^{\mathrm{var}}\right\Vert +2,\quad\quad i>m.\label{eq:Evalue-M-UB}
\end{equation}
In light of this, it suffices to evaluate $\left\Vert \boldsymbol{Y}^{\mathrm{var}}\right\Vert $
as well as the eigenvalues of $\boldsymbol{Y}_{I}^{\mathrm{mean}}$.

We are now in position to quantify the eigenvalues of $\boldsymbol{Y}_{I}^{\mathrm{mean}}$.
Without affecting its eigenvalue distribution, one can rearrange the
rows / columns of $\boldsymbol{Y}_{I}^{\mathrm{mean}}$ so that
\begin{equation}
\boldsymbol{Y}_{I}^{\mathrm{mean}}\overset{\text{\footnotesize(permutation)}}{=}\tau p\left[\begin{array}{ccc}
{\bf 1}_{n_{1}}\cdot{\bf 1}_{n_{1}}^{\top}\\
 & \ddots\\
 &  & {\bf 1}_{n_{m}}\cdot{\bf 1}_{n_{m}}^{\top}
\end{array}\right]+\frac{\tau\left(1-p\right)}{m}{\bf 1}_{N}\cdot{\bf 1}_{N}^{\top}.\label{eq:Ymean_rearrange}
\end{equation}
Here, $n_{i}$ ($1\leq i\leq m$) denotes the cardinality of a set
$I_{i}$ generated by independently sampling $n$ elements each with
probability $q$, and we set $N:=n_{1}+\cdots+n_{m}$ for simplicity.
From Bernstein inequality, there exist universal constants $c_{5},c_{6}>0$
such that if $q>\frac{c_{5}\log\left(mn\right)}{n}$, then
\begin{equation}
\left|n_{i}-nq\right|\leq c_{6}\sqrt{nq\log\left(mn\right)},\quad1\leq i\leq m\label{eq:Concentration-ni}
\end{equation}
holds with probability exceeding $1-\left(mn\right)^{-10}$. 

Since $\boldsymbol{Y}_{I}^{\mathrm{mean}}$ is positive semidefinite,
from (\ref{eq:Ymean_rearrange}) one can easily check that all non-zero
eigenvalues of $\boldsymbol{Y}_{I}^{\mathrm{mean}}$ are also eigenvalues
of the following $(m+1)\times(m+1)$ matrix
\begin{align}
\footnotesize\overline{\boldsymbol{Y}}_{I}^{\mathrm{mean}}: & =\footnotesize\tau\left[\begin{array}{c}
\begin{array}{cccc}
\sqrt{p}{\bf 1}_{n_{1}}^{\top}\\
 & \sqrt{p}{\bf 1}_{n_{2}}^{\top}\\
 &  & \ddots\\
 &  &  & \sqrt{p}{\bf 1}_{n_{m}}^{\top}
\end{array}\\
\sqrt{\frac{1-p}{m}}{\bf 1}_{N}^{\top}
\end{array}\right]\left[\begin{array}{cc}
\begin{array}{c}
\begin{array}{cccc}
\sqrt{p}{\bf 1}_{n_{1}}\\
 & \sqrt{p}{\bf 1}_{n_{2}}\\
 &  & \ddots\\
 &  &  & \sqrt{p}{\bf 1}_{n_{m}}
\end{array}\end{array} & \sqrt{\frac{1-p}{m}}{\bf 1}_{N}\end{array}\right]\nonumber \\
 & =\footnotesize\tau\left[\begin{array}{ccccc}
pn_{1} &  &  &  & \sqrt{\frac{p\left(1-p\right)}{m}}n_{1}\\
 & pn_{2} &  &  & \sqrt{\frac{p\left(1-p\right)}{m}}n_{2}\\
 &  & \ddots &  & \vdots\\
 &  &  & pn_{m} & \sqrt{\frac{p\left(1-p\right)}{m}}n_{m}\\
\sqrt{\frac{p\left(1-p\right)}{m}}n_{1} & \sqrt{\frac{p\left(1-p\right)}{m}}n_{2} & \cdots & \sqrt{\frac{p\left(1-p\right)}{m}}n_{m} & \frac{1-p}{m}N
\end{array}\right]\\
 & =\footnotesize\underset{\overline{\boldsymbol{Y}}_{I,0}}{\underbrace{\tau qn\left[\begin{array}{cccc}
p &  &  & \sqrt{\frac{p\left(1-p\right)}{m}}\\
 & \ddots &  & \vdots\\
 &  & p & \sqrt{\frac{p\left(1-p\right)}{m}}\\
\sqrt{\frac{p\left(1-p\right)}{m}} & \cdots & \sqrt{\frac{p\left(1-p\right)}{m}} & 1-p
\end{array}\right]}}+\tau\underset{\overline{\boldsymbol{Y}}_{I,\Delta}}{\underbrace{\left[\begin{array}{cccc}
p\Delta_{1} &  &  & \sqrt{\frac{p\left(1-p\right)}{m}}\Delta_{1}\\
 & \ddots &  & \vdots\\
 &  & p\Delta_{m} & \sqrt{\frac{p\left(1-p\right)}{m}}\Delta_{m}\\
\sqrt{\frac{p\left(1-p\right)}{m}}\Delta_{1} & \cdots & \sqrt{\frac{p\left(1-p\right)}{m}}\Delta_{m} & \frac{1-p}{m}\Delta_{N}
\end{array}\right]}},\label{eq:Yi_mean_decomposition}
\end{align}
where 
\[
\Delta_{i}=n_{i}-nq\quad1\leq i\leq m,
\]
and 
\[
\Delta_{N}=N-qnm,
\]
which satisfies $\left|\Delta_{N}\right|\leq m\max_{1\leq i\leq m}\left|\Delta_{i}\right|$. 

By Schur complement condition for positive definite matrices \cite{boyd2004convex},
if $\left[\begin{array}{cc}
\boldsymbol{C} & \boldsymbol{B}\\
\boldsymbol{B}^{\top} & \boldsymbol{D}
\end{array}\right]\succ{\bf 0}$, then $\boldsymbol{C}\succ{\bf 0}$ and $\boldsymbol{D}-\boldsymbol{B}^{\top}\boldsymbol{C}^{-1}\boldsymbol{B}\succ{\bf 0}$.
Applying this condition to $\overline{\boldsymbol{Y}}_{I,0}$ suggests
that $\overline{\boldsymbol{Y}}_{I,0}\succ{\bf 0}$ can only hold
when
\[
\left(1-p\right)-\frac{p\left(1-p\right)}{m}\frac{1}{p}{\bf 1}_{m}^{\top}\cdot{\bf 1}_{m}>0,
\]
which however cannot be satisfied since $\left(1-p\right)-\frac{p\left(1-p\right)}{m}\frac{1}{p}{\bf 1}_{m}^{\top}\cdot{\bf 1}_{m}=0$.
Thus, $\overline{\boldsymbol{Y}}_{I,0}$ is rank deficient. 

In fact, all non-zero eigenvalues of $\overline{\boldsymbol{Y}}_{I,0}$
can be quantified as well. Specifically, for any vector
\[
\boldsymbol{z}_{i}:=\boldsymbol{e}_{i}-\frac{1}{m}\left[\begin{array}{c}
{\bf 1}_{m}\\
0
\end{array}\right],\quad1\leq i\leq m-1,
\]
one can compute
\begin{equation}
\overline{\boldsymbol{Y}}_{I,0}\cdot\boldsymbol{z}_{i}=\left(\tau qpn\right)\boldsymbol{z}_{i},\quad1\leq i\leq m-1.
\end{equation}
That said, $\tau qpn$ is an eigenvalue of $\overline{\boldsymbol{Y}}_{I,0}$
with multiplicity $m-1$. On the other hand, we have 
\begin{equation}
\begin{cases}
\overline{\boldsymbol{Y}}_{I,0}\cdot\left[\begin{array}{c}
{\bf 1}_{m}\\
\sqrt{\frac{\left(1-p\right)m}{p}}
\end{array}\right] & =\tau qn\left[\begin{array}{c}
{\bf 1}_{m}\\
\sqrt{\frac{\left(1-p\right)m}{p}}
\end{array}\right],\\
\boldsymbol{z}_{i}^{\top}\cdot\left[\begin{array}{c}
{\bf 1}_{m}\\
\sqrt{\frac{\left(1-p\right)m}{p}}
\end{array}\right] & =0,
\end{cases}
\end{equation}
indicating that $\tau qn$ is another eigenvalue of $\overline{\boldsymbol{Y}}_{I,0}$.
Putting these together yields
\begin{equation}
\lambda_{i}\left(\overline{\boldsymbol{Y}}_{I,0}\right)=\begin{cases}
\tau qn, & i=1\\
\tau pqn,\quad & 2\leq i\leq m,\\
0, & i>m.
\end{cases}\label{eq:evalue_Yi0}
\end{equation}

Furthermore, the residual component $\overline{\boldsymbol{Y}}_{I,\Delta}$
can be bounded as follows 
\begin{align*}
\left\Vert \overline{\boldsymbol{Y}}_{I,\Delta}\right\Vert  & \leq\footnotesize\tau\left\Vert \left[\begin{array}{cccc}
p\Delta_{1}\\
 & \ddots\\
 &  & p\Delta_{m}\\
 &  &  & \frac{1-p}{m}\Delta_{N}
\end{array}\right]\right\Vert +\tau\left\Vert \left[\begin{array}{cccc}
0 &  &  & \sqrt{\frac{p\left(1-p\right)}{m}}\Delta_{1}\\
 & \ddots &  & \vdots\\
 &  & 0 & \sqrt{\frac{p\left(1-p\right)}{m}}\Delta_{m}\\
\sqrt{\frac{p\left(1-p\right)}{m}}\Delta_{1} & \cdots & \sqrt{\frac{p\left(1-p\right)}{m}}\Delta_{m} & 0
\end{array}\right]\right\Vert _{\mathrm{F}}\\
 & \leq\tau\max\left\{ p\max_{1\leq i\leq m}\left|\Delta_{i}\right|,\frac{1-p}{m}\left|\Delta_{N}\right|\right\} +\tau\sqrt{2p\left(1-p\right)}\max_{1\leq i\leq m}\left|\Delta_{i}\right|\\
 & \leq2\tau\max_{1\leq i\leq m}\left|\Delta_{i}\right|\leq2c_{6}\tau\sqrt{nq\log\left(mn\right)},
\end{align*}
where the last inequality follows from (\ref{eq:Concentration-ni}).
This taken collectively with (\ref{eq:Yi_mean_decomposition}) and
(\ref{eq:evalue_Yi0}) yields that: when $p>\frac{2c_{6}\log^{2}\left(mn\right)}{\sqrt{nq}}$
or, equivalently, when $2c_{6}\sqrt{nq\log\left(mn\right)}<\frac{1}{\log^{1.5}\left(mn\right)}npq$,
one has
\begin{equation}
\begin{cases}
\lambda_{i}\left(\boldsymbol{Y}_{I}^{\mathrm{mean}}\right) & \geq\left(1-\frac{1}{\log^{\frac{3}{2}}\left(mn\right)}\right)\tau pqn,\quad\quad\quad\quad1\leq i\leq m,\\
\lambda_{i}\left(\boldsymbol{Y}_{I}^{\mathrm{mean}}\right) & \leq2c_{6}\tau\sqrt{nq\log\left(mn\right)}\leq\frac{1}{\log^{\frac{3}{2}}\left(mn\right)}\tau pqn,\quad\quad i>m.
\end{cases}\label{eq:EvalueYiMeanBound}
\end{equation}

Furthermore, observe that $\mathbb{E}\boldsymbol{Y}_{ij}^{\mathrm{var}}=0$,
$\mathbb{E}\left\Vert \frac{1}{2\sqrt{\tau}}\boldsymbol{Y}_{ij}^{\mathrm{var}}\right\Vert ^{2}\leq1$,
and $\frac{1}{2\sqrt{\tau}}\left\Vert \boldsymbol{Y}_{ij}^{\mathrm{var}}\right\Vert \leq\frac{1}{\sqrt{\tau}}$.
When $\tau>\frac{1}{n}$, Lemma \ref{lemma:MomentMethod} yields that
\begin{equation}
\left\Vert \boldsymbol{Y}^{\mathrm{var}}\right\Vert \leq2c_{0}\sqrt{\tau n}\log\left(mn\right)\label{eq:Yvar_UB}
\end{equation}
with probability at least $1-(mn)^{-5}$. Hence, $\left\Vert \boldsymbol{Y}^{\mathrm{var}}\right\Vert =o\left(\tau pqn\right)$
if $p>\frac{c_{10}\log^{2}n}{q\sqrt{\tau n}}$ for some constant $c_{10}>0$.

Finally, the claim follows by substituting (\ref{eq:EvalueYiMeanBound})
and (\ref{eq:Yvar_UB}) into (\ref{eq:Evalue-M-LB}) and (\ref{eq:Evalue-M-UB}).\end{proof}

\section{Proof of Theorem \ref{thm:RandomGraph-1} \label{sec:Proof-of-Theorem-RandomGraph}}

To prove Theorem \ref{thm:RandomGraph-1}, we first analyze the Karush\textendash{}Kuhn\textendash{}Tucker
(KKT) condition for exact recovery, which provides a sufficient and
almost necessary condition for uniqueness and optimality. Valid dual
certificates are then constructed to guarantee exact recovery.

\subsection{Preliminaries and Notations\label{sec:Prelim}}

Without loss of generality, we can treat $\boldsymbol{X}^{\mathrm{gt}}$
as a sub-matrix of an augmented square matrix $\boldsymbol{X}_{\mathrm{sup}}^{\mathrm{gt}}$
such that 
\begin{equation}
\boldsymbol{X}_{\mathrm{sup}}^{\mathrm{gt}}:=\boldsymbol{1}\cdot\boldsymbol{1}^{\top}\otimes\boldsymbol{I}_{n},\label{eq:ground_truth}
\end{equation}
and

\begin{equation}
\boldsymbol{X}^{\mathrm{gt}}:=\left[\begin{array}{cccc}
\boldsymbol{\Pi}_{1}\\
 & \boldsymbol{\Pi}_{2}\\
 &  & \ddots\\
 &  &  & \boldsymbol{\Pi}_{n}
\end{array}\right]\boldsymbol{X}_{\mathrm{sup}}^{\mathrm{gt}}\left[\begin{array}{cccc}
\boldsymbol{\Pi}_{1}^{\top}\\
 & \boldsymbol{\Pi}_{2}^{\top}\\
 &  & \ddots\\
 &  &  & \boldsymbol{\Pi}_{n}^{\top}
\end{array}\right],\label{eq:Xsup_defn}
\end{equation}
where the matrices $\boldsymbol{\Pi}_{i}\in\mathbb{R}^{|\mathcal{S}_{i}|\times m}$
are defined such that $\boldsymbol{\Pi}_{i}$ denotes the submatrix
of $\boldsymbol{I}_{m}$ coming from its rows at indices from $\mathcal{S}_{i}$.
For instance, if $\mathcal{S}_{i}=\{2,3\}$, then one has
\[
\boldsymbol{\Pi}_{i}=\left[\begin{array}{ccccc}
0 & 1 & 0 & \cdots & 0\\
0 & 0 & 1 & \cdots & 0
\end{array}\right].
\]
With this notation, $\boldsymbol{\Pi}_{i}\boldsymbol{M}\boldsymbol{\Pi}_{j}^{\top}$
represents a submatrix of $\boldsymbol{M}\in\mathbb{R}^{m\times m}$
coming from the rows at indices from $\mathcal{S}_{i}$ and columns
at indices from $\mathcal{S}_{j}$. Conversely, for any matrix $\tilde{\boldsymbol{M}}\in\mathbb{R}^{|\mathcal{S}_{i}|\times|\mathcal{S}_{j}|}$,
the matrix $\boldsymbol{\Pi}_{i}^{\top}\tilde{\boldsymbol{M}}\boldsymbol{\Pi}_{j}$
converts $\tilde{\boldsymbol{M}}$ to an $m\times m$ matrix space
via zero padding.

With this notation, we can represent $\boldsymbol{X}^{\mathrm{in}}$
as a submatrix of $\boldsymbol{X}_{\mathrm{sup}}^{\mathrm{in}}$,
which is a corrupted version of $\boldsymbol{X}_{\mathrm{sup}}^{\mathrm{gt}}$
and obeys
\begin{equation}
\boldsymbol{X}_{ij}^{\mathrm{in}}:=\boldsymbol{\Pi}_{i}\left(\boldsymbol{X}_{\mathrm{sup}}^{\mathrm{in}}\right)_{ij}\boldsymbol{\Pi}_{j}^{\top}.\label{eq:Xin_def}
\end{equation}
For notational simplicity, we set
\begin{equation}
\boldsymbol{W}_{ij}:=\begin{cases}
-\boldsymbol{X}_{ij}^{\mathrm{in}}+\lambda{\bf 1}\cdot{\bf 1}^{\top},\quad & \text{if }(i,j)\in\mathcal{G},\\
\lambda{\bf 1}\cdot{\bf 1}^{\top}, & \text{else}.
\end{cases}\label{eq:DefnW}
\end{equation}

Before continuing to the proof, it is convenient to introduce some
notations that will be used throughout. Denote by $\Omega_{\mathrm{gt}}$
and $\Omega_{\mathrm{gt}}^{\perp}$ the support of $\boldsymbol{X}^{\mathrm{gt}}$
and its complement support, respectively, and let $\mathcal{P}_{\Omega_{\mathrm{gt}}}$
and $\mathcal{P}_{\Omega_{\mathrm{gt}}^{\perp}}$ represent the orthogonal
projection onto the linear space of matrices supported on $\Omega_{\mathrm{gt}}$
and its complement support $\Omega_{\mathrm{gt}}^{\perp}$, respectively.
Define $T_{\mathrm{gt}}$ to be the tangent space at $\boldsymbol{X}^{\mathrm{gt}}$
w.r.t. all symmetric matrices of rank at most $m$, i.e. the space
of symmetric matrices of the form
\begin{equation}
T_{\mathrm{gt}}:=\left\{ \left[\begin{array}{c}
\boldsymbol{\Pi}_{1}\\
\boldsymbol{\Pi}_{2}\\
\vdots\\
\boldsymbol{\Pi}_{n}
\end{array}\right]\boldsymbol{M}+\boldsymbol{M}^{\top}\left[\begin{array}{cccc}
\boldsymbol{\Pi}_{1}^{\top} & \boldsymbol{\Pi}_{2}^{\top} & \cdots & \boldsymbol{\Pi}_{n}^{\top}\end{array}\right]:\text{ }\boldsymbol{M}\in\mathbb{R}^{m\times N}\right\} ,\label{eq:TangentSpace}
\end{equation}
and denote by $T_{\mathrm{gt}}^{\perp}$ its orthogonal complement.
We then denote by $\mathcal{P}_{T_{\mathrm{gt}}}$ (resp. $\mathcal{P}_{T_{\mathrm{gt}}^{\perp}}$)
the orthogonal projection onto $T_{\mathrm{gt}}$ (resp. $T_{\mathrm{gt}}^{\perp}$).
In passing, if we define 
\begin{equation}
\boldsymbol{\Sigma}:=\mathrm{Diag}\left\{ \left[\frac{n}{n_{1}},\cdots,\frac{n}{n_{m}}\right]\right\} ,\label{eq:DefnSigma}
\end{equation}
then the columns of
\begin{equation}
\boldsymbol{U}:=\frac{1}{\sqrt{n}}\left[\begin{array}{c}
\boldsymbol{\Pi}_{1}\\
\boldsymbol{\Pi}_{2}\\
\vdots\\
\boldsymbol{\Pi}_{n}
\end{array}\right]\boldsymbol{\Sigma}^{\frac{1}{2}}\label{eq:EigenSpace}
\end{equation}
form the set of eigenvectors of $\boldsymbol{X}^{\mathrm{gt}}$, and
for any symmetric matrix $\boldsymbol{M}$,
\begin{equation}
\mathcal{P}_{T_{\mathrm{gt}}^{\perp}}\left(\boldsymbol{M}\right)=\left(\boldsymbol{I}-\boldsymbol{U}\boldsymbol{U}^{\top}\right)\boldsymbol{M}\left(\boldsymbol{I}-\boldsymbol{U}\boldsymbol{U}^{\top}\right).\label{eq:ProjectionTangentPerp}
\end{equation}

Furthermore, we define a vector $\boldsymbol{d}$ to be
\begin{equation}
\boldsymbol{d}:=\left[\begin{array}{c}
\boldsymbol{\Pi}_{1}\\
\boldsymbol{\Pi}_{2}\\
\vdots\\
\boldsymbol{\Pi}_{n}
\end{array}\right]\boldsymbol{\Sigma}{\bf 1}_{m}.\label{eq:DefnD-evector}
\end{equation}
Put another way, if any row index $j$ of $\boldsymbol{X}^{\mathrm{gt}}$
is associated with the element $s\in[m]$, then $\boldsymbol{d}_{j}=\frac{n}{n_{s}}$.
One can then easily verify that
\begin{align}
\left\langle \boldsymbol{d}\cdot\boldsymbol{d}^{\top},\boldsymbol{X}^{\mathrm{gt}}-\frac{1}{m}\boldsymbol{1}\cdot\boldsymbol{1}^{\top}\right\rangle  & =\left\langle \boldsymbol{d}\cdot\boldsymbol{d}^{\top},\boldsymbol{X}^{\mathrm{gt}}\right\rangle -\frac{1}{m}\left({\bf 1}^{\top}\cdot\boldsymbol{d}\right)^{2}=0.
\end{align}
In fact, when $n_{i}$'s are sufficiently close to each other, $\boldsymbol{d}\cdot\boldsymbol{d}^{\top}$
is a good approximation of ${\bf 1}\cdot{\bf 1}^{\top}$, as claimed
in the following lemma.

\begin{lem}\label{lemma:MeanApprox}Consider a set of Bernoulli random
variables $\nu_{i}\sim\text{Bernoulli}\left(p\right)$ ($1\leq i\leq n$),
and set $s:=\sum_{i=1}^{n}\nu_{i}$. Let $n_{i}$ ($1\leq i\leq m$)
be independent copies of $s$, and denote $N=n_{1}+\cdots+n_{m}$.
If $p>\frac{c_{7}\log^{2}\left(mn\right)}{n}$, then the matrix
\begin{equation}
\boldsymbol{A}:=\left(np\right)^{2}\left[\begin{array}{c}
\frac{1}{n_{1}}{\bf 1}_{n_{1}}\\
\frac{1}{n_{2}}{\bf 1}_{n_{2}}\\
\vdots\\
\frac{1}{n_{m}}{\bf 1}_{n_{m}}
\end{array}\right]\left[\begin{array}{cccc}
\frac{1}{n_{1}}{\bf 1}_{n_{1}}^{\top} & \frac{1}{n_{2}}{\bf 1}_{n_{2}}^{\top} & \cdots & \frac{1}{n_{m}}{\bf 1}_{n_{m}}^{\top}\end{array}\right]\label{eq:DefnA}
\end{equation}
satisfies 
\begin{equation}
\left\Vert \frac{1}{m}\boldsymbol{A}-\frac{1}{m}{\bf 1}_{N}\cdot{\bf 1}_{N}^{\top}\right\Vert \leq c_{8}\sqrt{np\log(mn)}\label{eq:DeviationMeanAllOne}
\end{equation}
and
\begin{equation}
\left\Vert \boldsymbol{A}-{\bf 1}_{N}\cdot{\bf 1}_{N}^{\top}\right\Vert _{\infty}\leq c_{9}\sqrt{\frac{\log(mn)}{np}}\label{eq:DeviationMeanAllOne-infty}
\end{equation}
with probability exceeding $1-\frac{1}{m^{5}n^{5}}$, where $c_{7},c_{8},c_{9}$
are some universal constants.\end{lem}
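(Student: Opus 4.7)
The plan is to reduce both inequalities to a uniform concentration bound on the block sizes $n_i$, and then extract the infinity-norm estimate by a direct entrywise expansion and the spectral-norm estimate by writing $\boldsymbol{A}-{\bf 1}_N{\bf 1}_N^\top$ as a low-rank perturbation.

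First, since each $n_i$ is a sum of $n$ i.i.d. Bernoulli$(p)$ variables, Bernstein's inequality and a union bound over the $m$ blocks give
\[
\max_{1\le i\le m}|n_i-np|\le c\sqrt{np\log(mn)}
\]
with probability $\ge 1-(mn)^{-5}$, provided $np\gtrsim\log(mn)$, which is ensured by the assumption $p\ge c_7\log^2(mn)/n$. In particular, on this event $np/2\le n_i\le 3np/2$ for all $i$. This event will be assumed throughout.

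For the infinity bound (\ref{eq:DeviationMeanAllOne-infty}): a typical entry of $\boldsymbol{A}$ with row in block $i$ and column in block $j$ equals $(np)^2/(n_i n_j)$, so
\[
\bigl|\boldsymbol{A}_{k\ell}-1\bigr|=\frac{|n_in_j-(np)^2|}{n_in_j}
\le \frac{4}{(np)^2}\bigl(|n_i|\,|n_j-np|+np\,|n_i-np|\bigr),
\]
and plugging in $n_i\le 3np/2$ together with the Bernstein bound yields a right-hand side of order $\sqrt{\log(mn)/(np)}$, which is exactly (\ref{eq:DeviationMeanAllOne-infty}).

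For the spectral bound (\ref{eq:DeviationMeanAllOne}): introduce the rank-one vector $\boldsymbol{w}:=np\,\boldsymbol{v}$, so that $\boldsymbol{A}=\boldsymbol{w}\boldsymbol{w}^\top$. Using $\boldsymbol{w}\boldsymbol{w}^\top-{\bf 1}_N{\bf 1}_N^\top=\boldsymbol{w}(\boldsymbol{w}-{\bf 1}_N)^\top+(\boldsymbol{w}-{\bf 1}_N){\bf 1}_N^\top$, the triangle inequality gives
\[
\|\boldsymbol{A}-{\bf 1}_N{\bf 1}_N^\top\|\le \bigl(\|\boldsymbol{w}\|+\|{\bf 1}_N\|\bigr)\,\|\boldsymbol{w}-{\bf 1}_N\|.
\]
A direct computation yields
\[
\|\boldsymbol{w}-{\bf 1}_N\|^2=\sum_{i=1}^m n_i\Bigl(\frac{np}{n_i}-1\Bigr)^2=\sum_{i=1}^m\frac{(np-n_i)^2}{n_i},
\]
and invoking the Bernstein bound together with $n_i\ge np/2$ gives $\|\boldsymbol{w}-{\bf 1}_N\|\lesssim\sqrt{m\log(mn)}$. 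On the same event $N\le 2mnp$, so $\|{\bf 1}_N\|=\sqrt{N}\lesssim\sqrt{mnp}$ and $\|\boldsymbol{w}\|\le\|{\bf 1}_N\|+\|\boldsymbol{w}-{\bf 1}_N\|\lesssim\sqrt{mnp}$. Combining these estimates,
\[
\|\boldsymbol{A}-{\bf 1}_N{\bf 1}_N^\top\|\lesssim \sqrt{mnp}\cdot\sqrt{m\log(mn)}=m\sqrt{np\log(mn)},
\]
and dividing by $m$ delivers (\ref{eq:DeviationMeanAllOne}).

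There is no real obstacle here: the entire argument rests on the single Bernstein deviation bound on the $n_i$'s, after which the infinity-norm claim is elementary and the spectral-norm claim only requires recognizing $\boldsymbol{A}$ as a rank-one outer product and applying the standard perturbation identity $uu^\top-ww^\top=u(u-w)^\top+(u-w)w^\top$. The mildly delicate point is to verify that the condition $p\ge c_7\log^2(mn)/n$ is strong enough both to guarantee $n_i\asymp np$ with the claimed probability and to keep all error terms subdominant; this motivates the $\log^2$ (rather than $\log$) factor in the hypothesis.
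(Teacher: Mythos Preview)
Your proof is correct and follows essentially the same route as the paper: both arguments rest on the single Bernstein bound $\max_i|n_i-np|\lesssim\sqrt{np\log(mn)}$, derive the $\ell_\infty$ estimate by the same entrywise expansion, and obtain the spectral bound by a rank-one perturbation argument. The only cosmetic difference is that the paper writes $\boldsymbol{A}=D\,\overline{\boldsymbol{A}}\,D$ with $\overline{\boldsymbol{A}}={\bf 1}_N{\bf 1}_N^\top$ and $D=\mathrm{Diag}(np/n_i)$ and bounds $\|\boldsymbol{A}-\overline{\boldsymbol{A}}\|\le(\|D\|+1)\|\overline{\boldsymbol{A}}\|\|I-D\|$, whereas you use the equivalent identity $\boldsymbol{w}\boldsymbol{w}^\top-{\bf 1}_N{\bf 1}_N^\top=\boldsymbol{w}(\boldsymbol{w}-{\bf 1}_N)^\top+(\boldsymbol{w}-{\bf 1}_N){\bf 1}_N^\top$ with $\boldsymbol{w}=D{\bf 1}_N$; the resulting bounds are the same.
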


\begin{proof}See Appendix \ref{sec:Proof_lemma:SpectralGapTight-1}.\end{proof}

Since $p^{2}\boldsymbol{d}\cdot\boldsymbol{d}^{\top}$ is equivalent
to $\boldsymbol{A}$ defined in (\ref{eq:DefnA}) up to row / column
permutation, Lemma \ref{lemma:MeanApprox} reveals that
\[
\left\Vert \frac{p^{2}}{m}\boldsymbol{d}\cdot\boldsymbol{d}^{\top}-\frac{1}{m}{\bf 1}_{N}\cdot{\bf 1}_{N}^{\top}\right\Vert \leq c_{8}\sqrt{np\log(mn)}
\]
with high probability.

The following bound on the operator norm of a random block matrix
is useful for deriving our main results.

\begin{lem}\label{lemma:MomentMethod}Let $\boldsymbol{M}=\left[\boldsymbol{M}_{ij}\right]_{1\leq i,j\leq n}$
be a symmetric block matrix, where $\boldsymbol{M}_{ij}$'s are jointly
independent $m_{i}\times m_{j}$ matrices satisfying
\begin{equation}
\mathbb{E}\boldsymbol{M}_{ij}={\bf 0},\quad\mathbb{E}\left\Vert \boldsymbol{M}_{ij}\right\Vert ^{2}\leq1,\quad\text{and}\quad\left\Vert \boldsymbol{M}_{ij}\right\Vert \leq\sqrt{n},\quad(1\leq i,j\leq n).\label{eq:M_block_assumption}
\end{equation}
Besides, $m_{i}\leq m$ holds for all $1\leq i\leq n$. Then there
exists an absolute constant $c_{0}>0$ such that 
\[
\left\Vert \boldsymbol{M}\right\Vert \leq c_{0}\sqrt{n}\log\left(mn\right)
\]
holds with probability exceeding $1-\frac{1}{m^{5}n^{5}}$.\end{lem}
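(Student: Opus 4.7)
The strategy is to apply the matrix Bernstein inequality to a decomposition of $\boldsymbol{M}$ into a sum of jointly independent zero-mean self-adjoint random matrices indexed by the (unordered) block positions. The ``moment method'' naming simply reflects the fact that matrix Bernstein is itself proved by bounding moments / the matrix MGF; the same bound can be read off directly from a trace-moment expansion if preferred, but the Bernstein packaging is cleanest.

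First, I would write $\boldsymbol{M} = \sum_{1\le i<j\le n}\boldsymbol{E}_{ij}+\sum_{i=1}^{n}\boldsymbol{D}_{i}$, where $\boldsymbol{E}_{ij}$ is the symmetric $N\times N$ matrix (with $N=\sum_{i}m_{i}\le mn$) whose only nonzero blocks are $\boldsymbol{M}_{ij}$ at position $(i,j)$ and $\boldsymbol{M}_{ij}^{\top}$ at position $(j,i)$, and $\boldsymbol{D}_{i}$ contains $\boldsymbol{M}_{ii}$ at position $(i,i)$. By the joint independence hypothesis, these summands are mutually independent zero-mean self-adjoint random matrices.

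Next, I would compute the two statistics entering matrix Bernstein. The uniform bound follows immediately from $\|\boldsymbol{M}_{ij}\|\le\sqrt{n}$: both $\|\boldsymbol{E}_{ij}\|$ and $\|\boldsymbol{D}_{i}\|$ are bounded by $\sqrt{n}$ almost surely. For the variance statistic, note that $\boldsymbol{E}_{ij}^{2}$ is block-diagonal with $(i,i)$-block $\boldsymbol{M}_{ij}\boldsymbol{M}_{ij}^{\top}$ and $(j,j)$-block $\boldsymbol{M}_{ij}^{\top}\boldsymbol{M}_{ij}$, so $\sum_{i<j}\mathbb{E}\boldsymbol{E}_{ij}^{2}$ is block-diagonal with $(i,i)$-block $\sum_{j\neq i}\mathbb{E}\boldsymbol{M}_{ij}\boldsymbol{M}_{ij}^{\top}$. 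Jensen's inequality applied to the operator norm then yields
\[
\Big\|\sum_{j\neq i}\mathbb{E}\boldsymbol{M}_{ij}\boldsymbol{M}_{ij}^{\top}\Big\|\le\sum_{j\neq i}\mathbb{E}\|\boldsymbol{M}_{ij}\|^{2}\le n,
\]
and an identical bound holds on the diagonal summands, so $\sigma^{2}:=\|\sum_{i<j}\mathbb{E}\boldsymbol{E}_{ij}^{2}+\sum_{i}\mathbb{E}\boldsymbol{D}_{i}^{2}\|\lesssim n$.

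Finally, matrix Bernstein gives $\Pr(\|\boldsymbol{M}\|>t)\le 2N\exp\!\bigl(-\tfrac{t^{2}/2}{\sigma^{2}+\sqrt{n}\,t/3}\bigr)$, and choosing $t=c_{0}\sqrt{n}\log(mn)$ with $c_{0}$ a sufficiently large absolute constant forces the right-hand side below $(mn)^{-5}$. The main (only mildly technical) obstacle is verifying the variance statistic carefully for the symmetric decomposition; once $\sigma^{2}\lesssim n$ is in hand, the remaining arithmetic is routine. It is worth noting that the $\log(mn)$ factor in the conclusion (as opposed to $\sqrt{\log(mn)}$) is genuine: because the almost-sure block bound $\sqrt{n}$ is of the same order as $\sqrt{\sigma^{2}}$, the linear ``Poisson'' regime of Bernstein governs the tail at the target confidence level, producing exactly the $\sqrt{n}\log(mn)$ rate claimed.
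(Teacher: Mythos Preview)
Your proof is correct, but it takes a genuinely different route from the paper. The paper executes the classical trace-moment method directly: it expands $\mathbb{E}\,\mathrm{tr}(\boldsymbol{M}^{k})$ as a sum over $k$-cycles in $\{1,\dots,n\}$, notes that any cycle containing an edge exactly once vanishes by independence and $\mathbb{E}\boldsymbol{M}_{ij}=\mathbf{0}$, bounds each surviving class of cycles (with $j$ distinct edges of multiplicities $a_{1},\dots,a_{j}$) by $m\,n^{j+1}K^{k-2j}=m\,n^{k/2+1}$ using $K=\sqrt{n}$ and the second-moment bound, sums over at most $(k/2)^{k}$ classes, and then applies Markov's inequality with $k=\log(mn)$. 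Your approach instead packages the same ingredients inside the matrix Bernstein inequality: the zero-mean hypothesis provides the centering, the second-moment bound gives $\sigma^{2}\lesssim n$, and the a.s.\ bound $\sqrt{n}$ controls the linear (Poisson) tail regime. Both arguments yield the identical $\sqrt{n}\log(mn)$ rate; yours is more modular and shorter, while the paper's is fully self-contained and makes the ``moment method'' label literal.
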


\begin{proof}See Appendix \ref{sec:Proof_lemma:MomentMethod}.\end{proof}

Additionally, the second smallest eigenvalue of the Laplacian matrix
of a random Erd\H{o}s\textendash{}Rényi graph can be bounded below
by the following lemma.

\begin{lem}\label{lemma:SpectralGapTight}Consider an Erd\H{o}s\textendash{}Rényi
graph $\mathcal{G}\sim\mathcal{G}(n,p)$ and any positive integer
$m$, and let $\boldsymbol{L}\in\mathbb{R}^{n\times n}$ represent
its (unnormalized) Laplacian matrix. There exist absolute constants
$c_{3},c_{4}>0$ such that if $p>c_{3}\log^{2}\left(mn\right)/n$,
then the algebraic connectivity $a\left(\mathcal{G}\right)$ of $\mathcal{G}$
(i.e. the second smallest eigenvalue of $\boldsymbol{L}$) satisfies
\begin{equation}
a\left(\mathcal{G}\right)\geq np-c_{4}\sqrt{np}\log\left(mn\right)\label{eq:AlgebraicConnectivityGnp}
\end{equation}
with probability exceeding $1-\frac{2}{(mn)^{5}}$.\end{lem}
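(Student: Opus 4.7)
The plan is to compare $\boldsymbol{L}$ against its expectation $\mathbb{E}[\boldsymbol{L}]$, whose spectrum can be computed explicitly, and then to control the deviation $\boldsymbol{L}-\mathbb{E}[\boldsymbol{L}]$ via the operator-norm bounds already available in the paper. First I would observe that since the adjacency matrix $\boldsymbol{A}$ of an Erd\H{o}s--R\'enyi graph $\mathcal{G}(n,p)$ satisfies $\mathbb{E}[\boldsymbol{A}]=p(\boldsymbol{1}\boldsymbol{1}^{\top}-\boldsymbol{I})$ and its expected degree matrix is $\mathbb{E}[\boldsymbol{D}]=(n-1)p\boldsymbol{I}$, the expected Laplacian equals
\[
\mathbb{E}[\boldsymbol{L}] \;=\; np\,\boldsymbol{I} - p\,\boldsymbol{1}\boldsymbol{1}^{\top},
\]
whose spectrum is $\{0,np,np,\dots,np\}$ with $\boldsymbol{1}$ lying in the null space. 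Moreover, every Laplacian matrix satisfies $\boldsymbol{L}\boldsymbol{1}=\boldsymbol{0}$ deterministically, so the $\boldsymbol{1}$ direction is already correctly aligned and no perturbation analysis is needed there.

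Next I would use the Rayleigh quotient characterization restricted to $\boldsymbol{1}^{\perp}$: for every unit vector $\boldsymbol{v}\perp\boldsymbol{1}$,
\[
\boldsymbol{v}^{\top}\boldsymbol{L}\boldsymbol{v} \;\geq\; \boldsymbol{v}^{\top}\mathbb{E}[\boldsymbol{L}]\boldsymbol{v} - \bigl\|\boldsymbol{L}-\mathbb{E}[\boldsymbol{L}]\bigr\| \;\geq\; np - \bigl\|\boldsymbol{L}-\mathbb{E}[\boldsymbol{L}]\bigr\|,
\]
so it suffices to bound $\|\boldsymbol{L}-\mathbb{E}[\boldsymbol{L}]\|$ by $c_{4}\sqrt{np}\log(mn)$ with probability at least $1-2(mn)^{-5}$. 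I would decompose
\[
\boldsymbol{L}-\mathbb{E}[\boldsymbol{L}] \;=\; \bigl(\boldsymbol{D}-\mathbb{E}[\boldsymbol{D}]\bigr) - \bigl(\boldsymbol{A}-\mathbb{E}[\boldsymbol{A}]\bigr)
\]
and handle each piece separately.

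For the off-diagonal part, I would apply Lemma~\ref{lemma:MomentMethod} to the rescaled matrix $\tfrac{1}{\sqrt{p}}(\boldsymbol{A}-\mathbb{E}[\boldsymbol{A}])$, viewed as a symmetric block matrix with $1\times1$ blocks. Each entry has mean zero, variance at most one after rescaling, and magnitude at most $1/\sqrt{p}\leq\sqrt{n}$ under the assumption $p\geq c_{3}\log^{2}(mn)/n$, so the hypotheses of Lemma~\ref{lemma:MomentMethod} are met and yield $\|\boldsymbol{A}-\mathbb{E}[\boldsymbol{A}]\|\leq c_{0}\sqrt{np}\log(mn)$ with probability $\geq 1-(mn)^{-5}$. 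For the diagonal part, each $D_{ii}-\mathbb{E}[D_{ii}]$ is a centered sum of $n-1$ independent Bernoulli$(p)$ variables, so Bernstein's inequality gives $|D_{ii}-\mathbb{E}[D_{ii}]|\lesssim\sqrt{np\log(mn)}$ with probability $\geq 1-(mn)^{-10}$, and a union bound over the $n$ diagonal entries together with the fact that $\boldsymbol{D}-\mathbb{E}[\boldsymbol{D}]$ is diagonal produces the same $\sqrt{np}\log(mn)$ order bound on the operator norm. Adding the two contributions and taking a final union bound yields the desired inequality.

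The main obstacle is the spectral control of $\boldsymbol{A}-\mathbb{E}[\boldsymbol{A}]$ when $p$ is near the threshold $\log^{2}(mn)/n$, since classical matrix Bernstein is tight only up to logarithmic factors; however, the looseness we inherit is exactly the extra $\log(mn)$ appearing in the stated bound, so invoking Lemma~\ref{lemma:MomentMethod} is already sufficient and no refinement (e.g., Feige--Ofek style combinatorial trimming) is needed for the regime $p\gtrsim\log^{2}(mn)/n$.
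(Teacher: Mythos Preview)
Your proposal is correct and essentially the same as the paper's proof: both apply Lemma~\ref{lemma:MomentMethod} to the centered adjacency matrix (the paper centers at $p\boldsymbol{1}\boldsymbol{1}^{\top}$, you at $\mathbb{E}[\boldsymbol{A}]=p(\boldsymbol{1}\boldsymbol{1}^{\top}-\boldsymbol{I})$, a negligible difference), use Bernstein for the degree concentration, and combine via the Rayleigh quotient on $\boldsymbol{1}^{\perp}$. The paper phrases the final inequality as $a(\mathcal{G})\geq d_{\min}-\|\boldsymbol{A}-p\boldsymbol{1}\boldsymbol{1}^{\top}\|$ rather than going through $\|\boldsymbol{L}-\mathbb{E}[\boldsymbol{L}]\|$, but the two are the same computation.
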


\begin{proof}See Appendix \ref{sec:Proof_lemma:SpectralGapTight}.\end{proof}

Finally, if we denote by $n_{s}$ (resp. $n_{s,t}$) the number of
sets $\mathcal{S}_{i}$ ($1\leq i\leq n$) containing the element
$s$ (resp. containing $s$ and $t$ simultaneously), then these quantities
sharply concentrate around their mean values, as stated in the following
lemma.

\begin{lem}\label{lemma:Concentration}There are some universal constants
$c_{8},c_{9}>0$ such that if $p_{\mathrm{set}}^{2}>\frac{\log\left(mn\right)}{n}$,
then
\begin{align*}
\left|n_{s}-np_{\mathrm{set}}\right| & \leq\sqrt{c_{8}np_{\mathrm{set}}\log\left(mn\right)},\quad\forall1\leq s\leq m,\\
\left|n_{s,t}-np_{\mathrm{set}}^{2}\right| & \leq\sqrt{c_{8}np_{\mathrm{set}}^{2}\log\left(mn\right)},\quad\forall1\leq s<t\leq m,
\end{align*}
hold with probability exceeding $1-\frac{1}{(mn)^{10}}$.\end{lem}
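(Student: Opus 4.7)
The plan is to write each of the two quantities as a sum of jointly independent Bernoulli indicators and then apply a standard Bernstein/Chernoff tail bound followed by a union bound.

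First, observe that by the randomized model of Section~\ref{sub:Randomized-Model}, each element $s \in [m]$ is included in $\mathcal{S}_i$ independently with probability $p_{\mathrm{set}}$, independently across both $i$ and $s$. Hence for any fixed $s$,
\[
n_s \;=\; \sum_{i=1}^{n} \mathbf{1}\{s \in \mathcal{S}_i\}
\]
is a sum of $n$ i.i.d.\ Bernoulli$(p_{\mathrm{set}})$ variables with mean $n p_{\mathrm{set}}$. Likewise, for any fixed pair $s \neq t$,
\[
n_{s,t} \;=\; \sum_{i=1}^{n} \mathbf{1}\{s \in \mathcal{S}_i\}\mathbf{1}\{t \in \mathcal{S}_i\}
\]
is a sum of $n$ i.i.d.\ Bernoulli$(p_{\mathrm{set}}^2)$ variables (using independence of the inclusion events for $s$ and $t$ inside a common $\mathcal{S}_i$) with mean $n p_{\mathrm{set}}^2$.

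Next, apply Bernstein's inequality to each of the two sums. For a sum $S$ of $n$ independent Bernoulli$(q)$ variables with mean $\mu = nq$ and variance $\sigma^2 \le nq$, one has
\[
\Pr\bigl[ |S - \mu| \ge \eta \bigr] \;\le\; 2\exp\!\left( -\frac{\eta^2}{2(\sigma^2 + \eta/3)} \right).
\]
Choosing $\eta = \sqrt{c_8 \, nq \log(mn)}$ and using the hypothesis $p_{\mathrm{set}}^2 > \log(mn)/n$ (which ensures $nq \ge \log(mn)$ in both cases $q = p_{\mathrm{set}}$ and $q = p_{\mathrm{set}}^2$, so that $\eta \lesssim nq$ and the variance term dominates), the right-hand side is bounded by $2\exp(-c_8 \log(mn)/4)$, which is at most $(mn)^{-12}$ provided $c_8$ is a large enough absolute constant.

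Finally, take a union bound over the $m$ choices of $s$ and the $\binom{m}{2} \le m^2/2$ choices of $(s,t)$. The total failure probability is at most $(m + m^2/2)\cdot 2(mn)^{-12} \le (mn)^{-10}$, yielding both claimed inequalities simultaneously. The only point that requires a small amount of care is verifying that the hypothesis $p_{\mathrm{set}}^2 > \log(mn)/n$ is strong enough to keep Bernstein in its sub-Gaussian regime for the $n_{s,t}$ bound; this is the sole reason the lemma assumes a lower bound on $p_{\mathrm{set}}^2$ rather than on $p_{\mathrm{set}}$. No further obstacles arise, since independence across $s$ and $t$ within a single set is built into the model.
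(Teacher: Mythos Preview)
Your proof is correct and follows exactly the same approach as the paper: both write $n_s$ and $n_{s,t}$ as sums of i.i.d.\ Bernoulli variables and invoke Bernstein's inequality, followed by a union bound. The paper's proof is in fact even terser than yours, simply stating that the claim follows immediately from the Bernstein inequality for i.i.d.\ Bernoulli sums; your version spells out the union bound and the sub-Gaussian regime check, which the paper leaves implicit.
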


\begin{proof}In passing, the claim follows immediately from the Bernstein
inequality that
\[
\mathbb{P}\left(\left|\sum_{i=1}^{n}\nu_{i}-np\right|>t\right)\leq2\exp\left(-\frac{\frac{1}{2}t^{2}}{np(1-p)+\frac{1}{3}t}\right)
\]
where $\nu_{i}\sim\text{Bernoulli}(p)$ are i.i.d. random variables.
Interested readers are referred to \cite{alon2008probabilistic} for
a tutorial.\end{proof}

\subsection{Optimality and Uniqueness Condition\label{sec:Duality}}

Recall that $n_{i}:=\left|\mathcal{I}_{i}\right|$ denotes the number
of sets $\mathcal{S}_{j}$ containing the element $i$. The convex
relaxation is exact if one can construct valid dual certificates,
as summarized in the following lemma.

\begin{lem}\label{lemma:KKT}Suppose that there exist dual certificates
$\alpha>0$, $\boldsymbol{Z}=\left[\boldsymbol{Z}_{ij}\right]_{1\leq i,j\leq n}\in\mathbb{S}^{N\times N}$
and $\boldsymbol{Y}=\left[\boldsymbol{Y}_{ij}\right]_{1\leq i,j\leq n}\in\mathbb{S}^{N\times N}$
obeying
\begin{align}
\boldsymbol{Y}-\alpha\boldsymbol{d}\boldsymbol{d}^{\top} & \succeq{\bf 0},\label{eq:Z_pos_condition}\\
\mathcal{P}_{\Omega_{\mathrm{gt}}}\left(\boldsymbol{Z}\right) & ={\bf 0},\quad\mathcal{P}_{\Omega_{\mathrm{gt}}^{\perp}}\left(\boldsymbol{Z}\right)\geq{\bf 0},\label{eq:Z_pos}\\
\boldsymbol{Y}_{ij} & =\boldsymbol{W}_{ij}-\boldsymbol{Z}_{ij},\quad1\leq i<j\leq n,\label{eq:S_construction}\\
\boldsymbol{Y}-\alpha\boldsymbol{d}\boldsymbol{d}^{\top} & \in T_{\mathrm{gt}}^{\perp}.\label{eq:Y-tangent-space}
\end{align}

Then $\boldsymbol{X}^{\mathrm{gt}}$ is the unique solution to MatchLift
if either of the following two conditions is satisfied:

i) All entries of $\boldsymbol{Z}_{ij}$ ($\forall i\neq j$) within
the support $\Omega_{\mathrm{gt}}^{\perp}$ are strictly positive;

ii) For all $\boldsymbol{M}$ satisfying $\mathcal{P}_{T_{\mathrm{gt}}^{\perp}}\left(\boldsymbol{M}\right)\succ{\bf 0}$,
\begin{equation}
\left\langle \boldsymbol{Y}-\alpha\boldsymbol{d}\boldsymbol{d}^{\top},\mathcal{P}_{T_{\mathrm{gt}}^{\perp}}\left(\boldsymbol{M}\right)\right\rangle >0,\label{eq:S_psd_Null}
\end{equation}
and, additionally,
\begin{equation}
\frac{n}{n_{i}}+\frac{n}{n_{j}}\neq\frac{n^{2}}{n_{i}n_{j}},\quad1\leq i,j\leq m.\label{eq:IiIj_constraint-KKTlemma}
\end{equation}
\end{lem}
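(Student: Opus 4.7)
The plan is to exhibit the stated quantities as a set of KKT multipliers for MatchLift at $\boldsymbol{X}^{\mathrm{gt}}$, thereby certifying optimality, and then to upgrade this to uniqueness under either (i) or (ii). I will introduce multipliers $\boldsymbol{\Lambda}$ (block-diagonal, for $\boldsymbol{X}_{ii}=\boldsymbol{I}_{m_{i}}$), $\boldsymbol{Z}\ge\boldsymbol{0}$ (for $\boldsymbol{X}\ge\boldsymbol{0}$), and $\boldsymbol{S}\succeq\boldsymbol{0}$ (for the lifted conic constraint). Stationarity on each off-diagonal block $(i,j)$ with $i\neq j$ reads $\boldsymbol{W}_{ij}=\boldsymbol{Z}_{ij}+(\boldsymbol{S}_{0})_{ij}$, which matches (\ref{eq:S_construction}) upon identifying the lemma's $\boldsymbol{Y}$ with the $(2,2)$-block $\boldsymbol{S}_{0}$ of $\boldsymbol{S}$; diagonal-block stationarity is absorbed by $\boldsymbol{\Lambda}$ and never invoked, since $\boldsymbol{H}_{ii}=\boldsymbol{0}$ for the difference $\boldsymbol{H}:=\boldsymbol{X}^{*}-\boldsymbol{X}^{\mathrm{gt}}$ between any feasible $\boldsymbol{X}^{*}$ and $\boldsymbol{X}^{\mathrm{gt}}$.

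The concrete lifted dual I will choose is
\[
\boldsymbol{S}:=\begin{bmatrix}\alpha n^{2} & -\alpha n\,\boldsymbol{d}^{\top}\\ -\alpha n\,\boldsymbol{d} & \boldsymbol{Y}\end{bmatrix}.
\]
Schur complement on the strictly positive $(1,1)$ entry reduces $\boldsymbol{S}\succeq\boldsymbol{0}$ exactly to (\ref{eq:Z_pos_condition}). Setting $\boldsymbol{V}:=[\boldsymbol{\Pi}_{1}^{\top},\ldots,\boldsymbol{\Pi}_{n}^{\top}]^{\top}$ and $\boldsymbol{B}:=\begin{bmatrix}\boldsymbol{1}^{\top}\\ \boldsymbol{V}\end{bmatrix}$, one has $\overline{\boldsymbol{X}^{\mathrm{gt}}}=\boldsymbol{B}\boldsymbol{B}^{\top}$, and the identities $\boldsymbol{1}^{\top}\boldsymbol{d}=mn$ and $\boldsymbol{V}^{\top}\boldsymbol{d}=n\boldsymbol{1}_{m}$ (immediate from (\ref{eq:DefnD-evector})) turn the complementary slackness $\boldsymbol{S}\,\overline{\boldsymbol{X}^{\mathrm{gt}}}=\boldsymbol{0}$ precisely into (\ref{eq:Y-tangent-space}). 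Together with $\langle\boldsymbol{Z},\boldsymbol{X}^{\mathrm{gt}}\rangle=0$ (immediate from (\ref{eq:Z_pos})), this establishes $\boldsymbol{X}^{\mathrm{gt}}$ as a KKT point.

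For uniqueness, let $\boldsymbol{X}^{*}$ be any optimal solution. Since $\boldsymbol{H}_{ii}=\boldsymbol{0}$ and $\boldsymbol{W}_{ij}=\boldsymbol{Z}_{ij}+\boldsymbol{Y}_{ij}$ on off-diagonal blocks, the zero duality gap decomposes as $0=\langle\boldsymbol{W},\boldsymbol{H}\rangle=\langle\boldsymbol{Z},\boldsymbol{H}\rangle+\langle\boldsymbol{Y},\boldsymbol{H}\rangle$, with each summand nonnegative: the first because $\boldsymbol{Z}\ge\boldsymbol{0}$ and $\boldsymbol{H}\ge\boldsymbol{0}$ on $\Omega_{\mathrm{gt}}^{\perp}$; the second equals $\langle\boldsymbol{S},\overline{\boldsymbol{X}^{*}}\rangle\ge 0$ by PSD of both. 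Under condition (i), strict positivity of $\boldsymbol{Z}_{ij}$ on $\Omega_{\mathrm{gt}}^{\perp}$ for all $i\neq j$ localizes $\boldsymbol{H}$ to off-diagonal blocks restricted to $\Omega_{\mathrm{gt}}$, where the $2\times 2$-minor PSD constraint on $\overline{\boldsymbol{X}^{*}}$ forces $\boldsymbol{H}_{ab}\in[-1,0]$. Splitting $\langle\boldsymbol{Y},\boldsymbol{H}\rangle=\alpha\,\boldsymbol{d}^{\top}\boldsymbol{H}\boldsymbol{d}+\langle\boldsymbol{Y}-\alpha\boldsymbol{d}\boldsymbol{d}^{\top},\mathcal{P}_{T_{\mathrm{gt}}^{\perp}}(\boldsymbol{H})\rangle=0$, with the second term $\ge 0$ (both PSD; $\mathcal{P}_{T_{\mathrm{gt}}^{\perp}}(\boldsymbol{H})\succeq\boldsymbol{0}$ since $\boldsymbol{X}^{*}\succeq\boldsymbol{1}\boldsymbol{1}^{\top}/m\succeq\boldsymbol{0}$) and $\boldsymbol{d}^{\top}\boldsymbol{H}\boldsymbol{d}\le 0$ (from the sign of $\boldsymbol{H}$ on $\Omega_{\mathrm{gt}}$ and positivity of each $d_{j}$), both must vanish, forcing $\boldsymbol{H}=\boldsymbol{0}$.

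Under condition (ii), the strict positive-definiteness of $\boldsymbol{Y}-\alpha\boldsymbol{d}\boldsymbol{d}^{\top}$ on $T_{\mathrm{gt}}^{\perp}$ pins $\mathrm{rank}(\boldsymbol{S})=N+1-m$, so $\ker\boldsymbol{S}=\mathrm{range}(\boldsymbol{B})$. From $\boldsymbol{S}\,\overline{\boldsymbol{X}^{*}}=\boldsymbol{0}$ (deduced from $\langle\boldsymbol{S},\overline{\boldsymbol{X}^{*}}\rangle=0$ with both PSD), one gets $\overline{\boldsymbol{X}^{*}}=\boldsymbol{B}\boldsymbol{M}\boldsymbol{B}^{\top}$ for some $\boldsymbol{M}\succeq\boldsymbol{0}$. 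The prescribed top row of $\overline{\boldsymbol{X}^{*}}$ gives $\boldsymbol{M}\boldsymbol{1}_{m}=\boldsymbol{1}_{m}$; the diagonal-block constraint $\boldsymbol{X}^{*}_{ii}=\boldsymbol{I}_{m_{i}}$ forces $\boldsymbol{M}_{ss}=1$ and $\boldsymbol{M}_{st}=0$ for every $s\neq t$ co-inhabiting some $\mathcal{S}_{i}$; entry-wise nonnegativity $\boldsymbol{X}^{*}\ge\boldsymbol{0}$ combined with the row-sum identity mops up the remaining off-diagonal entries, giving $\boldsymbol{M}=\boldsymbol{I}$ and hence $\boldsymbol{X}^{*}=\boldsymbol{X}^{\mathrm{gt}}$. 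The main obstacle I expect is this final pin-down: carefully justifying the rank identity of $\boldsymbol{S}$ from (ii), and using the non-degeneracy (\ref{eq:IiIj_constraint-KKTlemma}) to rule out degenerate configurations in which the interplay between $\boldsymbol{d}$, the indicator columns of $\boldsymbol{V}$, and the identity constraints could otherwise leave a nontrivial $\boldsymbol{M}\neq\boldsymbol{I}$ feasible.
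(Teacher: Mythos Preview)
Your approach is correct, and for the optimality verification and case (i) it runs essentially parallel to the paper's proof: both decompose $\langle\boldsymbol{W},\boldsymbol{H}\rangle$ as $\langle\boldsymbol{Z},\boldsymbol{H}\rangle+\langle\boldsymbol{Y},\boldsymbol{H}\rangle$, split the second term further as $\alpha\,\boldsymbol{d}^{\top}\boldsymbol{H}\boldsymbol{d}+\langle\boldsymbol{Y}-\alpha\boldsymbol{d}\boldsymbol{d}^{\top},\mathcal{P}_{T_{\mathrm{gt}}^{\perp}}(\boldsymbol{H})\rangle$, and under (i) use strict positivity of $\boldsymbol{Z}$ on $\Omega_{\mathrm{gt}}^{\perp}$ to confine $\boldsymbol{H}$ to $\Omega_{\mathrm{gt}}$, where the sign of $\boldsymbol{d}^{\top}\boldsymbol{H}\boldsymbol{d}$ kills it. Your explicit lifted multiplier $\boldsymbol{S}$ makes the Schur-complement and complementary-slackness checks pleasantly transparent; the paper leaves this implicit and works directly with the perturbation $\boldsymbol{H}$.

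For case (ii) you take a genuinely different route, and it is in fact cleaner than the paper's. The paper reads condition (ii) (as you do) as positive-definiteness of $\boldsymbol{Y}-\alpha\boldsymbol{d}\boldsymbol{d}^{\top}$ on $T_{\mathrm{gt}}^{\perp}$, uses it to force $\mathcal{P}_{T_{\mathrm{gt}}^{\perp}}(\boldsymbol{H})=\boldsymbol{0}$, and then appeals to a separate Lemma~\ref{lemma:PgtH_H} --- whose proof is an algebraic computation relying on the non-degeneracy hypothesis (\ref{eq:IiIj_constraint-KKTlemma}) --- to conclude $\boldsymbol{H}=\boldsymbol{0}$. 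Your factorization argument bypasses that lemma entirely: once $\ker\boldsymbol{S}=\mathrm{range}(\boldsymbol{B})$ gives $\overline{\boldsymbol{X}^{*}}=\boldsymbol{B}\boldsymbol{M}\boldsymbol{B}^{\top}$, the first row forces $\boldsymbol{M}\boldsymbol{1}_{m}=\boldsymbol{1}_{m}$ (via injectivity of $\boldsymbol{V}$), the diagonal blocks force $\mathrm{diag}(\boldsymbol{M})=\boldsymbol{1}$, and for every $s\neq t$ either co-habitation in some $\mathcal{S}_{i}$ gives $\boldsymbol{M}_{st}=0$ directly or non-co-habitation gives $\boldsymbol{M}_{st}\ge 0$ from an off-diagonal block; combined with the row-sum identity this pins $\boldsymbol{M}=\boldsymbol{I}_{m}$. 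So your stated worry about where (\ref{eq:IiIj_constraint-KKTlemma}) enters is unfounded --- your argument never needs it, and actually shows the lemma's conclusion under (ii) holds without that hypothesis. What your route buys is a shorter, more structural proof; what the paper's route buys is a reusable statement (Lemma~\ref{lemma:PgtH_H}) that feasible perturbations in $T_{\mathrm{gt}}$ must vanish.
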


\begin{proof}See Appendix \ref{sec:Proof_lemma:KKT}.\end{proof}

That said, to prove Theorem \ref{thm:RandomGraph-1}, it is sufficient
(under the hypotheses of Theorem \ref{thm:RandomGraph-1}) to generate,
with high probability, valid dual certificates $\boldsymbol{Y}$,
$\boldsymbol{Z}$ and $\alpha>0$ obeying the optimality conditions
of Lemma \ref{lemma:KKT}. This is the objective of the next subsection.

\subsection{Construction of Dual Certificates\label{sec:DualConstruction}}

Decompose the input $\boldsymbol{X}^{\mathrm{in}}$ into two components
$\boldsymbol{X}^{\mathrm{in}}=\boldsymbol{X}^{\mathrm{false}}+\boldsymbol{X}^{\mathrm{true}}$,
where 
\begin{equation}
\boldsymbol{X}^{\mathrm{true}}=\mathcal{P}_{\Omega_{\mathrm{gt}}}\left(\boldsymbol{X}^{\mathrm{in}}\right),\quad\text{and}\quad\boldsymbol{X}^{\mathrm{false}}=\mathcal{P}_{\Omega_{\mathrm{gt}}^{\perp}}\left(\boldsymbol{X}^{\mathrm{in}}\right).\label{eq:DefnXtrue_false}
\end{equation}
That said, $\boldsymbol{X}^{\mathrm{true}}$ (resp. $\boldsymbol{X}^{\mathrm{false}}$)
consists of all correct (resp. incorrect) correspondences (i.e. non-zero
entries) encoded in $\boldsymbol{X}^{\mathrm{in}}$. This allows us
to write
\begin{equation}
\boldsymbol{W}_{ij}=\begin{cases}
-\boldsymbol{X}_{ij}^{\mathrm{false}}+\lambda\boldsymbol{E}_{ij}-\boldsymbol{X}_{ij}^{\mathrm{true}}+\lambda\boldsymbol{E}_{ij}^{\perp},\quad & \text{if }(i,j)\in\mathcal{G},\\
\lambda\boldsymbol{E}_{ij}+\lambda\boldsymbol{E}_{ij}^{\perp}, & \text{else},
\end{cases}\label{eq:DefnW_separate}
\end{equation}
where $\boldsymbol{E}$ and $\boldsymbol{E}^{\perp}$ are defined
to be
\begin{equation}
\boldsymbol{E}:=\mathcal{P}_{\Omega_{\mathrm{gt}}}\left(\boldsymbol{1}\cdot\boldsymbol{1}^{\top}\right),\quad\text{and}\quad\boldsymbol{E}^{\perp}:=\boldsymbol{1}\cdot\boldsymbol{1}^{\top}-\boldsymbol{E}.\label{eq:DefnE}
\end{equation}

We propose constructing the dual certificate $\boldsymbol{Y}$ by
producing three symmetric matrix components $\boldsymbol{Y}^{\mathrm{true},1}$,
$\boldsymbol{Y}^{\mathrm{true},2}$, and $\boldsymbol{Y}^{\mathrm{L}}$
separately, as follows. 
\begin{enumerate}
\item \emph{Construction of $\boldsymbol{Z}^{\mathrm{m}}$ and }$\boldsymbol{R}^{\mathrm{m}}$.
For any $\beta\geq0$, define $\alpha_{\beta}$ to be
\begin{equation}
\alpha_{\beta}:=\mathrm{arg}\min_{\alpha:\beta{\bf 1}\cdot{\bf 1}^{\top}-\alpha\boldsymbol{d}\cdot\boldsymbol{d}^{\top}\geq{\bf 0}}\left\Vert \beta{\bf 1}\cdot{\bf 1}^{\top}-\alpha\boldsymbol{d}\cdot\boldsymbol{d}^{\top}\right\Vert _{\infty}.\label{eq:DefnMm}
\end{equation}
By setting $\beta_{0}:=\lambda-\frac{\left(1-p_{\mathrm{true}}\right)p_{\mathrm{obs}}}{m}-\sqrt{\frac{c_{10}p_{\mathrm{obs}}\log\left(mn\right)}{np_{\mathrm{set}}^{3}}}$,
we produce \emph{$\boldsymbol{Z}^{\mathrm{m}}$} and $\boldsymbol{R}^{\mathrm{m}}$
as follows 
\begin{equation}
\boldsymbol{Z}^{\mathrm{m}}=\mathcal{P}_{\Omega_{\mathrm{gt}}^{\perp}}\left(\left(\lambda-\frac{\left(1-p_{\mathrm{true}}\right)p_{\mathrm{obs}}}{m}\right){\bf 1}\cdot{\bf 1}^{\top}-\alpha_{\beta_{0}}\boldsymbol{d}\cdot\boldsymbol{d}^{\top}\right)\label{eq:DefnZm}
\end{equation}
and
\begin{equation}
\boldsymbol{R}^{\mathrm{m}}=\mathcal{P}_{\Omega_{\mathrm{gt}}}\left(\left(\lambda-\frac{\left(1-p_{\mathrm{true}}\right)p_{\mathrm{obs}}}{m}\right){\bf 1}\cdot{\bf 1}^{\top}-\alpha_{\beta_{0}}\boldsymbol{d}\cdot\boldsymbol{d}^{\top}\right)\label{eq:DefnZm-diagonal}
\end{equation}
for some sufficiently large constant $c_{10}>0$. 
\item \emph{Construction of $\boldsymbol{Y}^{\mathrm{true},1}$ and $\boldsymbol{Y}^{\mathrm{true},2}$}.
We set 
\[
\boldsymbol{Y}_{ij}^{\mathrm{true},1}=\begin{cases}
-\boldsymbol{X}_{ij}^{\mathrm{true}}+\frac{\left(1-p_{\mathrm{true}}\right)p_{\mathrm{obs}}}{m}\boldsymbol{E}_{ij},\quad & \text{if }i<j,\\
\sum_{j=1}^{n}\left(\boldsymbol{X}_{ij}^{\mathrm{true}}-\frac{\left(1-p_{\mathrm{true}}\right)p_{\mathrm{obs}}}{m}\boldsymbol{E}_{ij}\right)\boldsymbol{\Pi}_{j}\boldsymbol{\Pi}_{i}^{\top},\quad & \text{if }i=j,
\end{cases}
\]
and
\[
\boldsymbol{Y}_{ij}^{\mathrm{true},2}=\begin{cases}
\boldsymbol{R}_{ij}^{\mathrm{m}},\quad & \text{if }i<j,\\
-\sum_{j=1}^{n}\boldsymbol{R}_{ij}^{\mathrm{m}}\boldsymbol{\Pi}_{j}\boldsymbol{\Pi}_{i}^{\top},\quad & \text{if }i=j.
\end{cases}
\]

\item \emph{Construction of $\boldsymbol{Y}^{\mathrm{L}}$ and $\boldsymbol{Z}^{\mathrm{L}}$
via an iterative procedure.} Next, we generate $\boldsymbol{Y}^{\mathrm{L}}$
via the following iterative procedure. Here, for any matrix $\boldsymbol{M}$,
we let $\boldsymbol{M}_{ij}(s,s')$ represent the entry in the $(i,j)^{\mathrm{th}}$
block $\boldsymbol{M}_{ij}$ that encodes the correspondence from
$s$ to $s'$.\\\quad\\%
\begin{tabular}{>{\raggedright}p{6in}}
\hline 
\textbf{Construction of a dual certificate $\boldsymbol{Y}^{\mathrm{L}}$.}\tabularnewline
\hline 
\noalign{\vskip\doublerulesep} $\quad\quad$1. \textbf{initialize}:
Set the symmetric matrix $\boldsymbol{Y}^{\mathrm{L},0}$ such that
\[
\boldsymbol{Y}_{ij}^{\mathrm{L},0}=\begin{cases}
-\boldsymbol{X}_{ij}^{\mathrm{false}}+\frac{\left(1-p_{\mathrm{true}}\right)p_{\mathrm{obs}}}{m}\boldsymbol{E}_{ij}^{\perp},\quad & \text{if }i<j,\\
{\bf 0}, & \text{if }i=j,
\end{cases}
\]
and start with $\boldsymbol{Z}^{\mathrm{L}}=\boldsymbol{0}$. \tabularnewline
$\quad\quad$2. \textbf{for} each\emph{ non-zero} entry $\boldsymbol{Y}_{ij}^{\mathrm{L},0}(s,s')$:\tabularnewline
$\quad\quad$3. $\quad\quad$Set $a=\boldsymbol{Y}_{ij}^{\mathrm{L},0}(s,s')$,
$B_{i,j,s,s'}=\left\{ l\notin\left\{ i,j\right\} \mid(s,s')\in\mathcal{S}_{l}\right\} $
and $n_{i,j}^{s,s'}=\left|B_{i,j,s,s'}\right|$.\tabularnewline
$\quad\quad$4. $\quad\quad$\textbf{for} each set\emph{ $l\in B_{i,j,s,s'}$}:
perform 
\[
\quad\quad\begin{cases}
\boldsymbol{Z}_{il}^{\mathrm{L}}\left(s,s'\right)\leftarrow\boldsymbol{Z}_{il}^{\mathrm{L}}\left(s,s'\right)-\frac{a}{n_{i,j}^{s,s'}},\quad & \boldsymbol{Z}_{li}^{\mathrm{L}}\left(s',s\right)\leftarrow\boldsymbol{Z}_{li}^{\mathrm{L}}\left(s',s\right)-\frac{a}{n_{i,j}^{s,s'}},\\
\boldsymbol{Z}_{lj}^{\mathrm{L}}\left(s,s'\right)\leftarrow\boldsymbol{Z}_{lj}^{\mathrm{L}}\left(s,s'\right)-\frac{a}{n_{i,j}^{s,s'}},\quad & \boldsymbol{Z}_{jl}^{\mathrm{L}}\left(s',s\right)\leftarrow\boldsymbol{Z}_{jl}^{\mathrm{L}}\left(s',s\right)-\frac{a}{n_{i,j}^{s,s'}},\\
\boldsymbol{Z}_{ll}^{\mathrm{L}}\left(s,s'\right)\leftarrow\boldsymbol{Z}_{ll}^{\mathrm{L}}\left(s,s'\right)+\frac{a}{n_{i,j}^{s,s'}},\quad & \boldsymbol{Z}_{ll}^{\mathrm{L}}\left(s',s\right)\leftarrow\boldsymbol{Z}_{ll}^{\mathrm{L}}\left(s',s\right)+\frac{a}{n_{i,j}^{s,s'}}.
\end{cases}
\]
\tabularnewline
$\quad\quad$5. \textbf{output}: $\boldsymbol{Y}^{\mathrm{L}}=\boldsymbol{Y}^{\mathrm{L},0}+\boldsymbol{Z}^{\mathrm{L}}$.\tabularnewline
\hline 
\end{tabular}\vspace{10pt}

\item \emph{Construction of $\boldsymbol{Y}$ and $\boldsymbol{Z}$: }define
$\boldsymbol{Y}$ and $\boldsymbol{Z}$ such that
\begin{align}
\boldsymbol{Y} & =\boldsymbol{Y}^{\mathrm{true},1}+\boldsymbol{Y}^{\mathrm{true},2}+\boldsymbol{Y}^{\mathrm{L}}+\alpha_{\beta_{0}}\boldsymbol{d}\cdot\boldsymbol{d}^{\top},\label{eq:DualY}\\
\boldsymbol{Z}_{ij} & =\begin{cases}
\boldsymbol{Z}_{ij}^{\mathrm{m}}-\boldsymbol{Z}_{ij}^{\mathrm{L}},\quad & \text{if }i\neq j,\\
{\bf 0}, & \text{if }i=j.
\end{cases}\label{eq:DualZ}
\end{align}

\end{enumerate}
\begin{remark}Below is a toy example to illustrate the proposed procedure
for constructing $\boldsymbol{Z}^{\mathrm{L}}$. Consider three sets
$\mathcal{S}_{1}=\left\{ 1,2\right\} $, $\mathcal{S}_{2}=\left\{ 1,3\right\} $,
$\mathcal{S}_{3}=\left\{ 2,3,4\right\} $, and $\mathcal{S}_{4}=\left\{ 1,3\right\} $.
Suppose that $\boldsymbol{Y}^{\mathrm{L},0}$ only contains two non-zero
entries that incorrectly maps elements $1$ to $3$ in $\boldsymbol{Y}_{12}^{\mathrm{L},0}$,
as illustrated in Fig. \ref{fig:BadPointExample}(a). The resulting
$\boldsymbol{Z}^{\mathrm{L}}$ is shown in Fig. \ref{fig:BadPointExample}(b).
Clearly, $\boldsymbol{Y}^{\mathrm{L},0}+\boldsymbol{Z}^{\mathrm{L}}$
obeys $\boldsymbol{Y}^{\mathrm{L},0}+\boldsymbol{Z}^{\mathrm{L}}\in T_{\mathrm{gt}}^{\perp}$.
\end{remark}

\begin{center}
\begin{figure}
\centering

\begin{tabular}{cc}
\includegraphics[scale=0.43]{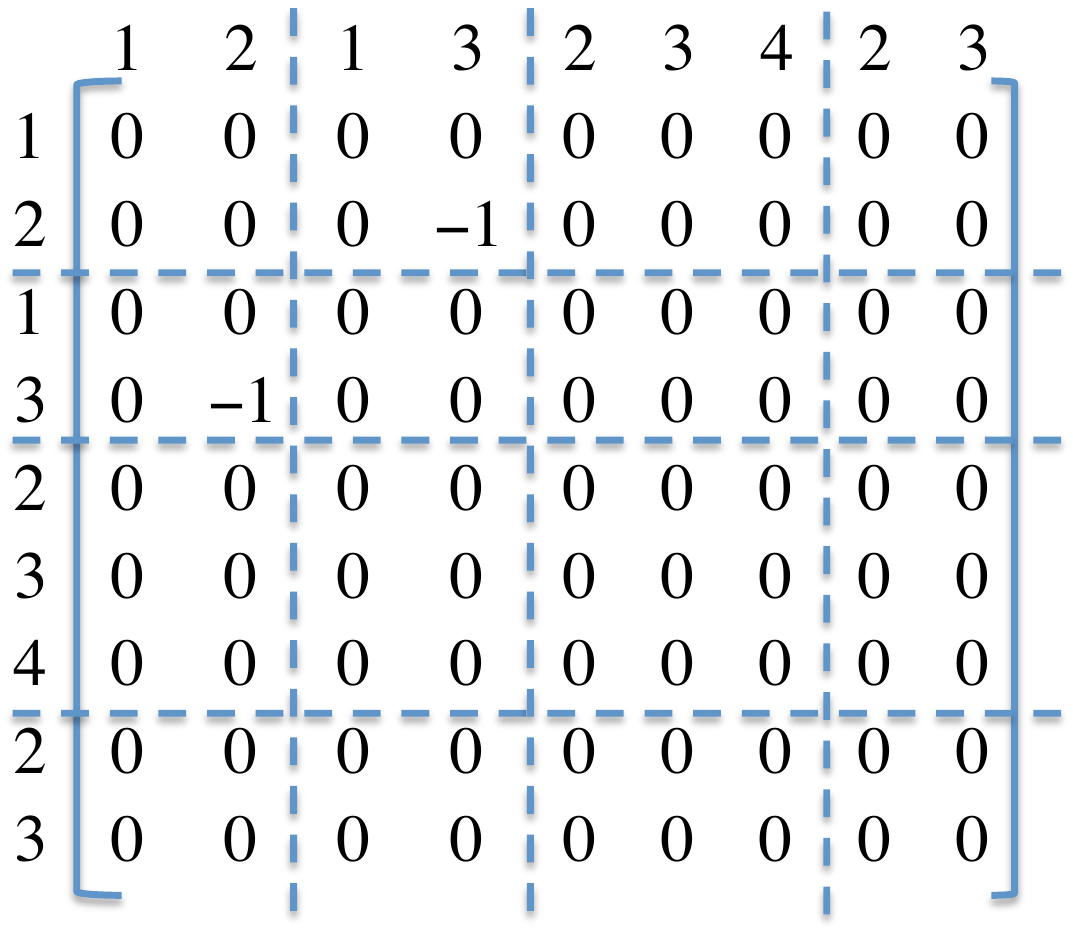} & \includegraphics[scale=0.43]{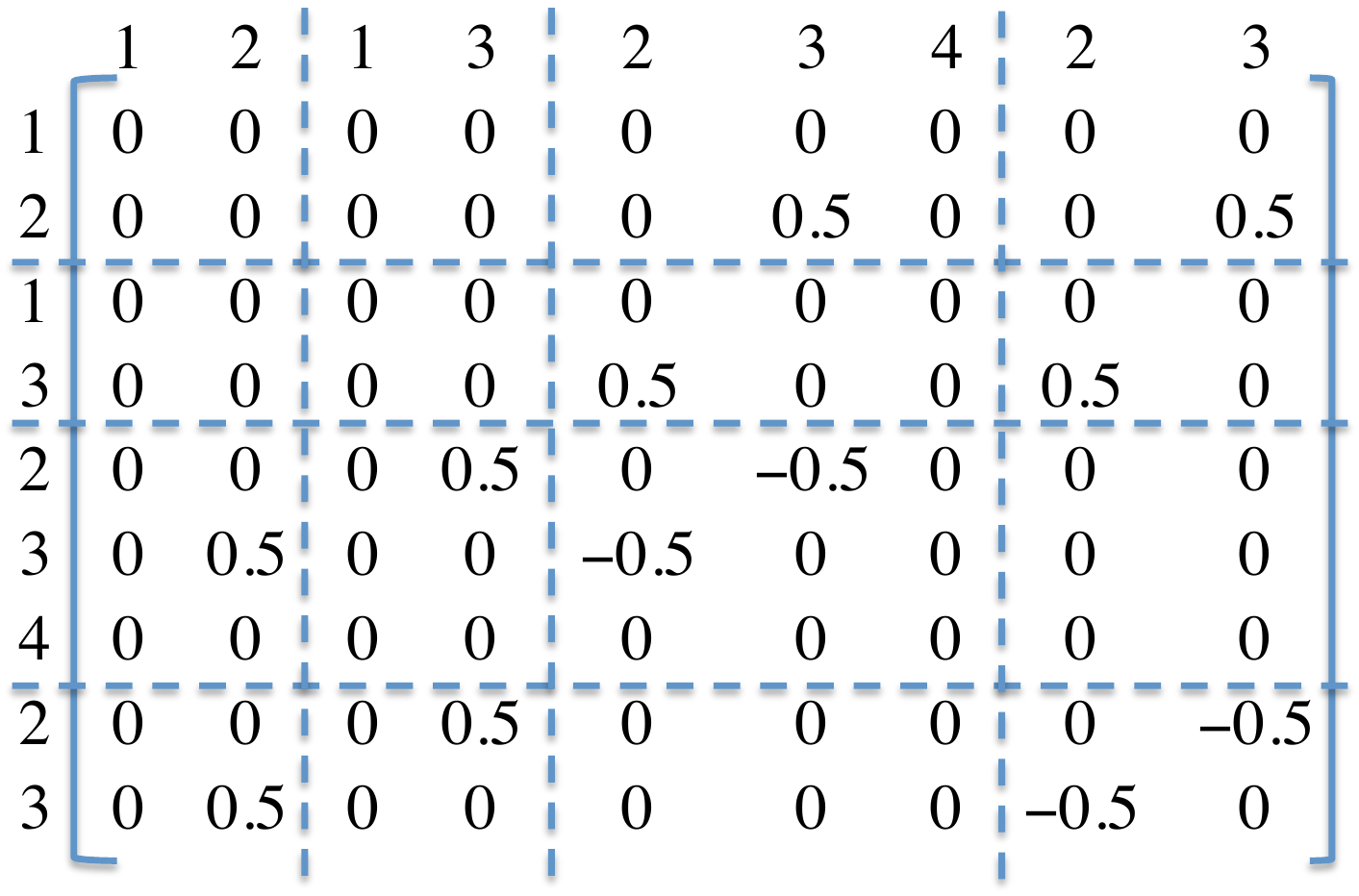}\tabularnewline
(a) Input $\boldsymbol{Y}^{0}$ & (b) $\boldsymbol{Z}^{\mathrm{L}}$\tabularnewline
\end{tabular}\caption{\label{fig:BadPointExample}A toy example for constructing $\boldsymbol{Z}^{\mathrm{L}}$,
where 4 shapes $\mathcal{S}_{1}=\left\{ 1,2\right\} $, $\mathcal{S}_{2}=\left\{ 1,3\right\} $,
$\mathcal{S}_{3}=\left\{ 2,3,4\right\} $, and $\mathcal{S}_{4}=\left\{ 2,3\right\} $
are considered. The input incorrectly maps point 1 to 3 between $\mathcal{S}_{1}$
and $\mathcal{S}_{2}$, and both points are contained in $\mathcal{S}_{3}$
and $\mathcal{S}_{4}$. One can check that $\boldsymbol{Y}^{0}+\boldsymbol{Z}^{\mathrm{L}}\in T_{\mathrm{gt}}^{\perp}$
in this example.}
\end{figure}

\par\end{center}

With the above construction procedure, one can easily verify that:

(1) $\boldsymbol{Y}^{\mathrm{true},1}$, $\boldsymbol{Y}^{\mathrm{true},2}$
and $\boldsymbol{Y}^{\mathrm{L}}$ are all contained in the space
$T_{\mathrm{gt}}^{\perp}$; 

(2) $\mathcal{P}_{\Omega^{\mathrm{gt}}}\left(\boldsymbol{Z}\right)={\bf 0}$; 

(3) If we set $\boldsymbol{M}^{\mathrm{m}}:=\alpha_{\beta_{0}}\boldsymbol{d}\cdot\boldsymbol{d}^{\top}$,
then for any $i\neq j$,
\begin{align}
\boldsymbol{Y}_{ij}= & \boldsymbol{Y}_{ij}^{\mathrm{true},1}+\boldsymbol{Y}_{ij}^{\mathrm{true},2}+\boldsymbol{Y}_{ij}^{\mathrm{L}}+\boldsymbol{M}_{ij}^{\mathrm{m}}\nonumber \\
= & -\boldsymbol{X}_{ij}^{\mathrm{true}}+\frac{\left(1-p_{\mathrm{true}}\right)p_{\mathrm{obs}}}{m}\boldsymbol{E}_{ij}+\boldsymbol{R}_{ij}^{\mathrm{m}}-\boldsymbol{X}_{ij}^{\mathrm{false}}+\frac{\left(1-p_{\mathrm{true}}\right)p_{\mathrm{obs}}}{m}\boldsymbol{E}_{ij}^{\perp}+\boldsymbol{Z}_{ij}^{\mathrm{L}}+\boldsymbol{M}_{ij}^{\mathrm{m}}\nonumber \\
= & -\boldsymbol{X}_{ij}^{\mathrm{true}}-\boldsymbol{X}_{ij}^{\mathrm{false}}+\lambda{\bf 1}\cdot{\bf 1}^{\top}-\left(\left(\lambda-\frac{\left(1-p_{\mathrm{true}}\right)p_{\mathrm{obs}}}{m}\right){\bf 1}\cdot{\bf 1}^{\top}-\boldsymbol{R}_{ij}^{\mathrm{m}}\right)+\boldsymbol{Z}_{ij}^{\mathrm{L}}+\boldsymbol{M}_{ij}^{\mathrm{m}}\nonumber \\
= & \boldsymbol{W}_{ij}-\left(\boldsymbol{Z}_{ij}^{\mathrm{m}}-\boldsymbol{Z}_{ij}^{\mathrm{L}}\right).
\end{align}

Furthermore, from Lemma \ref{lemma:MeanApprox} one can obtain
\[
\left\Vert \boldsymbol{d}\cdot\boldsymbol{d}^{\top}-\boldsymbol{1}\cdot\boldsymbol{1}^{\top}\right\Vert _{\infty}=O\left(\sqrt{\frac{\log\left(mn\right)}{np_{\mathrm{set}}}}\right).
\]
This taken collectively with (\ref{eq:DefnMm}) and the assumption
(\ref{eq:Lambda_Range}) ensures that 
\begin{equation}
\alpha_{\beta_{0}}=\lambda-\frac{\left(1-p_{\mathrm{true}}\right)p_{\mathrm{obs}}}{m}-O\left(\sqrt{\frac{c_{10}p_{\mathrm{obs}}\log\left(mn\right)}{np_{\mathrm{set}}^{3}}}\right)>0\label{eq:alpha_beta0_pos}
\end{equation}
as long as $p_{\mathrm{set}}^{3}>\frac{c_{15}\log\left(mn\right)}{n}$
for some constant $c_{15}>0$.

Consequently, we will establish that $\boldsymbol{Y}$ and $\boldsymbol{Z}$
are valid dual certificates if they satisfy
\begin{equation}
\begin{cases}
\text{all entries of }\boldsymbol{Z}_{ij}^{\mathrm{m}}-\boldsymbol{Z}_{ij}^{\mathrm{L}}\text{ }(\forall i\neq j)\text{ within }\Omega_{\mathrm{gt}}^{\perp}\text{ are strictly positive};\\
\boldsymbol{Y}^{\mathrm{true},1}+\boldsymbol{Y}^{\mathrm{true},2}+\boldsymbol{Y}^{\mathrm{L}}\succeq{\bf 0}.
\end{cases}\label{eq:RemainingCondition}
\end{equation}
Such conditions will be established through the following lemmas.

\begin{lem}\label{lemma:BoundYl_Zl}There are some universal constants
$c_{0},c_{1}>0$ such that
\[
\left\Vert \boldsymbol{Y}^{\mathrm{L}}\right\Vert \leq c_{0}\sqrt{\frac{np_{\mathrm{obs}}\log\left(mn\right)}{p_{\mathrm{set}}^{2}}}
\]
and
\begin{align*}
\left\Vert \boldsymbol{Z}_{ij}^{\mathrm{L}}\right\Vert _{\infty} & \leq\sqrt{\frac{c_{1}p_{\mathrm{obs}}\log\left(mn\right)}{np_{\mathrm{set}}^{3}}},\quad1\leq i<j\leq n
\end{align*}
with probability exceeding $1-\frac{1}{(mn)^{4}}$.\end{lem}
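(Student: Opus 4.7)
The plan is to write $\boldsymbol{Y}^{\mathrm{L}} = \boldsymbol{Y}^{\mathrm{L},0}+\boldsymbol{Z}^{\mathrm{L}}$, bound the two pieces separately, and combine via the triangle inequality. The zero-mean design is the single most important feature: by construction $\mathbb{E}[\boldsymbol{X}^{\mathrm{false}}_{ij}(s,s')] = \frac{(1-p_{\mathrm{true}})p_{\mathrm{obs}}}{m}$ on $\Omega_{\mathrm{gt}}^{\perp}\cap(\mathcal{S}_i\times\mathcal{S}_j)$, so every entry of $\boldsymbol{Y}^{\mathrm{L},0}_{ij}$ is a centered random variable. Also, across distinct unordered pairs $\{i,j\}$ the blocks $\boldsymbol{Y}^{\mathrm{L},0}_{ij}$ are jointly independent, which opens the door to matrix concentration.

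First, I would bound $\|\boldsymbol{Y}^{\mathrm{L},0}\|$ by invoking Lemma~\ref{lemma:MomentMethod}. Each off-diagonal block is zero-mean, bounded in operator norm by a constant (it is a partial-permutation-type matrix plus an $O(p_{\mathrm{obs}})$ rank-one correction), and with $\mathbb{E}\|\boldsymbol{Y}^{\mathrm{L},0}_{ij}\|^{2}\lesssim p_{\mathrm{obs}}$ since the block is effectively zero with probability $1-p_{\mathrm{obs}}(1-p_{\mathrm{true}})$. After rescaling by $\sqrt{p_{\mathrm{obs}}}$ and applying the lemma, one obtains $\|\boldsymbol{Y}^{\mathrm{L},0}\|\lesssim\sqrt{np_{\mathrm{obs}}}\,\log(mn)$, which is already dominated by the target $\sqrt{np_{\mathrm{obs}}\log(mn)/p_{\mathrm{set}}^{2}}$.

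Next I would handle the entry-wise bound on $\boldsymbol{Z}^{\mathrm{L}}_{ij}$. Fix $(i,k,s,s')$; by inspection of Step~4 of the construction,
\[
\boldsymbol{Z}^{\mathrm{L}}_{ik}(s,s')=-\sum_{j\,:\,k\in B_{i,j,s,s'}}\frac{a_{ij}(s,s')}{n_{i,j}^{s,s'}},
\qquad a_{ij}(s,s'):=\boldsymbol{Y}^{\mathrm{L},0}_{ij}(s,s'),
\]
a sum of independent zero-mean variables (after conditioning on the set memberships). Lemma~\ref{lemma:Concentration} ensures $n_{i,j}^{s,s'}\gtrsim np_{\mathrm{set}}^{2}$, so each summand is bounded by $O(1/(np_{\mathrm{set}}^{2}))$ and has variance $O(p_{\mathrm{obs}}/(m\,n^{2}p_{\mathrm{set}}^{4}))$; the number of $j$'s contributing is at most $\sim np_{\mathrm{set}}$. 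Bernstein's inequality then gives a tail of order $\sqrt{p_{\mathrm{obs}}\log(mn)/(mnp_{\mathrm{set}}^{3})}$, and after a union bound over the $\le(mn)^{2}$ entries and over the event of Lemma~\ref{lemma:Concentration}, we obtain $\|\boldsymbol{Z}^{\mathrm{L}}_{ij}\|_{\infty}\lesssim\sqrt{p_{\mathrm{obs}}\log(mn)/(np_{\mathrm{set}}^{3})}$ with probability at least $1-(mn)^{-4}$, which proves the second half of the lemma.

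The main obstacle is bounding the operator norm $\|\boldsymbol{Z}^{\mathrm{L}}\|$ — a crude combination of the entry-wise bound with a row-sparsity count loses several factors, because each row of $\boldsymbol{Z}^{\mathrm{L}}$ can have $\Theta(mnp_{\mathrm{set}}^{2})$ nonzero entries. To obtain the sharp $\sqrt{np_{\mathrm{obs}}\log(mn)/p_{\mathrm{set}}^{2}}$ scaling, I would represent $\boldsymbol{Z}^{\mathrm{L}}=\sum_{j}\boldsymbol{A}_{j}$ where $\boldsymbol{A}_{j}$ depends only on the $j$-th block-column of $\boldsymbol{Y}^{\mathrm{L},0}$ (so the $\boldsymbol{A}_{j}$'s are jointly independent after conditioning on $\{\mathcal{S}_{l}\}$), compute $\mathbb{E}\|\boldsymbol{A}_{j}\|^{2}$ using that the distribution-step divides the mass over $\Theta(np_{\mathrm{set}}^{2})$ sets $l$, and apply the matrix-Bernstein argument underlying Lemma~\ref{lemma:MomentMethod}. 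The $p_{\mathrm{set}}^{-2}$ factor in the bound is then exactly the variance amplification $1/n_{i,j}^{s,s'}\cdot m p_{\mathrm{set}}\sim 1/p_{\mathrm{set}}$ arising from this redistribution, squared under the Bernstein variance. Combining the two pieces via $\|\boldsymbol{Y}^{\mathrm{L}}\|\le\|\boldsymbol{Y}^{\mathrm{L},0}\|+\|\boldsymbol{Z}^{\mathrm{L}}\|$ completes the proof.
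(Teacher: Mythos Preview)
Your outline matches the paper's proof almost exactly for two of the three pieces: the bound on $\|\boldsymbol{Y}^{\mathrm{L},0}\|$ via Lemma~\ref{lemma:MomentMethod} after rescaling by $\sqrt{p_{\mathrm{obs}}}$, and the entrywise bound on $\boldsymbol{Z}^{\mathrm{L}}_{ij}$ via a scalar concentration inequality on a sum of $\Theta(np_{\mathrm{set}})$ independent centered summands of size $O(1/(np_{\mathrm{set}}^{2}))$ (the paper uses Hoeffding, you use Bernstein, but the conclusion is the same).

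The one substantive difference is the treatment of $\|\boldsymbol{Z}^{\mathrm{L}}\|$. The paper does \emph{not} group by column index $j$; it decomposes $\boldsymbol{Z}^{\mathrm{L}}=\sum_{(i,j)\in\mathcal{G}}(\boldsymbol{A}^{i,j}-\mathbb{E}\boldsymbol{A}^{i,j})$ by the \emph{source block pair} $(i,j)$, where $\boldsymbol{A}^{i,j}$ is the piece of $\boldsymbol{Z}^{\mathrm{L}}$ produced by redistributing $-\boldsymbol{X}^{\mathrm{false}}_{ij}$ alone. The key structural observation is that distinct nonzero entries of the partial permutation $\boldsymbol{X}^{\mathrm{false}}_{ij}$ redistribute into disjoint row/column index sets, so after permutation $\boldsymbol{A}^{i,j}$ is block-diagonal and each block has Frobenius norm at most $\sqrt{8/n_{s,t}}$; this gives $\|\boldsymbol{A}^{i,j}\|\lesssim 1/\sqrt{np_{\mathrm{set}}^{2}}$ deterministically and $\|\mathbb{E}\boldsymbol{A}^{i,j}(\boldsymbol{A}^{i,j})^{\top}\|\lesssim p_{\mathrm{obs}}/(np_{\mathrm{set}}^{2})$. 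Matrix Bernstein over the $O(n^{2})$ independent pairs then yields variance $O(np_{\mathrm{obs}}/p_{\mathrm{set}}^{2})$ and hence the stated bound. Your column-indexed decomposition $\sum_{j}\boldsymbol{A}_{j}$ can be pushed through, but the variance heuristic you sketch (``$1/n_{i,j}^{s,s'}\cdot mp_{\mathrm{set}}\sim 1/p_{\mathrm{set}}$, squared'') is not a computation of $\|\mathbb{E}\boldsymbol{A}_{j}\boldsymbol{A}_{j}^{\top}\|$, and without the block-diagonal observation you have no clean a.s.\ bound on $\|\boldsymbol{A}_{j}\|$. The pair-wise decomposition is what buys the sharp per-summand control; grouping by $j$ trades that for fewer summands and makes the bookkeeping harder, not easier.
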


\begin{proof}See Appendix \ref{sec:Proof_lemma:BoundYl_Zl}.\end{proof}

\begin{lem}\label{lemma:BoundYtrue}There are some universal constants
$c_{5},c_{6},c_{7}>0$ such that if $p_{\mathrm{true}}p_{\mathrm{obs}}p_{\mathrm{set}}>\frac{c_{7}\log^{2}\left(mn\right)}{n}$
and $\lambda<\frac{\sqrt{p_{\mathrm{obs}}\log\left(mn\right)}}{p_{\mathrm{set}}}$,
then with probability exceeding $1-\frac{1}{(mn)^{10}}$, one has
\[
\left\Vert \boldsymbol{Y}^{\mathrm{true},2}\right\Vert \leq c_{5}\sqrt{\frac{np_{\mathrm{obs}}}{p_{\mathrm{set}}}}\log\left(mn\right),
\]
and
\begin{align*}
\left\langle \boldsymbol{v}\boldsymbol{v}^{\top},\boldsymbol{Y}^{\mathrm{true},1}\right\rangle  & \geq\frac{1}{2}np_{\mathrm{set}}p_{\mathrm{true}}p_{\mathrm{obs}}-c_{6}\sqrt{np_{\mathrm{set}}p_{\mathrm{obs}}}\log\left(mn\right)
\end{align*}
for all unit vector $\boldsymbol{v}$ satisfying $\boldsymbol{v}\boldsymbol{v}^{\top}\in T_{\mathrm{gt}}^{\perp}$.\end{lem}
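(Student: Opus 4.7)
Both bounds rely on a common structural observation that reduces the problem to spectral analysis of $m$ independent Laplacian-like blocks. After simultaneously permuting rows and columns to group entries by their universe element $s\in[m]$, both $\boldsymbol{Y}^{\mathrm{true},1}$ and $\boldsymbol{Y}^{\mathrm{true},2}$ become block-diagonal, with $m$ blocks $L_s^{(1)}$ and $L_s^{(2)}$ of respective sizes $n_s\times n_s$ indexed by $\mathcal{I}_s=\{i:s\in\mathcal{S}_i\}$. This follows because the support restriction to $\Omega_{\mathrm{gt}}$ forces every off-diagonal block $(i<j)$ to be nonzero only at the diagonal correspondences $(s,s)$ with $s\in\mathcal{S}_i\cap\mathcal{S}_j$; the prescribed diagonal blocks $\boldsymbol{Y}_{ii}$ are designed so that each $L_s^{(\cdot)}$ is a valid Laplacian (its row sums vanish), which is equivalent to the requirement $\boldsymbol{Y}^{\mathrm{true},\cdot}\in T_{\mathrm{gt}}^\perp$.

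\textbf{Bounding $\|\boldsymbol{Y}^{\mathrm{true},2}\|$.} The first step is to show $\boldsymbol{R}^{\mathrm{m}}$ has uniformly small entries. Because $\alpha_{\beta_0}$ in (\ref{eq:DefnMm}) is the largest scalar with $\beta_0\mathbf{1}\mathbf{1}^\top\geq\alpha\boldsymbol{d}\boldsymbol{d}^\top$ entrywise, $\|\beta_0\mathbf{1}\mathbf{1}^\top-\alpha_{\beta_0}\boldsymbol{d}\boldsymbol{d}^\top\|_\infty$ equals $\beta_0\bigl(1-\min_{i,j}d_id_j/\max_{i,j}d_id_j\bigr)$. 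Combining Lemmas~\ref{lemma:Concentration} and~\ref{lemma:MeanApprox}, this is at most $O\!\bigl(\lambda\sqrt{\log(mn)/(np_{\mathrm{set}})}\bigr)$. Each block $L_s^{(2)}$ is a Laplacian whose off-diagonal entries are bounded in magnitude by $\|\boldsymbol{R}^{\mathrm{m}}\|_\infty$, so Gershgorin's circle theorem yields $\|L_s^{(2)}\|\leq 2n_s\|\boldsymbol{R}^{\mathrm{m}}\|_\infty$. Substituting $n_s\leq 2np_{\mathrm{set}}$ (Lemma~\ref{lemma:Concentration}) and $\lambda\leq\sqrt{p_{\mathrm{obs}}\log(mn)}/p_{\mathrm{set}}$ produces the target bound $O\!\bigl(\sqrt{np_{\mathrm{obs}}/p_{\mathrm{set}}}\log(mn)\bigr)$, which block-diagonality transfers to the full matrix.

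\textbf{Lower-bounding $\langle\boldsymbol{v}\boldsymbol{v}^\top,\boldsymbol{Y}^{\mathrm{true},1}\rangle$.} The constraint $\boldsymbol{v}\boldsymbol{v}^\top\in T_{\mathrm{gt}}^\perp$ forces $\boldsymbol{v}\perp\boldsymbol{u}_s$ for every $s$, which in permuted coordinates means the subvector $\boldsymbol{v}_s\in\mathbb{R}^{n_s}$ is orthogonal to $\mathbf{1}_{n_s}$; by block-diagonality,
\[
\langle\boldsymbol{v}\boldsymbol{v}^\top,\boldsymbol{Y}^{\mathrm{true},1}\rangle=\sum_{s=1}^m \boldsymbol{v}_s^\top L_s^{(1)}\boldsymbol{v}_s\geq\min_{s}a(L_s^{(1)}),
\]
where $a(\cdot)$ denotes algebraic connectivity. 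Each $L_s^{(1)}$ decomposes exactly as $L_s^{(1)}=L_{\mathrm{obs}}^{(s)}-\tfrac{(1-p_{\mathrm{true}})p_{\mathrm{obs}}}{m}L_{K_{n_s}}$, where $L_{\mathrm{obs}}^{(s)}$ is the unweighted Laplacian of the random graph on $\mathcal{I}_s$ with edges $\{X_{ij}^{\mathrm{true}}(s,s)\}_{i<j}$---independent Bernoullis with parameter $p_{\mathrm{obs}}\bigl(p_{\mathrm{true}}+(1-p_{\mathrm{true}})/m\bigr)$. Since $L_{K_{n_s}}$ acts as $n_s I$ on $\mathbf{1}^\perp$, Lemma~\ref{lemma:SpectralGapTight} applied to $L_{\mathrm{obs}}^{(s)}$---valid under the hypothesis $p_{\mathrm{true}}p_{\mathrm{obs}}p_{\mathrm{set}}\gtrsim\log^2(mn)/n$---gives $a(L_s^{(1)})\geq n_sp_{\mathrm{obs}}p_{\mathrm{true}}-O\!\bigl(\sqrt{n_sp_{\mathrm{obs}}}\log(mn)\bigr)$. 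A union bound over $s\in[m]$ together with $n_s\geq np_{\mathrm{set}}/2$ (Lemma~\ref{lemma:Concentration}) delivers the stated lower bound.

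\textbf{Main obstacle.} The most delicate step is rigorously verifying the Laplacian-block decomposition: one must confirm that the cross-element blocks vanish (from the support restriction to $\Omega_{\mathrm{gt}}$) and that the prescribed diagonal-block formulas combine with the off-diagonal entries to yield exact Laplacians on each $\mathcal{I}_s$. Once this structural reduction is in place, the remaining arguments are routine applications of the auxiliary lemmas. A secondary subtlety worth noting is that the Bernoulli parameter for ``observed and correctly assigned at $(s,s)$'' is $p_{\mathrm{obs}}(p_{\mathrm{true}}+(1-p_{\mathrm{true}})/m)$ rather than $p_{\mathrm{obs}}p_{\mathrm{true}}$, because outlier maps can by chance send $s$ to $s$; the shift $\tfrac{(1-p_{\mathrm{true}})p_{\mathrm{obs}}}{m}$ baked into the definition of $\boldsymbol{Y}^{\mathrm{true},1}$ is precisely calibrated to cancel this bias, and is the reason that same correction permeates the entire construction.
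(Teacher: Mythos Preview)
Your proposal is correct and follows essentially the same route as the paper: both arguments permute rows and columns so that $\boldsymbol{Y}^{\mathrm{true},1}$ and $\boldsymbol{Y}^{\mathrm{true},2}$ become block-diagonal with $m$ blocks of size $n_s$, bound $\|\boldsymbol{R}^{\mathrm{m}}\|_\infty$ via Lemma~\ref{lemma:MeanApprox} to control $\boldsymbol{Y}^{\mathrm{true},2}$ through a row-sum (Gershgorin / $\ell_1$) estimate, and invoke Lemma~\ref{lemma:SpectralGapTight} on each block to obtain the algebraic-connectivity lower bound for $\boldsymbol{Y}^{\mathrm{true},1}$.

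The only noteworthy difference is in how the $s$-block of $\boldsymbol{Y}^{\mathrm{true},1}$ is split. The paper writes it as the Laplacian of the Erd\H{o}s--R\'enyi graph with edge probability $p_{\mathrm{true}}p_{\mathrm{obs}}$ (the ``non-corrupted'' piece $\overline{\boldsymbol{Y}}^{\mathrm{true},1}$) plus a zero-mean perturbation $\tilde{\boldsymbol{Y}}^{\mathrm{true},1}$ controlled by Lemma~\ref{lemma:MomentMethod}. You instead write it as the Laplacian of the Erd\H{o}s--R\'enyi graph with the larger edge probability $p_{\mathrm{obs}}\bigl(p_{\mathrm{true}}+(1-p_{\mathrm{true}})/m\bigr)$ minus the deterministic correction $\tfrac{(1-p_{\mathrm{true}})p_{\mathrm{obs}}}{m}L_{K_{n_s}}$. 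Your decomposition is slightly more economical, since it avoids a second concentration step (Lemma~\ref{lemma:MomentMethod}) and instead subtracts an exact multiple of $n_s$ on $\mathbf{1}^\perp$; the paper's split has the minor advantage that the leading random graph already has the target parameter $p_{\mathrm{true}}p_{\mathrm{obs}}$, so no cancellation is needed afterward. Either way the resulting bound is the same.
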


\begin{proof}See Appendix \ref{sec:Proof_lemma:BoundYtrue}.\end{proof}

Combining Lemmas \ref{lemma:BoundYl_Zl} and \ref{lemma:BoundYtrue}
yields that there exists an absolute constant $c_{0}>0$ such that
if
\[
p_{\mathrm{true}}>c_{0}\frac{\log^{2}\left(mn\right)}{\sqrt{np_{\mathrm{obs}}p_{\mathrm{set}}^{4}}},
\]
then 
\[
\boldsymbol{Y}=\boldsymbol{Y}^{\mathrm{true},1}+\boldsymbol{Y}^{\mathrm{true},2}+\boldsymbol{Y}^{\mathrm{L}}\succeq{\bf 0}.
\]

On the other hand, observe that all entries of the non-negative matrix
$\boldsymbol{Z}^{\mathrm{m}}$ lying in the index set $\Omega_{\mathrm{gt}}^{\perp}$
are \emph{bounded below} in magnitude by $\sqrt{\frac{c_{10}p_{\mathrm{obs}}\log\left(mn\right)}{np_{\mathrm{set}}^{3}}}$.
For sufficiently large $c_{10}$, one can conclude that all entries
of $\boldsymbol{Z}_{il}^{\mathrm{m}}-\boldsymbol{Z}_{il}^{\mathrm{L}}$
outside $\Omega_{\mathrm{gt}}$ are strictly positive. 

So far we have justified that $\boldsymbol{Y}$ and $\boldsymbol{Z}$
satisfy (\ref{eq:RemainingCondition}), thereby certifying that the
proposed algorithm correctly recovers the ground-truth matching.

\section{Proofs of Auxiliary Lemmas\label{sec:ProofAuxiliaryLemmas}}

\subsection{Proof of Lemma \ref{lemma:MeanApprox}\label{sec:Proof_lemma:SpectralGapTight-1}}

Denote by $\overline{\boldsymbol{A}}:={\bf 1}_{N}\cdot{\bf 1}_{N}^{T}$.
From Bernstein inequality, $n_{i}$ sharply concentrates around $np$
such that if $p>\frac{c_{6}\log^{2}\left(mn\right)}{n}$
\begin{equation}
\left|n_{i}-np\right|\leq c_{5}\sqrt{np\log(mn)},\quad\quad\forall1\leq i\leq m\label{eq:ni_np_gap}
\end{equation}
with probability exceeding $1-(mn)^{-10}$, where $c_{5},c_{6}>0$
are some absolute constants. 

The bound (\ref{eq:ni_np_gap}) also implies that
\begin{align*}
\left\Vert \boldsymbol{I}-\left[\begin{array}{cccc}
\frac{np}{n_{1}}\\
 & \frac{np}{n_{2}}\\
 &  & \ddots\\
 &  &  & \frac{np}{n_{m}}
\end{array}\right]\right\Vert  & \leq\max_{1\leq i\leq m}\frac{\left|n_{i}-np\right|}{n_{i}}\leq\frac{c_{5}\sqrt{np\log(mn)}}{np-c_{5}\sqrt{np\log(mn)}}\\
 & \leq2c_{5}\sqrt{\frac{\log(mn)}{np}}.
\end{align*}
Similarly, one has 
\[
\left|N-nmp\right|\leq c_{5}\sqrt{pmn\log(mn)}
\]
with probability exceeding $1-(mn)^{-10}$, which implies that
\[
\left\Vert \overline{\boldsymbol{A}}\right\Vert =N\leq nmp+c_{5}\sqrt{pmn\log(mn)}<2nmp.
\]

Rewrite $\boldsymbol{A}$ as
\[
\boldsymbol{A}:=\left[\begin{array}{ccc}
\frac{np}{n_{1}}\mathrm{Diag}\left({\bf 1}_{n_{1}}\right)\\
 & \ddots\\
 &  & \frac{np}{n_{m}}\mathrm{Diag}\left({\bf 1}_{n_{m}}\right)
\end{array}\right]\cdot\overline{\boldsymbol{A}}\cdot\left[\begin{array}{ccc}
\frac{np}{n_{1}}\mathrm{Diag}\left({\bf 1}_{n_{1}}\right)\\
 & \ddots\\
 &  & \frac{np}{n_{m}}\mathrm{Diag}\left({\bf 1}_{n_{m}}\right)
\end{array}\right].
\]
This allows us to bound the deviation of $\boldsymbol{A}$ from $\overline{\boldsymbol{A}}$
as follows
\begin{align*}
\left\Vert \boldsymbol{A}-\overline{\boldsymbol{A}}\right\Vert  & \leq\left\Vert \boldsymbol{A}-\left[\begin{array}{ccc}
\frac{np}{n_{1}}\mathrm{Diag}\left({\bf 1}_{n_{1}}\right)\\
 & \ddots\\
 &  & \frac{np}{n_{m}}\mathrm{Diag}\left({\bf 1}_{n_{m}}\right)
\end{array}\right]\overline{\boldsymbol{A}}\right\Vert +\left\Vert \left[\begin{array}{ccc}
\frac{np}{n_{1}}\mathrm{Diag}\left({\bf 1}_{n_{1}}\right)\\
 & \ddots\\
 &  & \frac{np}{n_{m}}\mathrm{Diag}\left({\bf 1}_{n_{m}}\right)
\end{array}\right]\overline{\boldsymbol{A}}-\overline{\boldsymbol{A}}\right\Vert \\
 & \leq\left(\left\Vert \left[\begin{array}{ccc}
\frac{np}{n_{1}}\mathrm{Diag}\left({\bf 1}_{n_{1}}\right)\\
 & \ddots\\
 &  & \frac{np}{n_{m}}\mathrm{Diag}\left({\bf 1}_{n_{m}}\right)
\end{array}\right]\right\Vert +1\right)\left\Vert \overline{\boldsymbol{A}}\right\Vert \left\Vert \boldsymbol{I}-\left[\begin{array}{ccc}
\frac{np}{n_{1}}\mathrm{Diag}\left({\bf 1}_{n_{1}}\right)\\
 & \ddots\\
 &  & \frac{np}{n_{m}}\mathrm{Diag}\left({\bf 1}_{n_{m}}\right)
\end{array}\right]\right\Vert \\
 & \leq\left(1+c_{5}\sqrt{\frac{\log(mn)}{np}}+1\right)2nmp\cdot2c_{5}\sqrt{\frac{\log(mn)}{np}}\\
 & \leq c_{6}m\sqrt{np\log(mn)}
\end{align*}
for some universal constant $c_{6}>0$.

On the other hand, it follows immediately from (\ref{eq:ni_np_gap})
that
\begin{align*}
\left\Vert \boldsymbol{A}-{\bf 1}\cdot{\bf 1}^{\top}\right\Vert _{\infty} & =\max_{1\leq i,j\leq m}\left|\frac{\left(np\right)^{2}}{n_{i}n_{j}}-1\right|=\max_{1\leq i,j\leq m}\left|\frac{pn\left(pn-n_{j}\right)+\left(pn-n_{i}\right)n_{j}}{n_{i}n_{j}}\right|\\
 & \leq\max_{1\leq i,j\leq m}\frac{\left|pn+c_{5}\sqrt{np\log(mn)}\right|}{\left(pn-c_{5}\sqrt{np\log(mn)}\right)^{2}}c_{5}\sqrt{np\log(mn)}\\
 & \leq c_{9}\sqrt{\frac{\log(mn)}{np}}
\end{align*}
for some absolute constant $c_{9}>0$.

\subsection{Proof of Lemma \ref{lemma:MomentMethod}\label{sec:Proof_lemma:MomentMethod}}

The norm of $\boldsymbol{M}$ can be bounded via the moment method,
which attempts to control $\mathrm{tr}(\boldsymbol{M}^{k})$ for some
even integer $k$. See \cite[Section 2.3.4]{tao2012topics} for a
nice introduction. 

Specifically, observe that $\mathbb{E}\mathrm{tr}(\boldsymbol{M}^{k})$
can be expanded as follows
\[
\mathbb{E}\mathrm{tr}\left(\boldsymbol{M}^{k}\right)=\sum_{1\leq i_{1},\cdots,i_{k}\leq n}\mathbb{E}\mathrm{tr}\left(\boldsymbol{M}_{i_{1}i_{2}}\boldsymbol{M}_{i_{2}i_{3}}\cdots\boldsymbol{M}_{i_{k}i_{1}}\right),
\]
a trace sum over all $k$-cycles in the vertex set $\left\{ 1,\cdots,n\right\} $.
Note that $\left(i,i\right)$ are also treated as valid edges. For
each term $\mathbb{E}\mathrm{tr}(\boldsymbol{M}_{i_{1}i_{2}}\boldsymbol{M}_{i_{2}i_{3}}\cdots\boldsymbol{M}_{i_{k}i_{1}})$,
if there exists an edge occurring exactly once, then the term vanishes
due to the independence assumption. Thus, it suffices to examine the
terms in which each edge is repeated at least twice. Consequently,
there are at most $k/2$ relevant edges, which span at most $k/2+1$
distinct vertices. We also need to assign vertices to $k/2$ edges,
which adds up to no more than $\left(k/2\right)^{k}$ different choices. 

By following the same procedure and notation as adopted in \cite[Page 119]{tao2012topics},
we divide all non-vanishing $k$-cycles into $\left(k/2\right)^{k}$
classes based on the above labeling order; each class is associated
with $j$ ($1\leq j\leq k/2$) edges $e_{1},\cdots,e_{j}$ with multiplicities
$a_{1},\cdots,a_{j}$, where $(e_{1},\cdots,a_{1},\cdots,a_{j})$
determines the class of cycles and $a_{1}+\cdots+a_{j}=k$. Since
there are at most $n^{j+1}$ distinct vertices, one can see that no
more than $n^{j+1}$ cycles falling within this particular class.
For notational simplicity, set $K=\sqrt{n}$, and hence $\|\boldsymbol{M}_{ij}\|\leq K$.
By assumption (\ref{eq:M_block_assumption}), one has 
\begin{align*}
\mathbb{E}\mathrm{tr}\left(\boldsymbol{M}_{i_{1}i_{2}}\boldsymbol{M}_{i_{2}i_{3}}\cdots\boldsymbol{M}_{i_{k}i_{1}}\right) & \leq m\mathbb{E}\left(\left\Vert \boldsymbol{M}_{e_{1}}\right\Vert ^{a_{1}}\cdots\left\Vert \boldsymbol{M}_{e_{j}}\right\Vert ^{a_{j}}\right)\\
 & \leq m\mathbb{E}\left\Vert \boldsymbol{M}_{e_{1}}\right\Vert ^{2}\cdots\mathbb{E}\left\Vert \boldsymbol{M}_{e_{j}}\right\Vert ^{2}K^{a_{1}-2}\cdots K^{a_{j}-2}\\
 & \leq mK^{k-2j}.
\end{align*}
Thus, the total contribution of this class does not exceed
\[
mn^{j+1}K^{k-2j}=mn^{\frac{k}{2}+1}.
\]

By summing over all classes one obtains the crude bound
\[
\mathbb{E}\mathrm{tr}\left(\boldsymbol{M}^{k}\right)\leq m\left(\frac{k}{2}\right)^{k}n^{\frac{k}{2}+1},
\]
which follows that
\[
\mathbb{E}\left\Vert \boldsymbol{M}\right\Vert ^{k}\leq\mathbb{E}\mathrm{tr}\left(\boldsymbol{M}^{k}\right)\leq m\left(\frac{k}{2}\right)^{k}n^{\frac{k}{2}+1}.
\]
If we set $k=\log\left(mn\right)$, then from Markov's inequality
we have
\[
\mathbb{P}\left(\left\Vert \boldsymbol{M}\right\Vert \geq\frac{k}{2}n^{\frac{1}{2}+\frac{1}{k}}\left(mn\right)^{\frac{5}{k}}m^{\frac{1}{k}}\right)\leq\frac{\mathbb{E}\left\Vert \boldsymbol{M}\right\Vert ^{k}}{\left(\frac{k}{2}n^{\frac{1}{2}+\frac{1}{k}}\left(mn\right)^{\frac{5}{k}}m^{\frac{1}{k}}\right)^{k}}\leq\frac{m\left(\frac{k}{2}\right)^{k}n^{\frac{k}{2}+1}}{m\left(\frac{k}{2}\right)^{k}n^{\frac{k}{2}+1}\left(mn\right)^{5}}\leq\frac{1}{\left(mn\right)^{5}}.
\]
Since $n^{\frac{1}{\log n}}=O\left(1\right)$, there exists a constant
$c_{0}>0$ such that 
\[
\mathbb{P}\left(\left\Vert \boldsymbol{M}\right\Vert \geq c_{0}n^{\frac{1}{2}}\log\left(mn\right)\right)\leq\frac{1}{m^{5}n^{5}},
\]
which completes the proof.

\subsection{Proof of Lemma \ref{lemma:SpectralGapTight}\label{sec:Proof_lemma:SpectralGapTight}}

When $\mathcal{G}\sim\mathcal{G}(n,p)$, the adjacency matrix $\boldsymbol{A}$
consists of independent Bernoulli components (except for diagonal
entries), each with mean $p$ and variance $p(1-p)$. Lemma \ref{lemma:MomentMethod}
immediately implies that if $p>\frac{2\log\left(mn\right)}{n}$, then
\begin{equation}
\frac{1}{\sqrt{p(1-p)}}\left\Vert \boldsymbol{A}-p{\bf 1}_{n}\cdot{\bf 1}_{n}^{\top}\right\Vert \leq c_{0}\sqrt{n}\log\left(mn\right)+1
\end{equation}
with probability at least $1-(mn)^{-5}$. That said, there exists
an absolute constant $c_{1}>0$ such that 
\begin{equation}
\left\Vert \boldsymbol{A}-p{\bf 1}_{n}\cdot{\bf 1}_{n}^{\top}\right\Vert \leq c_{1}\sqrt{pn}\log\left(mn\right)\label{eq:AdjMatrixGnp}
\end{equation}
with probability exceeding $1-(mn)^{-5}$. 

On the other hand, from Bernstein inequality, the degree of each vertex
exceeds
\begin{equation}
d_{\min}:=pn-c_{2}\sqrt{pn\log\left(mn\right)}\label{eq:DegreeMatrixGnp}
\end{equation}
with probability at least $1-\left(mn\right)^{-10}$, where $c_{2}$
is some constant. When $p>\frac{2\log\left(mn\right)}{n}$, $\mathcal{G}$
is connected, and hence the least eigenvalue of $\boldsymbol{L}$
is zero with the eigenvector ${\bf 1}_{n}$. This taken collectively
with (\ref{eq:AdjMatrixGnp}) and (\ref{eq:DegreeMatrixGnp}) suggests
that when $p>\frac{c_{3}^{2}\log^{2}\left(mn\right)}{n}$, one has
\[
a\left(\mathcal{G}\right)\geq d_{\min}-\left\Vert \boldsymbol{A}-p{\bf 1}_{n}\cdot{\bf 1}_{n}^{\top}\right\Vert \geq pn-c_{3}\sqrt{pn}\log\left(mn\right)
\]
with high probability.

\subsection{Proof of Lemma \ref{lemma:KKT}\label{sec:Proof_lemma:KKT}}

Suppose that $\boldsymbol{X}^{\mathrm{gt}}+\boldsymbol{H}$ is the
solution to MatchLift for some perturbation $\boldsymbol{H}\neq{\bf 0}$.
By Schur complement condition for positive definiteness, the feasibility
constraint $\left[\begin{array}{cc}
m & {\bf 1}^{\top}\\
{\bf 1} & \boldsymbol{X}^{\mathrm{gt}}+\boldsymbol{H}
\end{array}\right]\succeq{\bf 0}$ is equivalent to
\[
\begin{cases}
\boldsymbol{X}^{\mathrm{gt}}+\boldsymbol{H} & \succeq{\bf 0},\\
\boldsymbol{X}^{\mathrm{gt}}+\boldsymbol{H}-\frac{1}{m}{\bf 1}\cdot{\bf 1}^{\top} & \succeq{\bf 0},
\end{cases}
\]
which immediately yields
\begin{equation}
\mathcal{P}_{T_{\mathrm{gt}}^{\perp}}\left(\boldsymbol{H}\right)=\left(\boldsymbol{I}-\boldsymbol{U}\boldsymbol{U}^{\top}\right)\left(\boldsymbol{X}^{\mathrm{gt}}+\boldsymbol{H}\right)\left(\boldsymbol{I}-\boldsymbol{U}\boldsymbol{U}^{\top}\right)\succeq{\bf 0},\label{eq:PSD_Pnm_H}
\end{equation}
and
\begin{equation}
\left\langle \boldsymbol{d}\cdot\boldsymbol{d}^{\top},\boldsymbol{H}\right\rangle =\left\langle \boldsymbol{d}\cdot\boldsymbol{d}^{\top},\boldsymbol{X}^{\mathrm{gt}}-\frac{1}{m}{\bf 1}\cdot{\bf 1}^{\top}+\boldsymbol{H}\right\rangle \geq0.\label{eq:PSD_dd_H}
\end{equation}
The above inequalities follow from the facts $\mathcal{P}_{T_{\mathrm{gt}}^{\perp}}\left(\boldsymbol{X}^{\mathrm{gt}}\right)={\bf 0}$
and $\left\langle \boldsymbol{d}\cdot\boldsymbol{d}^{\top},\boldsymbol{X}^{\mathrm{gt}}-\frac{1}{m}{\bf 1}\cdot{\bf 1}^{\top}\right\rangle =0$.

From Assumption (\ref{eq:Y-tangent-space}), one can derive
\begin{align}
\left\langle \boldsymbol{Y}-\alpha\boldsymbol{d}\cdot\boldsymbol{d}^{\top},\mathcal{P}_{T_{\mathrm{gt}}^{\perp}}\left(\boldsymbol{H}\right)\right\rangle +\left\langle \alpha\boldsymbol{d}\cdot\boldsymbol{d}^{\top},\boldsymbol{H}\right\rangle  & =\left\langle \boldsymbol{Y}-\alpha\boldsymbol{d}\cdot\boldsymbol{d}^{\top},\boldsymbol{H}\right\rangle +\left\langle \alpha\boldsymbol{d}\cdot\boldsymbol{d}^{\top},\boldsymbol{H}\right\rangle \nonumber \\
 & =\left\langle \boldsymbol{Y},\boldsymbol{H}\right\rangle =\sum\limits _{i\neq j}\left\langle \boldsymbol{Y}_{ij},\boldsymbol{H}_{ij}\right\rangle .\label{eq:trace_T_H}
\end{align}
This allows us to bound 
\begin{align}
 & \left\langle \boldsymbol{Y}-\alpha\boldsymbol{d}\cdot\boldsymbol{d}^{\top},\mathcal{P}_{T_{\mathrm{gt}}^{\perp}}\left(\boldsymbol{H}\right)\right\rangle +\sum\limits _{i\neq j}\left\langle \boldsymbol{Z}_{ij},\boldsymbol{H}_{ij}\right\rangle \nonumber \\
\leq & \left\langle \boldsymbol{Y}-\alpha\boldsymbol{d}\cdot\boldsymbol{d}^{\top},\mathcal{P}_{T_{\mathrm{gt}}^{\perp}}\left(\boldsymbol{H}\right)\right\rangle +\left\langle \alpha\boldsymbol{d}\cdot\boldsymbol{d}^{\top},\boldsymbol{H}\right\rangle +\sum\limits _{i\neq j}\left\langle \boldsymbol{Z}_{ij},\boldsymbol{H}_{ij}\right\rangle \\
= & \text{ }\sum\limits _{i\neq j}\left\langle \boldsymbol{Y}_{ij},\boldsymbol{H}_{ij}\right\rangle +\sum\limits _{i\neq j}\left\langle \boldsymbol{Z}_{ij},\boldsymbol{H}_{ij}\right\rangle \\
= & \text{ }\sum\limits _{i\neq j}\left\langle \boldsymbol{W}_{ij},\boldsymbol{H}_{ij}\right\rangle ,\label{eq:InnerProductYH}
\end{align}
where the first inequality follows from (\ref{eq:PSD_dd_H}), and
the last equality follows from Assumption (\ref{eq:S_construction}).

In order to preclude the possibility that $\boldsymbol{X}^{\mathrm{gt}}+\boldsymbol{H}$
is the solution to MatchLift, we need to show that $\sum_{i\neq j}\left\langle \boldsymbol{W}_{ij},\boldsymbol{H}_{ij}\right\rangle >0$.
From (\ref{eq:InnerProductYH}) it suffices to establish that
\begin{equation}
\left\langle \boldsymbol{Y}-\alpha\boldsymbol{d}\cdot\boldsymbol{d}^{\top},\mathcal{P}_{T_{\mathrm{gt}}^{\perp}}\left(\boldsymbol{H}\right)\right\rangle +\sum\limits _{i\neq j}\left\langle \boldsymbol{Z}_{ij},\boldsymbol{H}_{ij}\right\rangle >0\label{eq:ZH_positive_Y}
\end{equation}
for any feasible $\boldsymbol{H}\neq{\bf 0}$. In fact, since $\boldsymbol{Y}-\alpha\boldsymbol{d}\cdot\boldsymbol{d}^{\top}$
and $\mathcal{P}_{T_{\mathrm{gt}}^{\perp}}\left(\boldsymbol{H}\right)$
are both positive semidefinite, one must have
\begin{equation}
\left\langle \boldsymbol{Y}-\alpha\boldsymbol{d}\cdot\boldsymbol{d}^{\top},\mathcal{P}_{T_{\mathrm{gt}}^{\perp}}\left(\boldsymbol{H}\right)\right\rangle \geq0.\label{eq:YdH_non-neg}
\end{equation}
On the other hand, the constraints
\[
\mathrm{supp}\left(\boldsymbol{Z}\right)\subseteq\Omega_{\mathrm{gt}}^{\perp},\quad\mathcal{P}_{\Omega_{\mathrm{gt}}^{\perp}}\left(\boldsymbol{Z}\right)\geq{\bf 0},\text{ and }\mathcal{P}_{\Omega_{\mathrm{gt}}^{\perp}}\left(\boldsymbol{H}\right)\geq{\bf 0}
\]
taken together imply that
\begin{equation}
\sum\limits _{i\neq j}\left\langle \boldsymbol{Z}_{ij},\boldsymbol{H}_{ij}\right\rangle \geq0.\label{eq:ZH_non-neg}
\end{equation}
Putting (\ref{eq:YdH_non-neg}) and (\ref{eq:ZH_non-neg}) together
gives 
\[
\left\langle \boldsymbol{Y}-\alpha\boldsymbol{d}\cdot\boldsymbol{d}^{\top},\mathcal{P}_{T_{\mathrm{gt}}^{\perp}}\left(\boldsymbol{H}\right)\right\rangle +\sum\limits _{i\neq j}\left\langle \boldsymbol{Z}_{ij},\boldsymbol{H}_{ij}\right\rangle \geq0.
\]
Comparing this with (\ref{eq:ZH_positive_Y}), we only need to establish
either $\left\langle \boldsymbol{Y}-\alpha\boldsymbol{d}\cdot\boldsymbol{d}^{\top},\mathcal{P}_{T_{\mathrm{gt}}^{\perp}}\left(\boldsymbol{H}\right)\right\rangle >0$
or $\sum_{i\neq j}\left\langle \boldsymbol{Z}_{ij},\boldsymbol{H}_{ij}\right\rangle >0$.

i) Suppose first that all entries of $\boldsymbol{Z}_{ij}$ ($\forall i\neq j$)
in the support $\Omega_{\mathrm{gt}}^{\perp}$ are strictly positive.
If the identity $\sum_{i\neq j}\left\langle \boldsymbol{Z}_{ij},\boldsymbol{H}_{ij}\right\rangle =0$
holds, then the strict positivity assumption of $\boldsymbol{Z}_{ij}$
on $ $$\Omega_{\mathrm{gt}}^{\perp}$ as well as the constraint $\mathcal{P}_{\Omega_{\mathrm{gt}}^{\perp}}\left(\boldsymbol{H}\right)\geq{\bf 0}$
immediately leads to
\[
\mathcal{P}_{\Omega_{\mathrm{gt}}^{\perp}}\left(\boldsymbol{H}\right)={\bf 0}.
\]
Besides, the feasibility constraint requires that $\mathcal{P}_{\Omega_{\mathrm{gt}}}\left(\boldsymbol{H}_{ij}\right)\leq{\bf 0}$.
If $\mathcal{P}_{\Omega_{\mathrm{gt}}}\left(\boldsymbol{H}_{ij}\right)\neq{\bf 0}$,
then all non-zero entries of $\boldsymbol{H}_{ij}$ are \emph{negative},
and hence
\[
\left\langle \boldsymbol{d}\cdot\boldsymbol{d}^{\top},\boldsymbol{H}\right\rangle =\left\langle \boldsymbol{d}\cdot\boldsymbol{d}^{\top},\mathcal{P}_{\Omega_{\mathrm{gt}}}\left(\boldsymbol{H}\right)\right\rangle <0,
\]
which follows since all entries of $\boldsymbol{d}$ are strictly
positive. This contradicts with (\ref{eq:PSD_dd_H}). Consequently,
we must either have $\boldsymbol{H}={\bf 0}$ or $\sum_{i\neq j}\left\langle \boldsymbol{Z}_{ij},\boldsymbol{H}_{ij}\right\rangle >0$.
This together with (\ref{eq:ZH_positive_Y}) establishes the claim.

ii) Next, we prove the claim under Assumptions (\ref{eq:S_psd_Null})
and (\ref{eq:IiIj_constraint-KKTlemma}). In fact, Assumption (\ref{eq:S_psd_Null})
together with (\ref{eq:PSD_Pnm_H}) asserts that $\left\langle \boldsymbol{Y},\mathcal{P}_{T_{\mathrm{gt}}^{\perp}}\left(\boldsymbol{H}\right)\right\rangle \leq0$
can only occur if $\mathcal{P}_{T_{\mathrm{gt}}^{\perp}}\left(\boldsymbol{H}\right)=\boldsymbol{0}$.
This necessarily leads to $\boldsymbol{H}={\bf 0}$, as claimed by
Lemma \ref{lemma:PgtH_H}. 

\begin{lem}\label{lemma:PgtH_H}Suppose that $\boldsymbol{X}^{\mathrm{gt}}+\boldsymbol{H}$
is feasible for MatchLift, and assume that
\begin{equation}
\frac{n}{n_{i}}+\frac{n}{n_{j}}\neq\frac{n^{2}}{n_{i}n_{j}},\quad\forall1\leq i,j\leq m.\label{eq:IiIj_constraint}
\end{equation}
If $\mathcal{P}_{T_{\mathrm{gt}}^{\perp}}\left(\boldsymbol{H}\right)={\bf 0}$,
then one has $\boldsymbol{H}=\boldsymbol{0}$.\end{lem}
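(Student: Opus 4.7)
The plan is to parameterize $\boldsymbol{H}$ through the tangent-space factorization and then to whittle down the remaining degrees of freedom by layering the feasibility constraints of MatchLift on top of one another, with the hypothesis (\ref{eq:IiIj_constraint}) serving as the final closer. Because $\mathcal{P}_{T_{\mathrm{gt}}^{\perp}}(\boldsymbol{H}) = \boldsymbol{0}$, I would write $\boldsymbol{H} = \boldsymbol{P}\boldsymbol{M} + \boldsymbol{M}^{\top}\boldsymbol{P}^{\top}$ for some $\boldsymbol{M} \in \mathbb{R}^{m\times N}$, using the block-vertical stack $\boldsymbol{P} := (\boldsymbol{\Pi}_{1}^{\top},\ldots,\boldsymbol{\Pi}_{n}^{\top})^{\top}$, so that $\boldsymbol{X}^{\mathrm{gt}} = \boldsymbol{P}\boldsymbol{P}^{\top}$ and $\mathrm{col}(\boldsymbol{P}) = \mathrm{col}(\boldsymbol{U})$.

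First, I would feed in the diagonal-block constraint $\boldsymbol{H}_{ii} = \boldsymbol{0}$ coming from $\boldsymbol{X}_{ii} = \boldsymbol{I}_{m_i}$. This translates into antisymmetry of each local block $\boldsymbol{\Pi}_i \boldsymbol{M}_i$, from which I would also deduce that every $\Omega_{\mathrm{gt}}$-entry lying in an off-diagonal block vanishes, yielding $\mathcal{P}_{\Omega_{\mathrm{gt}}}(\boldsymbol{H}) = \boldsymbol{0}$ automatically. The entrywise nonnegativity $\boldsymbol{X}^{\mathrm{gt}} + \boldsymbol{H} \geq \boldsymbol{0}$ then collapses to $\mathcal{P}_{\Omega_{\mathrm{gt}}^{\perp}}(\boldsymbol{H}) \geq \boldsymbol{0}$.

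Second, I would invoke the lifted PSD constraint, which by Schur complement is equivalent to $\boldsymbol{X}^{\mathrm{gt}} + \boldsymbol{H} - \tfrac{1}{m}\boldsymbol{1}\boldsymbol{1}^{\top} \succeq \boldsymbol{0}$. Since $\boldsymbol{X}^{\mathrm{gt}} - \tfrac{1}{m}\boldsymbol{1}\boldsymbol{1}^{\top}$ is PSD with null space $V_{0} = \mathrm{span}(\boldsymbol{1}) \oplus \mathrm{col}(\boldsymbol{U})^{\perp}$, convexity of the feasible set allows one to shrink the perturbation by $t\in[0,1]$ and extract tangent-cone information: $\boldsymbol{H}|_{V_{0}} \succeq \boldsymbol{0}$ together with complementary slackness $\boldsymbol{H}\boldsymbol{v} \in V_{0}$ whenever $\boldsymbol{v} \in V_{0}$ and $\boldsymbol{v}^{\top}\boldsymbol{H}\boldsymbol{v} = 0$. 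For any $\boldsymbol{v} \in \mathrm{col}(\boldsymbol{U})^{\perp}$ one has $\boldsymbol{P}^{\top}\boldsymbol{v} = \boldsymbol{0}$, so $\boldsymbol{v}^{\top}\boldsymbol{H}\boldsymbol{v}=0$ trivially, while $\boldsymbol{H}\boldsymbol{v} = \boldsymbol{P}\boldsymbol{M}\boldsymbol{v}$ lies in $\mathrm{col}(\boldsymbol{P})$. Intersecting $\mathrm{col}(\boldsymbol{P})$ with $V_{0}$ gives $\mathrm{span}(\boldsymbol{1})$, so $\boldsymbol{M}\boldsymbol{v} \in \mathrm{span}(\boldsymbol{1}_{m})$ for every $\boldsymbol{v} \in \mathrm{col}(\boldsymbol{U})^{\perp}$. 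Combined with the requirement $\boldsymbol{H}\boldsymbol{1} \in \mathrm{col}(\boldsymbol{U})$ forced by PSDness of the block form of $\boldsymbol{H}|_{V_0}$, this cascade of conditions reduces $\boldsymbol{M}$ to the form $\boldsymbol{A}\boldsymbol{P}^{\top}$, whence $\boldsymbol{H} = \boldsymbol{P}\boldsymbol{S}\boldsymbol{P}^{\top}$ with $\boldsymbol{S} := \boldsymbol{A} + \boldsymbol{A}^{\top}$ a symmetric $m \times m$ matrix.

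Reapplying $\boldsymbol{H}_{ii} = \boldsymbol{0}$ forces $\boldsymbol{S}_{ll'} = 0$ whenever $\{l,l'\} \subseteq \mathcal{S}_{k}$ for some $k$, and the nonnegativity on $\Omega_{\mathrm{gt}}^{\perp}$ pins the remaining candidate entries (pairs $(l,l')$ with $\mathcal{I}_l \cap \mathcal{I}_{l'} = \emptyset$) to satisfy $\boldsymbol{S}_{ll'} \geq 0$. The PSD constraint in turn collapses to the $m \times m$ matrix inequality $\boldsymbol{I}_{m} + \boldsymbol{S} - \tfrac{1}{m}\boldsymbol{1}_{m}\boldsymbol{1}_{m}^{\top} \succeq \boldsymbol{0}$.

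The hard part will be closing the argument via the hypothesis: I would exploit that $\mathcal{I}_l \cap \mathcal{I}_{l'} = \emptyset$ forces $n_{l} + n_{l'} \leq n$, and the hypothesis rules out the equality case $n_{l} + n_{l'} = n$ (equivalently $\tfrac{n}{n_l} + \tfrac{n}{n_{l'}} = \tfrac{n^{2}}{n_l n_{l'}}$). Testing the matrix inequality against the two-dimensional subspace $\mathrm{span}(\sqrt{n_{l'}}\,\boldsymbol{e}_{l} - \sqrt{n_{l}}\,\boldsymbol{e}_{l'}, \boldsymbol{1}_m)$ and taking the Schur complement against $\boldsymbol{1}_m$ produces a $1 \times 1$ inequality whose coefficient vanishes precisely when $n_{l} + n_{l'} = n$; outside of this degenerate case the inequality becomes strict in the direction that would accommodate $\boldsymbol{S}_{ll'} > 0$, and so it forces $\boldsymbol{S}_{ll'} = 0$. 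Running this argument for every pair with $\mathcal{I}_l \cap \mathcal{I}_{l'} = \emptyset$ yields $\boldsymbol{S} = \boldsymbol{0}$, and consequently $\boldsymbol{H} = \boldsymbol{P}\boldsymbol{S}\boldsymbol{P}^{\top} = \boldsymbol{0}$, as required.
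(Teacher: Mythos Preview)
Your approach is genuinely different from the paper's---the paper runs an averaging argument on the zero-padded matrix $\boldsymbol{H}^{\mathrm{sup}}$ to force $\overline{\boldsymbol{H}}^{\mathrm{sup}}_{\cdot\cdot}=\boldsymbol{0}$ and then closes with sign constraints, whereas you exploit the lifted PSD constraint directly via a tangent-cone argument to reduce to $\boldsymbol{H}=\boldsymbol{P}\boldsymbol{S}\boldsymbol{P}^{\top}$ for a symmetric $m\times m$ matrix $\boldsymbol{S}$. Several of your intermediate steps are nice and correct (in particular the observation that $\boldsymbol{H}_{ii}=\boldsymbol{0}$ already forces $\mathcal{P}_{\Omega_{\mathrm{gt}}}(\boldsymbol{H})=\boldsymbol{0}$, and the reduction $\boldsymbol{M}=\boldsymbol{A}\boldsymbol{P}^{\top}$, which in fact follows more simply than you indicate: for $\boldsymbol{v}\in\mathrm{col}(\boldsymbol{P})^{\perp}$ one has $\boldsymbol{v}^{\top}(\boldsymbol{X}^{\mathrm{gt}}+\boldsymbol{H}-\tfrac{1}{m}\boldsymbol{1}\boldsymbol{1}^{\top})\boldsymbol{v}=0$, so PSDness gives $\boldsymbol{H}\boldsymbol{v}=\boldsymbol{0}$ directly). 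There is a small slip in your identification of the null space $V_{0}$: since $\boldsymbol{P}^{\top}\boldsymbol{1}=(n_{1},\dots,n_{m})^{\top}$, the vector $\boldsymbol{1}$ lies in $V_{0}$ only when all $n_{l}$ coincide; in general the correct complement of $\mathrm{col}(\boldsymbol{P})^{\perp}$ inside $V_{0}$ is $\mathrm{span}(\boldsymbol{d})$.

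The real gap, however, is in your closing step. Once you have reduced to the $m\times m$ inequality
\[
\boldsymbol{I}_{m}+\boldsymbol{S}-\tfrac{1}{m}\boldsymbol{1}_{m}\boldsymbol{1}_{m}^{\top}\;\succeq\;\boldsymbol{0},
\]
together with $\boldsymbol{S}_{ll'}=0$ whenever $\mathcal{I}_{l}\cap\mathcal{I}_{l'}\neq\emptyset$ and $\boldsymbol{S}_{ll'}\ge 0$ otherwise, \emph{all dependence on the $n_{l}$'s has disappeared}. Your proposed test against $\mathrm{span}\bigl(\sqrt{n_{l'}}\,\boldsymbol{e}_{l}-\sqrt{n_{l}}\,\boldsymbol{e}_{l'},\,\boldsymbol{1}_{m}\bigr)$ cannot manufacture a constraint whose ``coefficient vanishes precisely when $n_{l}+n_{l'}=n$'': the quadratic form is the same regardless of which weights you plug in, and the hypothesis~\eqref{eq:IiIj_constraint} never re-enters. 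Concretely, take $m=3$ with $\mathcal{S}_{1}=\{1,3\}$, $\mathcal{S}_{2}=\{2,3\}$, $\mathcal{S}_{3}=\{3\}$; then $n_{1}=n_{2}=1$, $n_{3}=3$, the hypothesis holds, $\mathcal{I}_{1}\cap\mathcal{I}_{2}=\emptyset$, and for any $s\in(0,1]$ the choice $\boldsymbol{S}=s(\boldsymbol{e}_{1}\boldsymbol{e}_{2}^{\top}+\boldsymbol{e}_{2}\boldsymbol{e}_{1}^{\top})$ satisfies every constraint you have derived, yet gives $\boldsymbol{H}=\boldsymbol{P}\boldsymbol{S}\boldsymbol{P}^{\top}\neq\boldsymbol{0}$. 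So the Schur-complement maneuver you sketch cannot force $\boldsymbol{S}_{ll'}=0$; some information was lost in the reduction, and the argument as planned does not close.
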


\begin{proof} See Appendix \ref{sec:Proof_lemma:PgtH_H}.\end{proof}

In summary, we can conclude that $\boldsymbol{X}^{\mathrm{gt}}$ is
the unique optimizer in both cases.

\subsection{Proof of Lemma \ref{lemma:BoundYl_Zl}\label{sec:Proof_lemma:BoundYl_Zl}}

First, we would like to bound the operator norm of $\boldsymbol{Y}_{\mathrm{L}}$.
Since each random matrix $\boldsymbol{X}_{ij}^{\mathrm{in}}\mathbb{I}_{\left\{ \boldsymbol{X}_{ij}^{\mathrm{in}}\text{ is observed and corrupted}\right\} }$
is independently drawn with mean $\frac{\left(1-p_{\mathrm{true}}\right)p_{\mathrm{obs}}}{m}\boldsymbol{1}\cdot{\bf 1}^{\top}$,
it is straightforward to see that
\[
\mathbb{E}\boldsymbol{Y}^{\mathrm{L,0}}=\mathbb{E}\left(-\boldsymbol{X}^{\mathrm{false}}+\frac{\left(1-p_{\mathrm{true}}\right)p_{\mathrm{obs}}}{m}\boldsymbol{E}^{\perp}\right)={\bf 0}.
\]
By observing that $\boldsymbol{Z}^{\mathrm{L}}$ is constructed as
a linear transform of $\boldsymbol{Y}^{\mathrm{L,0}}$, one can also
obtain
\[
\mathbb{E}\boldsymbol{Z}^{\mathrm{L}}={\bf 0},\quad\Rightarrow\quad\mathbb{E}\boldsymbol{Y}^{\mathrm{L}}=\mathbb{E}\boldsymbol{Z}^{\mathrm{L}}+\mathbb{E}\boldsymbol{Y}^{\mathrm{L},0}={\bf 0}.
\]
Thus, it suffices to examine the deviation of $\left\Vert \boldsymbol{Y}^{\mathrm{L}}\right\Vert $
incurred by the uncertainty of $\boldsymbol{X}^{\mathrm{false}}$.

Denote by $\boldsymbol{A}^{i,j}\in\mathbb{R}^{N\times N}$ the component
of $\boldsymbol{Z}^{\mathrm{L}}$ generated due to the $(i,j)^{\mathrm{th}}$
block $-\boldsymbol{X}_{ij}^{\mathrm{false}}$, which clearly satisfies
\[
\boldsymbol{Z}^{\mathrm{L}}=\boldsymbol{A}^{i,j}-\mathbb{E}\boldsymbol{A}^{i,j}.
\]
For each non-zero entry of $\boldsymbol{X}_{ij}^{\mathrm{false}}$,
if it encodes an incorrect correspondence between elements $s$ and
$t$, then it will affect no more than $6n_{s,t}$ entries in $\boldsymbol{A}^{i,j}$,
where each of these entries are affected in magnitude by an amount
at most $\frac{1}{n_{\mathrm{s,t}}}$. Recall that $n_{s,t}$ represents
the number of sets $\mathcal{S}_{i}$ ($1\leq i\leq n$) containing
$s$ and $t$ simultaneously, which sharply concentrates within $\left[np_{\mathrm{set}}^{2}\pm O\left(\sqrt{np_{\mathrm{set}}^{2}\log\left(mn\right)}\right)\right]$
as asserted in Lemma \ref{lemma:Concentration}. As a result, the
\emph{sum of squares} of these affected entries is bounded by 
\begin{equation}
\frac{6n_{s,t}}{n_{s,t}^{2}}=O\left(\frac{1}{n_{s,t}}\right).\label{eq:FroNorm_Aij}
\end{equation}

Moreover, since each row / column of $\boldsymbol{X}_{ij}^{\mathrm{false}}$
can have at most one non-zero entry, we can rearrange $\boldsymbol{A}^{i,j}$
with row / column permutation such that $\boldsymbol{A}^{i,j}$ becomes
a block-diagonal matrix, where the components affected by different
entries of $\boldsymbol{X}_{ij}^{\mathrm{false}}$ are separated into
distinct diagonal blocks. This together with (\ref{eq:FroNorm_Aij})
leads to 
\[
\left\Vert \boldsymbol{A}^{i,j}\right\Vert \leq\left\Vert \boldsymbol{A}^{i,j}\right\Vert _{\mathrm{F}}\leq\max_{s\neq t}\sqrt{\frac{8}{n_{s,t}}},
\]
and hence
\[
\left\Vert \mathbb{E}\boldsymbol{A}^{i,j}\left(\boldsymbol{A}^{i,j}\right)^{\top}\right\Vert \leq p_{\mathrm{obs}}\left(\max_{s\neq t}\sqrt{\frac{8}{n_{s,t}}}\right)^{2}\leq\frac{c_{16}p_{\mathrm{obs}}}{np_{\mathrm{set}}^{2}}
\]
for some absolute constant $c_{16}>0$, where the last inequality
follows from Lemma \ref{lemma:Concentration}.

Observe that $\boldsymbol{A}^{i,j}-\mathbb{E}\boldsymbol{A}^{i,j}$
$(i\neq j)$ are independently generated with mean zero, whose operator
norm is bounded above by $2\max_{s\neq t}\sqrt{\frac{8}{n_{s,t}}}$.
Applying the matrix Bernstein inequality \cite[Theorem 1.4]{tropp2012user}
suggests that there exist universal constants $c_{5},c_{6}>0$ such
that for any $t=O\left(\sqrt{n}\mathrm{poly}\log\left(mn\right)\right)$,
\[
\mathbb{P}\left(\left\Vert \sum_{(i,j)\in\mathcal{G}}\boldsymbol{A}^{i,j}-\mathbb{E}\boldsymbol{A}^{i,j}\right\Vert >t\right)\leq n^{2}\exp\left(-\frac{\frac{1}{2}t^{2}}{n^{2}\left(\frac{c_{16}p_{\mathrm{obs}}}{np_{\mathrm{set}}^{2}}\right)+\frac{2\max_{s\neq t}\sqrt{\frac{8}{n_{s,t}}}}{3}}\right).
\]
Put in another way, there exists a universal constant $c_{6}>0$ such
that 
\begin{equation}
\left\Vert \boldsymbol{Z}^{\mathrm{L}}\right\Vert =\left\Vert \sum_{i\neq j}\boldsymbol{A}^{i,j}-\mathbb{E}\boldsymbol{A}^{i,j}\right\Vert <c_{6}\sqrt{\frac{np_{\mathrm{obs}}}{p_{\mathrm{set}}^{2}}\log\left(mn\right)}\label{eq:A_bound}
\end{equation}
holds with probability exceeding $1-\frac{1}{(mn)^{10}}$. This follows
from Lemma \ref{lemma:Concentration}.

Additionally, observe that $\mathbb{E}\boldsymbol{Y}_{ij}^{\mathrm{L},0}={\bf 0}$
and 
\[
\left\Vert \frac{1}{\sqrt{p_{\mathrm{obs}}}}\boldsymbol{Y}_{ij}^{\mathrm{L},0}\right\Vert \leq\sqrt{n}
\]
as long as $p_{\mathrm{obs}}>\frac{1}{n}$. Applying Lemma \ref{lemma:MomentMethod}
suggests that
\[
\left\Vert \boldsymbol{Y}^{\mathrm{L},0}\right\Vert <c_{0}\sqrt{np_{\mathrm{obs}}\log\left(mn\right)}
\]
with probability at least $1-\frac{1}{(mn)^{5}}$. This combined with
(\ref{eq:A_bound}) yields
\[
\left\Vert \boldsymbol{Y}^{\mathrm{L}}\right\Vert \leq\left\Vert \boldsymbol{Y}^{\mathrm{L},0}\right\Vert +\left\Vert \boldsymbol{Z}^{\mathrm{L}}\right\Vert <c_{11}\sqrt{\frac{np_{\mathrm{obs}}\log\left(mn\right)}{p_{\mathrm{set}}^{2}}}
\]
with probability at least $1-\frac{3}{(mn)^{5}}$, where $c_{11}$
is some universal constant.

On the other hand, for each $(s,t)$ entry of $\boldsymbol{Z}_{il}^{\mathrm{L}}$
($i\neq l$), it can only be affected by those \emph{observed} blocks
$\boldsymbol{X}_{ij}^{\mathrm{false}}$ (or $\boldsymbol{X}_{jl}^{\mathrm{false}}$)
satisfying $t\in\mathcal{S}_{j}$ (or $s\in\mathcal{S}_{j}$). Consequently,
each entry of $\boldsymbol{Z}_{il}^{\mathrm{L}}$ can be expressed
as a sum of $\Theta\left(np_{\mathrm{set}}p_{\mathrm{obs}}\right)$
zero-mean independent variables, each of them being bounded in magnitude
by $\frac{1}{\left(\min_{s\neq t}n_{s,t}\right)}$. From Hoeffding's
inequality one can derive
\[
\mathbb{P}\left(\left\Vert \boldsymbol{Z}_{il}^{\mathrm{L}}\right\Vert _{\infty}>t\right)\leq m^{2}\mathbb{P}\left(-\frac{t^{2}}{c_{7}np_{\mathrm{set}}p_{\mathrm{obs}}\frac{1}{\left(\min\limits _{s\neq t}n_{s,t}\right)^{2}}}\right)\leq m^{2}\mathbb{P}\left(-\frac{t^{2}}{\tilde{c}_{7}p_{\mathrm{obs}}\frac{1}{np_{\mathrm{set}}^{3}}}\right)
\]
for some constants $c_{7},\tilde{c}_{7}>0$, indicating that
\[
\left\Vert \boldsymbol{Z}_{il}^{\mathrm{L}}\right\Vert _{\infty}\leq\sqrt{\frac{c_{8}p_{\mathrm{obs}}\log\left(mn\right)}{np_{\mathrm{set}}^{3}}},\quad\forall i\neq l
\]
with probability exceeding $1-\frac{1}{\left(mn\right)^{10}}$.

\subsection{Proof of Lemma \ref{lemma:BoundYtrue}\label{sec:Proof_lemma:BoundYtrue}}

By construction of $\boldsymbol{Y}^{\mathrm{true},1}$, one can see
that all non-zero entries lie within the support $\Omega_{\mathrm{gt}}$.
One important feature of $\boldsymbol{X}_{ij}^{\mathrm{gt}}$ is that
it can be converted, via row / column permutation, into a block diagonal
matrix that consists of $m$ all-one blocks, where the $i^{\mathrm{th}}$
block is of size $n_{i}$ ($1\leq i\leq m$). From Lemma \ref{lemma:Concentration},
one has
\[
n_{i}\in\left[np_{\mathrm{set}}\pm\sqrt{c_{8}np_{\mathrm{set}}\log\left(mn\right)}\right],\quad1\leq i\leq m
\]
with high probability. Thus, $\boldsymbol{Y}^{\mathrm{true},1}$ can
also be rearranged such that its non-zero entries form $m$ disjoint
diagonal blocks. We will quantify the eigenvalues of $\boldsymbol{Y}^{\mathrm{true},1}$
by bounding the spectrum of each of these matrix blocks.

We first decompose the matrix $\boldsymbol{Y}^{\mathrm{true},1}$
into two parts $\overline{\boldsymbol{Y}}_{\mathrm{}}^{\mathrm{true},1}$
and $\tilde{\boldsymbol{Y}}_{\mathrm{}}^{\mathrm{true},1}$ such that
\[
\forall i\neq j,\quad\overline{\boldsymbol{Y}}_{ij}^{\mathrm{true},1}=\begin{cases}
-\boldsymbol{X}_{ij}^{\mathrm{in}},\quad & \text{if }\boldsymbol{X}_{ij}^{\mathrm{in}}\text{ is observed and not corrupted},\\
{\bf 0}, & \text{else};
\end{cases}
\]
and
\[
\forall i\neq j,\quad\overline{\boldsymbol{Y}}_{ij}^{\mathrm{true},1}=\begin{cases}
-\boldsymbol{X}_{ij}^{\mathrm{in}}+\frac{\left(1-p_{\mathrm{true}}\right)p_{\mathrm{obs}}}{m},\quad & \text{if }\boldsymbol{X}_{ij}^{\mathrm{in}}\text{ is observed and corrupted},\\
\frac{\left(1-p_{\mathrm{true}}\right)p_{\mathrm{obs}}}{m}, & \text{else}.
\end{cases}
\]
That said, $\overline{\boldsymbol{Y}}_{ij}^{\mathrm{true},1}$ consists
of all non-corrupted components, while $\tilde{\boldsymbol{Y}}_{\mathrm{}}^{\mathrm{true},1}$
consists of all ``debiased'' random outliers.

By Lemma \ref{lemma:SpectralGapTight}, one can verify that for all
unit vector $\boldsymbol{v}$ such that $\boldsymbol{v}\boldsymbol{v}^{\top}\in T_{\mathrm{gt}}^{\perp}$,
\begin{align}
\left\langle \boldsymbol{v}\boldsymbol{v}^{\top},\overline{\boldsymbol{Y}}_{\mathrm{}}^{\mathrm{true},1}\right\rangle  & \geq\min_{1\leq s\leq m}\left(n_{s}p_{\mathrm{true}}p_{\mathrm{obs}}-c_{4}\sqrt{n_{s}p_{\mathrm{obs}}}\log\left(mn\right)\right)\nonumber \\
 & \geq\frac{1}{2}np_{\mathrm{set}}p_{\mathrm{true}}p_{\mathrm{obs}}-c_{5}\sqrt{np_{\mathrm{set}}p_{\mathrm{obs}}}\log\left(mn\right)\label{eq:BoundYtrue}
\end{align}
for some absolute constant $c_{5}>0$, where the second inequality
follows from the concentration result stated in Lemma \ref{lemma:Concentration}. 

In addition, each entry of $\tilde{\boldsymbol{Y}}_{ij}^{\mathrm{true},1}$
($i\neq j$) lying in the support $\Omega_{\mathrm{gt}}$ has mean
zero and variance $\frac{\left(1-p_{\mathrm{true}}\right)p_{\mathrm{obs}}}{m}\left(1-\frac{\left(1-p_{\mathrm{true}}\right)p_{\mathrm{obs}}}{m}\right)$.
Lemma \ref{lemma:MomentMethod} then suggests that the norm of each
non-zero block of $\tilde{\boldsymbol{Y}}_{\mathrm{}}^{\mathrm{true},1}$
(the ones with size $n_{i}$) is bounded above by $O\left(\sqrt{p_{\mathrm{obs}}n_{i}}\log\left(nm\right)\right)$.
As a result, 
\[
\left\Vert \tilde{\boldsymbol{Y}}_{\mathrm{}}^{\mathrm{true},1}\right\Vert \leq c_{15}\max_{1\leq s\leq m}\sqrt{p_{\mathrm{obs}}n_{s}}\log\left(nm\right)<\tilde{c}_{15}\sqrt{np_{\mathrm{set}}p_{\mathrm{obs}}}\log\left(nm\right).
\]
This taken collectively with (\ref{eq:BoundYtrue}) yields that
\begin{align}
\left\langle \boldsymbol{v}\boldsymbol{v}^{\top},\boldsymbol{Y}_{\mathrm{}}^{\mathrm{true},1}\right\rangle  & \geq\frac{1}{2}np_{\mathrm{set}}p_{\mathrm{true}}p_{\mathrm{obs}}-\left(c_{5}+\tilde{c}_{15}\right)\sqrt{np_{\mathrm{set}}p_{\mathrm{obs}}}\log\left(mn\right).
\end{align}

On the other hand, we know from the construction procedure and Lemma
\ref{lemma:MeanApprox} that
\begin{align*}
\left\Vert \boldsymbol{R}^{\mathrm{m}}\right\Vert _{\infty} & \leq\sqrt{\frac{c_{10}p_{\mathrm{obs}}\log\left(mn\right)}{np_{\mathrm{set}}^{3}}}+\lambda\left\Vert \boldsymbol{d}\cdot\boldsymbol{d}^{\top}-{\bf 1}\cdot{\bf 1}^{\top}\right\Vert _{\infty}\\
 & \leq\tilde{c}_{10}\left(\sqrt{\frac{p_{\mathrm{obs}}\log\left(mn\right)}{np_{\mathrm{set}}^{3}}}+\frac{\sqrt{p_{\mathrm{obs}}\log\left(mn\right)}}{p_{\mathrm{set}}}\sqrt{\frac{\log\left(mn\right)}{np_{\mathrm{set}}}}\right)\\
 & \leq2\tilde{c}_{10}\sqrt{\frac{p_{\mathrm{obs}}}{np_{\mathrm{set}}^{3}}}\log\left(mn\right)
\end{align*}
for some constants $c_{10},\tilde{c}_{10}>0$. Since $\boldsymbol{R}^{\mathrm{m}}\in\Omega_{\mathrm{gt}}$,
we can also rearrange $\boldsymbol{R}^{\mathrm{m}}$ into $m$ diagonal
blocks each of size $n_{i}$ ($1\leq i\leq m$). Hence, a crude upper
bound yields
\begin{align*}
\left\Vert \boldsymbol{Y}^{\mathrm{true},2}\right\Vert  & \leq\left\Vert \boldsymbol{Y}^{\mathrm{true},2}\right\Vert _{1}\leq\left(\max_{1\leq i\leq m}n_{i}\right)\left(2\tilde{c}_{10}\sqrt{\frac{p_{\mathrm{obs}}}{np_{\mathrm{set}}^{3}}}\log\left(mn\right)\right)\leq c_{11}np_{\mathrm{set}}\sqrt{\frac{np_{\mathrm{obs}}}{p_{\mathrm{set}}^{3}}}\log\left(mn\right)\\
 & =c_{11}n\sqrt{\frac{np_{\mathrm{obs}}}{p_{\mathrm{set}}}}\log\left(mn\right)
\end{align*}
for some universal constant $c_{11}>0$.

\subsection{Proof of Lemma \ref{lemma:PgtH_H}\label{sec:Proof_lemma:PgtH_H}}

Define an augmented matrix $\boldsymbol{H}^{\mathrm{sup}}$ such that
\begin{equation}
\boldsymbol{H}_{ij}^{\mathrm{sup}}=\boldsymbol{\Pi}_{i}^{\top}\boldsymbol{H}_{ij}\boldsymbol{\Pi}_{j}.
\end{equation}
Recall that $n_{i}$ denotes the number of sets containing element
$i$, and that
\[
\boldsymbol{\Sigma}:=\left[\begin{array}{cccc}
\frac{n}{n_{1}}\\
 & \frac{n}{n_{2}}\\
 &  & \ddots\\
 &  &  & \frac{n}{n_{m}}
\end{array}\right].
\]

The assumption that $\mathcal{P}_{T_{\mathrm{gt}}^{\perp}}\left(\boldsymbol{H}\right)={\bf 0}$
can be translated into 
\[
\left(\boldsymbol{I}-\frac{1}{n}\left({\bf 1}_{n}\otimes\boldsymbol{I}_{m}\right)\boldsymbol{\Sigma}\left({\bf 1}_{n}\otimes\boldsymbol{I}_{m}\right)\right)\boldsymbol{H}^{\mathrm{sup}}\left(\boldsymbol{I}-\frac{1}{n}\left({\bf 1}_{n}\otimes\boldsymbol{I}_{m}\right)\boldsymbol{\Sigma}\left({\bf 1}_{n}\otimes\boldsymbol{I}_{m}\right)\right)={\bf 0}.
\]
We can easily compute that
\[
\boldsymbol{H}_{ii}^{\mathrm{sup}}-\boldsymbol{\Sigma}\overline{\boldsymbol{H}}_{\cdot i}^{\mathrm{sup}}-\overline{\boldsymbol{H}}_{i\cdot}^{\mathrm{sup}}\boldsymbol{\Sigma}+\boldsymbol{\Sigma}\overline{\boldsymbol{H}}_{\cdot\cdot}^{\mathrm{sup}}\boldsymbol{\Sigma}={\bf 0},\quad1\leq i\leq n,
\]
where
\[
\begin{cases}
\overline{\boldsymbol{H}}_{\cdot i}^{\mathrm{sup}} & :=\frac{1}{n}\sum_{j=1}^{n}\boldsymbol{H}_{ji}^{\mathrm{sup}},\\
\overline{\boldsymbol{H}}_{i\cdot}^{\mathrm{sup}} & :=\frac{1}{n}\sum_{j=1}^{n}\boldsymbol{H}_{ij}^{\mathrm{sup}},\\
\overline{\boldsymbol{H}}_{\cdot\cdot}^{\mathrm{sup}} & :=\frac{1}{n^{2}}\sum_{i=1}^{n}\sum_{j=1}^{n}\boldsymbol{H}_{ij}^{\mathrm{sup}}.
\end{cases}
\]
This combined with the identity $\boldsymbol{H}_{ii}={\bf 0}$ (and
hence $\boldsymbol{H}_{ii}^{\mathrm{sup}}={\bf 0}$) yields
\[
\boldsymbol{\Sigma}\overline{\boldsymbol{H}}_{\cdot\cdot}^{\mathrm{sup}}\boldsymbol{\Sigma}=\boldsymbol{\Sigma}\overline{\boldsymbol{H}}_{\cdot i}^{\mathrm{sup}}+\overline{\boldsymbol{H}}_{i\cdot}^{\mathrm{sup}}\boldsymbol{\Sigma},\quad1\leq i\leq n.
\]
Summing over all $i$ leads to
\[
\boldsymbol{\Sigma}\overline{\boldsymbol{H}}_{\cdot\cdot}^{\mathrm{sup}}\boldsymbol{\Sigma}=\boldsymbol{\Sigma}\left(\frac{1}{n}\sum_{i=1}^{n}\overline{\boldsymbol{H}}_{\cdot i}^{\mathrm{sup}}\right)+\left(\frac{1}{n}\sum_{i=1}^{n}\overline{\boldsymbol{H}}_{i\cdot}^{\mathrm{sup}}\right)\boldsymbol{\Sigma}=\boldsymbol{\Sigma}\overline{\boldsymbol{H}}_{\cdot\cdot}^{\mathrm{sup}}+\overline{\boldsymbol{H}}_{\cdot\cdot}^{\mathrm{sup}}\boldsymbol{\Sigma}.
\]
Expanding it yields
\[
\frac{n^{2}}{n_{i}n_{j}}\left(\overline{\boldsymbol{H}}_{\cdot\cdot}^{\mathrm{sup}}\right)_{i,j}=\left(\frac{n}{n_{i}}+\frac{n}{n_{j}}\right)\left(\overline{\boldsymbol{H}}_{\cdot\cdot}^{\mathrm{sup}}\right)_{i,j},\quad1\leq i,j\leq m.
\]
From our assumption that $\frac{n^{2}}{n_{i}n_{j}}\neq\frac{n}{n_{i}}+\frac{n}{n_{j}}$,
we can derive 
\begin{equation}
\overline{\boldsymbol{H}}_{\cdot\cdot}^{\mathrm{sup}}={\bf 0}.\label{eq:H_avg_zero}
\end{equation}

Due to the feasibility constraint, all diagonal entries of $\boldsymbol{H}_{ij}^{\mathrm{sup}}$
are non-positive, and all off-diagonal entries of $\boldsymbol{H}_{ij}^{\mathrm{sup}}$
are non-negative. These conditions together with (\ref{eq:H_avg_zero})
establish that $\boldsymbol{H}={\bf 0}$.

\bibliographystyle{IEEEtran} \bibliographystyle{IEEEtran} \bibliographystyle{IEEEtran}
\bibliography{matching}

\bibliographystyle{IEEEtran} \bibliographystyle{IEEEtran} \bibliographystyle{IEEEtran}
\bibliographystyle{IEEEtran} 
\end{document}